\newcommand{\ip}[2]{\left\langle #1, #2 \right\rangle}
\newcommand{\norm}[1]{\left\lVert{#1}\right\rVert}
\newcommand{\abs}[1]{\left\lvert{#1}\right\rvert}
\newcommand{\rank}{\operatorname{rank}}
\newcommand{\trace}{\operatorname{trace}}
\newcommand{\R}{\mathbb{R}}
\newtheorem{thm}{Theorem}[section]
\newtheorem{lem}{Lemma}[section]
\newtheorem*{lem*}{Lemma}
\newtheorem{cor}{Corollary}[section]
\newtheorem{definition}{Definition}[section]
\mathchardef\hyphen="2D
\newcommand{\calA}{\mathcal{A}}
\newcommand{\calN}{\mathcal{N}}
\newcommand{\removed}[1]{}
\newcommand{\eps}{\epsilon}
\renewcommand{\vec}[1]{\bm{#1}}
\newcommand{\mat}[1]{\bm{#1}}
\newcommand\numberthis{\addtocounter{equation}{1}\tag{\theequation}}
\newcommand{\Uo}{\mat{U^*}}
\newcommand{\Xo}{\mat{X^*}}
\newcommand{\Uor}{\mat{U_r^*}}
\newcommand{\Xor}{\mat{X_r^*}}
\newcommand{\A}{\mat{A}}
\newcommand{\X}{\mat{X}}
\newcommand{\Y}{\mat{Y}}
\newcommand{\y}{\vec y}
\newcommand{\U}{\mat{U}}
\newcommand{\V}{\mat{V}}
\newcommand{\ZZ}{\mat{Z}}
\newcommand{\B}{\mat{B}}
\newcommand{\remove}[1]{}
\title{Global Optimality of Local Search \\ for Low Rank Matrix Recovery}
\author{Srinadh Bhojanapalli\thanks{srinadh@ttic.edu}, Behnam Neyshabur\thanks{bneyshabur@ttic.edu}, Nathan Srebro\thanks{nati@ttic.edu}}
\affil{Toyota Technological Institute at Chicago}
\date{}
\begin{document}

\maketitle

\begin{abstract}
  We show that there are no spurious local minima in the non-convex
  factorized parametrization of low-rank matrix recovery from
  incoherent linear measurements.  With noisy measurements we show all
  local minima are very close to a global optimum.  Together with
  a curvature bound at saddle points, this yields a polynomial time
  global convergence guarantee for stochastic gradient descent {\em
    from random initialization}.
\end{abstract}

\section{Introduction}\label{sec:intro}
Low rank matrix recovery problem is heavily studied and has numerous applications in collaborative filtering, quantum state tomography, clustering, community detection, metric learning and multi-task learning \cite{rennie2005fast, gross2010quantum, flammia2012quantum, yu2014large}.

We consider the ``matrix sensing'' problem of recovering a low-rank
(or approximately low rank) p.s.d.~matrix\footnote{We study the case
  where $\Xo$ is PSD.  We believe the techniques developed here can be
  used to extend results to the general case.} $\Xo\in\R^{n\times n}$,
given a linear measurement operator $\calA:\R^{n \times n} \to \R^m$
and noisy measurements $\vec{y} =\calA(\Xo) +\vec{w}$, where $\vec{w}$ is an
i.i.d.~noise vector.  An estimator for $\Xo$ is given by the
rank-constrained, non-convex problem
\begin{equation}\label{eq:prob1} \underset{\X:\rank(\X) \leq r
  }{\text{minimize}} ~~  \|\calA(\X) - \vec{y}\|^2. \end{equation} 
This matrix sensing problem has received considerable attention
recently \cite{zheng2015convergent, zhao2015nonconvex, tu2015low}.  This and other rank-constrained problems are common in machine learning and related fields, and have been used for applications discussed above. A typical
theoretical approach to low-rank problems, including \eqref{eq:prob1}
is to relax the low-rank constraint to a convex constraint, such as the trace-norm of $\X$.  Indeed, for matrix sensing,
\citet{recht2010guaranteed} showed that if the measurements are
noiseless and the measurement operator $\calA$ satisfies a restricted
isometry property, then a low-rank $\Xo$ can be recovered as the
unique solution to a convex relaxation of \eqref{eq:prob1}.
Subsequent work established similar guarantees also for the noisy
and approximate case \cite{jain2010guaranteed, candes2011tight}.

However, convex relaxations to the rank are not the common approach
employed in practice.  In this and other low-rank problems, the method
of choice is typically unconstrained local optimization (via
e.g.~gradient descent, SGD or alternating minimization) on the
factorized parametrization
\begin{equation}\label{eq:prob2}
  \underset{\U \in \R^{n \times r}}{\text{minimize}}~ f(\U)= \|\calA(\U\U^\top)-\vec{y}\| ^2,
\end{equation}
where the rank constraint is enforced by limiting the dimensionality
of $\U$.  Problem \eqref{eq:prob2} is a non-convex optimization
problem that could have many bad local minima (as we show in Section
\ref{sec:necessity}), as well as saddle points.  Nevertheless, local optimization
seems to work very well in practice.  Working on \eqref{eq:prob2} is
much cheaper computationally and allows scaling to large-sized
problems---the number of optimization variables is only $O(nr)$ rather
than $O(n^2)$, and the updates are usually very cheap, especially
compared to typical methods for solving the SDP resulting from the
convex relaxation. There is therefore a significant disconnect between
the theoretically studied and analyzed methods (based on convex
relaxations) and the methods actually used in practice.

Recent attempts at bridging this gap showed that, some form of global
``initialization'', typically relying on singular value decomposition,
yields a solution that is already close enough to $\Xo$; that local
optimization from this initializer gets to the global optima (or to a
good enough solution). \citet{jain2013low, keshavan2012efficient}
proved convergence for alternating minimization algorithm provided the
starting point is close to the optimum, while
\citet{zheng2015convergent, zhao2015nonconvex, tu2015low,
  chen2015fast,bhojanapalli2015dropping} considered gradient
descent methods on the factor space and proved local convergence.  But
all these studies rely on global initialization followed by local
convergence, and do not tackle the question of the existence of
spurious local minima or deal with optimization starting from random
initialization.  There is therefore still a disconnect between this
theory and the empirical practice of starting from random
initialization and relying {\em only} on the local search to find the
global optimum.

In this paper we show that, under a suitable incoherence condition on
the measurement operator $\calA$ (defined in Section 2), with
noiseless measurements and with $\rank(\Xo)\leq r$, the problem
\eqref{eq:prob2} has no spurious local minima (i.e.~all local minima
are global and satisfy $\Xo=\U\U^\top$).  Furthermore, under the same
conditions, all saddle points have a direction with significant
negative curvature, and so using a recent result of
\citet{ge2015escaping} we can establish that stochastic gradient
descent from random initialization converges to $\Xo$ in
polynomial number of iterations.  We extend the results also to the noisy
and approximately-low-rank settings, where we can guarantee
that every local minima is close to a global minimum.  The incoherence
condition we require is weaker than conditions used to establish
recovery through local search, and so our results also ensures
recovery in polynomial time under milder conditions than what was
previously known.  In particular, with i.i.d.~Gaussian measurements, we ensure
no spurious local minima and recovery through local search with the
optimal number $O(nr)$ of measurements.\vspace{-0.1in}

\paragraph{Related Work} Our work is heavily inspired by
\citet{bandeira2016low}, who recently showed similar behavior for the
problem of community detection---this corresponds to a specific
rank-$1$ problem with a linear objective, elliptope constraints and a binary solution.  Here
we take their ideas, extend them and apply them to matrix sensing with
general rank-$r$ matrices.  In the past several months, similar type
of results were also obtained for other non-convex problems (where the
source of non-convexity is {\em not} a rank constraint), specifically
complete dictionary learning~\cite{sun2015complete} and phase
recovery~\cite{sun2016geometric}.  A related recent result of a
somewhat different nature pertains to rank unconstrained linear
optimization on the elliptope, showing that local minima of the
rank-constrained problem approximate well the global optimum of the
rank {\rm unconstrained} convex problem, even though they might {\em
  not} be the global minima (in fact, the approximation guarantee for
the actual global optimum is better) \cite{montanari2016grothendieck}.

Another non-convex low-rank problem long known to not possess spurious
local minima is the PCA problem, which can also be phrased as matrix
approximation with full observations, namely $\min_{\rank(\X)\leq
  r}\norm{A-X}_F$ (e.g. \cite{srebro2003weighted}). Indeed, local search methods such as the power-method are routinely used for this problem.  Recently local
optimization methods for the PCA problem working more directly on the
optimized formulation have also been studied, including SGD
\cite{sa2015global} and Grassmannian optimization \cite{zhang2015global}.  These
results are somewhat orthogonal to ours, as they study a setting in
which it is well known there are never any spurious local minima, and
the challenge is obtaining satisfying convergence rates.

The seminal work of \citet{burer2003nonlinear} proposed low-rank factorized
optimization for SDPs, and showed that for extremely high rank
$r>\sqrt{m}$ (number of constraints), an Augmented Lagrangian method converges asymptotically to the optimum.  It was also shown that (under mild
conditions) any rank deficient local minima is a global minima
\cite{burer2005local, journee2010low}, providing a
post-hoc verifiable sufficient condition for global optimality.
However, this does not establish any a-priori condition, based on
problem structure, implying the lack of spurious local minima.

While preparing this manuscript, we also became aware of parallel work
\cite{ge2016matrix} studying the same question for the related but different problem of matrix completion. For this problem they obtain a similar guarantee, though with suboptimal dependence on the incoherence parameters and so suboptimal sample
complexity, and requiring adding a specific non-standard regularizer to
the objective---this is not needed for our matrix sensing results.

We believe our work, together with the parallel work of \cite{ge2016matrix}, are
the first to establish the lack of spurious local minima and the
global convergence of local search from random initialization for a
non-trivial rank-constrained problem (beyond PCA with
full observations) with rank $r>1$.

\noindent {\bf Notation.}  For matrices $\X, \Y \in \R^{n \times n}$,
their inner product is $\ip{\X}{\Y} = \trace\left(\X^\top \Y \right)$.
We use $\|\X\|_F$, $\| \X\|_2$ and $\| \X \|_{*}$ for the Frobenius,
spectral and nuclear norms of a matrix respectively. Given a matrix
$\X$, we use $\sigma_i\left(\X\right)$ to denote singular values of
$\X$ in decreasing order. $\X_r = \arg\min_{\rank(\Y)\leq r}\norm{\X-\Y}_F$ denotes the rank-$r$ approximation of $\X$,
as obtained via its truncated singular value decomposition. We use
plain capitals $R$ and $Q$ to denote orthonormal
matrices. 

\section{Formulation and Assumptions}\label{sec:prelim}
We write the linear measurement operator $\calA:\R^{n \times n} \to
\R^m$ as $\calA(\X)_i = \ip{\A_i}{\X}$ where $\A_i\in\R^{n\times n}$,
yielding $y_i =\ip{\A_i}{\Xo } +w_i, i=1,\cdots,m$.  We assume $w_i
\sim \mathcal{N}(0, \sigma_w^2)$ is i.i.d Gaussian noise.  We are
generally interested in the high dimensional regime where the number
of measurements $m$ is usually much smaller than the dimension $n^2$.

Even if we know that $\rank(\Xo)\leq r$, having many measurements
might not be sufficient for recovery if they are not ``spread out''
enough.  E.g., if all measurements only involve the first $n/2$ rows
and columns, we would never have any information on the bottom-right
block. A sufficient condition for identifiability of a low-rank $\Xo$
from linear measurements by \citet{recht2010guaranteed} is based on restricted isometry property defined below.
\begin{definition}[Restricted Isometry Property]
Measurement operator $\calA: \R^{n \times n} \to \R^m$ (with rows $\A_i$, $i=1,\cdots, m$) satisfies $(r, \delta_r)$ RIP if for any $n \times n$ matrix $\X$ with rank $\leq r$, \begin{equation}\label{eq:rip}(1-\delta_r) \|\X\|_F^2 \leq \frac{1}{m}\sum_{i=1}^m \ip{\A_i}{\X}^2 \leq (1+\delta_r)\|\X\|_F^2.\end{equation}
\end{definition}

In particular, $\Xo$ of rank $r$ is identifiable if $\delta_{2r}<1$
\citep[see][Theorem 3.2]{recht2010guaranteed}.  One situation in which RIP is obtained is for random measurement operators.  For example, matrices with
i.i.d.~$\mathcal{N}(0, 1)$ entries satisfy $(r,\delta_r)$-RIP when $m
=O(\frac{nr}{\delta^2})$~\citep[see][Theorem~2.3]{candes2011tight}.
This implies identifiability based on i.i.d.~Gaussian measurement
with $m=O(nr)$ measurements (coincidentally, the number of degrees of
freedom in $\Xo$, optimal up to a constant factor).


\section{Main Results}\label{sec:main}

We are now ready to present our main result about local minima for the
matrix sensing problem~\eqref{eq:prob2}.  We first present the results
for noisy sensing of exact low rank matrices, and then generalize the
results also to approximately low rank matrices.  

Now we will present our result characterizing local minima of $f(\U)$, for low-rank $\Xo$. Recall that measurements are $\vec{y} =\calA(\Xo) + \vec{w}$, where entries of $\vec{w}$ are i.i.d. Gaussian - $w_i \sim \mathcal{N}(0,\sigma_w^2)$.

\begin{thm}\label{thm:exact_noise}
  Consider the optimization problem \eqref{eq:prob2} where
  $\vec{y}=\calA(\Xo)+\vec{w}$, $\vec{w}$ is i.i.d.~$\calN(0,\sigma_w^2),$
  $\calA$ satisfies $(2r,\delta_{2r})$-RIP with $\delta_{2r}<\frac{1}{10}$, and
  $\rank(\Xo)\leq r$. Then, with probability $\geq 1 -\frac{10}{n^2}$
  (over the noise), for any local minimum $\U$ of $f(\U)$: 
  $$\|\U\U^\top-\Xo \|_F \leq 20 \sqrt{\frac{\log(n)}{m}}\sigma_w.$$
\end{thm}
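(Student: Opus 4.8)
The plan is to combine the first- and second-order optimality of the local minimum $\U$. Write $E=\U\U^\top-\Xo$ for the error to be bounded and factor $\Xo=\Uo\Uo^\top$; take $R$ orthogonal minimizing $\norm{\U-\Uo R}_F$ and set $\W=\Uo R$ (so $\W\W^\top=\Xo$ and $\W^\top\U$ is symmetric positive semidefinite) and $\Delta=\U-\W$. With $M=\calA^*\!\big(\calA(\U\U^\top)-\vec{y}\big)=\calA^*\calA(E)-\calA^*\vec{w}$, stationarity $\nabla f(\U)=4M\U=0$ gives $M\U=0$, and $\nabla^2 f(\U)\succeq0$ applied to $\Delta$ gives $0\le\tfrac12\norm{\calA(\U\Delta^\top+\Delta\U^\top)}^2+\ip{M}{\Delta\Delta^\top}$.

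First I would make the curvature inequality scalar. Using $\U\Delta^\top+\Delta\U^\top=E+\Delta\Delta^\top$ and, via $M\U=0$ and symmetry of $M$, the identity $\ip{M}{\Delta\Delta^\top}=\ip{M}{\Xo}=-\norm{\calA(E)}^2+\ip{\vec{w}}{\calA(E)}$, it becomes
\[
0\;\le\;\norm{\calA(E+\Delta\Delta^\top)}^2-2\norm{\calA(E)}^2+2\ip{\vec{w}}{\calA(E)}.
\]
The same quantity also equals $\ip{\calA(E)}{\calA(\Xo)}-\ip{\vec{w}}{\calA(\Xo)}$, trading the negative term $-2\norm{\calA(E)}^2$ for a noise term against the \emph{fixed} matrix $\Xo$; both views have to be used, since the first-order condition links them.

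Next, apply RIP with careful rank bookkeeping. Every matrix fed to $\calA$ has rank $\le2r$ ($E+\Delta\Delta^\top=\U\Delta^\top+\Delta\U^\top$, and $E$ itself), and any cross term $\ip{\calA(\cdot)}{\calA(\Xo)}$ is polarized through $E+\Xo=\U\U^\top$ (rank $\le r$) and $E-\Xo=\U\U^\top-2\Xo$ (rank $\le2r$), so the available $(2r,\delta_{2r})$-RIP — never the unavailable $(3r,\cdot)$-RIP — suffices throughout. The $\Delta$-terms are then closed by a geometric lemma from the Procrustes choice of $\W$: $E=\W\Delta^\top+\Delta\W^\top+\Delta\Delta^\top$ with $\W^\top\Delta$ symmetric, which controls $\norm{\Delta\Delta^\top}_F$ and $\ip{E}{\Delta\Delta^\top}$ by $\norm{E}_F$ and $\norm{E}_F^2$. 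A single test direction is, however, \emph{not} enough on its own — already for $\Xo=0$ it carries no negative curvature — so the regime where $\U$ is far from $\W$ must be handled by feeding back the gradient condition $M\U=0$ (which forces the column space of $\U$ toward that of $\Xo$, exactly as in the noiseless part of the argument). Here $\delta_{2r}<\tfrac1{10}$ is precisely what makes the coefficient of $\norm{E}_F^2$ come out with the right sign, yielding $c\,m\,\norm{E}_F^2\le(\text{noise term})$ for an absolute constant $c>0$.

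Finally, bound the noise. Viewed via the fixed matrix $\Xo$, $\ip{\vec{w}}{\calA(\Xo)}\sim\calN\!\big(0,\sigma_w^2\norm{\calA(\Xo)}^2\big)$ with $\norm{\calA(\Xo)}^2\le(1+\delta_{2r})m\norm{\Xo}_F^2$ by RIP, so a Gaussian tail bound controls the relevant noise contribution with probability $\ge1-10/n^2$; carrying this through the bookkeeping of the previous step — so that the $\norm{\Xo}_F$ dependence cancels against the signal terms and only a $\log n\cdot\sigma_w^2$ remainder survives against $m\norm{E}_F^2$ — gives the stated $\norm{E}_F\le20\sqrt{\log n/m}\,\sigma_w$. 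I expect the crux to be the geometric lemma together with the far-from-$\W$ case, where the lone test direction gives nothing and the gradient condition must be reinjected; the tight constant accounting that makes $\delta_{2r}<\tfrac1{10}$ suffice, and the care needed to extract only a logarithmic (rather than dimension-dependent) noise term, are the other delicate points.
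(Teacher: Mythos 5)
Your skeleton is genuinely different from the paper's: you test the Hessian on the single aggregated direction $\Delta=\U-\Uo R$ and use the identities $\U\Delta^\top+\Delta\U^\top=E+\Delta\Delta^\top$ and $\ip{M}{\Delta\Delta^\top}=\ip{M}{\Xo}$, whereas the paper uses the $r$ columnwise directions $\Delta_j=\Delta e_je_j^\top$, the first-order projection bound (Lemma 4.2 / Lemma B.1) and the combine lemma (Lemma 4.3). The single-direction route can indeed be made to work, and more simply than you suggest: at a critical point $M\U=0$ your own identities give curvature value $\norm{\calA(\Delta\Delta^\top)}^2-3\norm{\calA(E)}^2+4\ip{\vec{w}}{\calA(E)}+2\ip{\vec{w}}{\calA(\Delta\Delta^\top)}$, and together with the Procrustes fact $\norm{\Delta\Delta^\top}_F^2\le 2\norm{E}_F^2$ and $(2r,\delta)$-RIP this is at most $(5\delta-1)\,m\norm{E}_F^2$ plus noise. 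Your claimed obstruction is a misdiagnosis: for $\Xo=0$ one has $\Delta=\U$ and the form is $-2\norm{\calA(E)}^2$, i.e.\ the lone direction \emph{does} carry negative curvature, so no ``far-from-$\Uo R$'' regime or re-injection of the gradient beyond $M\U=0$ is needed. Conversely, the one inequality that actually carries the argument --- $\norm{\Delta\Delta^\top}_F^2\le 2\norm{E}_F^2$ under the Procrustes alignment, the analogue of the paper's Lemma 4.3 (and of Lemma E.2) --- is precisely what you assert without proof.

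The genuine gap is the noise bookkeeping, which is the heart of this noisy theorem. Routing all randomness through the fixed matrix $\Xo$ cannot deliver the stated bound: in that view the curvature inequality reads $0\le\norm{\calA(E+\Delta\Delta^\top)}^2+2\ip{\calA(E)}{\calA(\Xo)}-2\ip{\vec{w}}{\calA(\Xo)}$, and while $\ip{\vec{w}}{\calA(\Xo)}$ is an honest fixed-direction Gaussian of size $\sqrt{m\log n}\,\sigma_w\norm{\Xo}_F$, the companion term $\ip{\calA(E)}{\calA(\Xo)}$ is not sign-controlled; pinning it down forces you back to the first-order identity $\ip{\calA(E)-\vec{w}}{\calA(\U\U^\top)}=0$, which reintroduces noise against the $\vec{w}$-dependent matrix $\U\U^\top$ --- exactly what the detour was meant to avoid --- and the leftover $\norm{\Xo}_F$-sized terms have nothing to cancel against, since every remaining signal term scales with $\norm{E}_F$, not $\norm{\Xo}_F$ (similarly, polarizing cross terms through $E\pm\Xo$ produces additive RIP errors of order $\delta\norm{\Xo}_F^2$ rather than $\delta\norm{E}_F\norm{\Xo}_F$). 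The paper never attempts such a cancellation: it bounds the noise terms $\sum_i w_i\ip{\A_i}{\cdot}$ directly against the rank-$\le 2r$ matrices built from the local minimum via Cauchy--Schwarz and the Gaussian tail bound (Lemma E.3), once inside the first-order lemma (Lemma B.1) and once inside the second-order corollary (Corollary B.1), and then absorbs them with $2ab\le \epsilon a^2+b^2/\epsilon$ to get $(c(\delta))\,m\norm{E}_F^2\lesssim \log(n)\,\sigma_w^2$. Applying that same treatment to your $\ip{\vec{w}}{\calA(E)}$ and $\ip{\vec{w}}{\calA(\Delta\Delta^\top)}$ would complete your argument; as written, your final step promises a cancellation it cannot deliver.
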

In particular, in the noiseless case ($\sigma_w=0$) we have
$\U\U^\top=\Xo$ and so $f(\U)=0$ and every local minima is global.  In
the noiseless case, we can also relax the RIP requirement to
$\delta_{2r}<1/5$ (see Theorem \ref{thm:exact} in Section \ref{sec:proof_exact}).  In the
noisy case we cannot expect to ensure we always get to an exact global
minima, since the noise might cause tiny fluctuations very close to
the global minima possibly creating multiple very close local minima.
But we show that all local minima are indeed very close to some
factorization $\Uo{\Uo}^\top=\Xo$ of the true signal, and hence to a global
optimum, and this ``radius'' of local minima decreases as we have
more observations.

The proof of the Theorem for the noiseless case is presented in
Section \ref{sec:proof_exact}.  The proof for the general setting
follows along the same lines and can be found in the Appendix.


So far we have discussed how all local minima are global, or at least very
close to a global minimum.  Using a recent result by
\citet{ge2015escaping} on the convergence of SGD for non-convex
functions, we can further obtain a polynomial bound on the number of
SGD iterations required to reach the global minima.  The main
condition that needs to be established in order to ensure this, is that
all saddle points of \eqref{eq:prob2} satisfy the ``strict saddle
point condition'', i.e.~have a direction with significant negative
curvature:
\begin{thm}[Strict saddle]\label{thm:strict_saddle}
  Consider the optimization problem \eqref{eq:prob2} in the noiseless
  case, where $\vec{y}=\calA(\Xo)$, $\calA$ satisfies $(2r,\delta_{2r})$-RIP
  with $\delta_{2r}<\frac{1}{10}$, and $\rank(\Xo)\leq r$.  Let $\U$
  be a first order critical point of $f(\U)$ with $\U \U^\top \neq
  \Xo$.  Then the smallest eigenvalue of the Hessian satisfies
\begin{equation*}
\lambda_{\min}\left[\frac{1}{m}\nabla^2 (f(\U))\right] \leq  \frac{-4}{5} \sigma_r(\Xo).
\end{equation*}
\end{thm}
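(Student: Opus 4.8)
\emph{Proof plan.} The idea is to compute $\nabla f$ and $\nabla^{2}f$ explicitly, evaluate the Hessian quadratic form on one carefully aligned direction $\Delta$, and then use first-order stationarity \emph{twice} to telescope the form into an expression that the $(2r,\delta_{2r})$-RIP alone controls. Write $\calA^{*}\calA(\X)=\sum_{i=1}^{m}\ip{\A_i}{\X}\A_i$ and $E=\U\U^{\top}-\Xo$. Then $\nabla f(\U)=4\,\calA^{*}\calA(E)\,\U$, so a first-order critical point is exactly one with $\calA^{*}\calA(E)\,\U=0$, and a second-order expansion of $t\mapsto f(\U+t\Delta)$ gives, for any $\Delta\in\R^{n\times r}$,
\[
\tfrac12\,\nabla^{2}f(\U)[\Delta,\Delta]=\norm{\calA(\U\Delta^{\top}+\Delta\U^{\top})}^{2}+2\ip{\calA^{*}\calA(E)}{\Delta\Delta^{\top}}.
\]
For the test direction I would take $\Delta=\U-\Uo R$ where $R$ is the orthonormal matrix minimizing $\norm{\U-\Uo R}_F$; set $P=\Uo R$, so $PP^{\top}=\Xo$ and, by optimality of $R$, $P^{\top}\U$ is symmetric positive semidefinite. (We may assume $\rank(\Xo)=r$; otherwise $\sigma_r(\Xo)=0$ and the argument below still yields $\lambda_{\min}<0$.)

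The key simplification uses the identity $\U\Delta^{\top}+\Delta\U^{\top}=E+\Delta\Delta^{\top}$ (valid for this $\Delta$) and the stationarity relation $\calA^{*}\calA(E)\U=0$ in two places. First, $\ip{\calA^{*}\calA(E)}{\U\U^{\top}}=\trace\!\big(\U^{\top}\calA^{*}\calA(E)\U\big)=0$ and $\ip{\calA^{*}\calA(E)}{\U P^{\top}}=\ip{\calA^{*}\calA(E)}{P\U^{\top}}=0$, so $\ip{\calA^{*}\calA(E)}{\Delta\Delta^{\top}}=\ip{\calA^{*}\calA(E)}{PP^{\top}}=\ip{\calA(E)}{\calA(\Xo)}=\ip{\calA(E)}{\calA(\U\U^{\top})}-\norm{\calA(E)}^{2}=-\norm{\calA(E)}^{2}$. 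Second, substituting $\U\Delta^{\top}+\Delta\U^{\top}=E+\Delta\Delta^{\top}$ and using $\ip{\calA(E)}{\calA(\Delta\Delta^{\top})}=\ip{\calA^{*}\calA(E)}{\Delta\Delta^{\top}}=-\norm{\calA(E)}^{2}$ again gives $\norm{\calA(\U\Delta^{\top}+\Delta\U^{\top})}^{2}=\norm{\calA(\Delta\Delta^{\top})}^{2}-\norm{\calA(E)}^{2}$. Putting these together,
\[
\nabla^{2}f(\U)[\Delta,\Delta]=-6\,\norm{\calA(E)}^{2}+2\,\norm{\calA(\Delta\Delta^{\top})}^{2}.
\]
Note that this telescoping is exactly what replaces the cross term $\ip{\calA(E)}{\calA(\Delta\Delta^{\top})}$ (a rank-$3r$ object) by $-\norm{\calA(E)}^{2}$, so no $\delta_{3r}$-type condition appears: $E$ has rank $\le 2r$ and $\Delta\Delta^{\top}$ has rank $\le r$, hence $(2r,\delta_{2r})$-RIP gives $\norm{\calA(E)}^{2}\ge (1-\delta_{2r})m\,\norm{E}_F^{2}$ and $\norm{\calA(\Delta\Delta^{\top})}^{2}\le (1+\delta_{2r})m\,\norm{\Delta\Delta^{\top}}_F^{2}$.

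The remaining ingredient is the purely geometric bound $\norm{\Delta\Delta^{\top}}_F^{2}\le 2\,\norm{\U\U^{\top}-\Xo}_F^{2}$ for the optimally aligned $\Delta$. Expanding $E=\Delta\Delta^{\top}+\Delta P^{\top}+P\Delta^{\top}$ and collecting terms, one finds the exact identity $2\norm{E}_F^{2}-\norm{\Delta\Delta^{\top}}_F^{2}=\norm{\U^{\top}\U-P^{\top}P}_F^{2}+4\ip{P^{\top}\U}{(\U-P)^{\top}(\U-P)}$, and both summands are $\ge 0$, the second precisely because $P^{\top}\U\succeq0$ (optimal alignment) and $(\U-P)^{\top}(\U-P)\succeq0$. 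Feeding this into the RIP bound, $\nabla^{2}f(\U)[\Delta,\Delta]\le\big({-6}(1-\delta_{2r})+4(1+\delta_{2r})\big)m\,\norm{E}_F^{2}=(-2+10\delta_{2r})\,m\,\norm{\U\U^{\top}-\Xo}_F^{2}\le -m\,\norm{\U\U^{\top}-\Xo}_F^{2}$ once $\delta_{2r}<\tfrac1{10}$. Finally, the standard local error bound for the Procrustes-aligned difference, $\norm{\U\U^{\top}-\Xo}_F^{2}\ge 2(\sqrt{2}-1)\,\sigma_r(\Xo)\,\norm{\Delta}_F^{2}$, together with $2(\sqrt{2}-1)>\tfrac45$, yields $\nabla^{2}f(\U)[\Delta,\Delta]\le -\tfrac45\,m\,\sigma_r(\Xo)\,\norm{\Delta}_F^{2}$; since $\Delta\neq0$ (as $\U\U^{\top}\neq\Xo$), dividing by $m\norm{\Delta}_F^{2}$ and invoking the Rayleigh characterization of $\lambda_{\min}$ gives the claim. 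I expect the main work to be the two-fold use of stationarity that produces the clean form $-6\norm{\calA(E)}^{2}+2\norm{\calA(\Delta\Delta^{\top})}^{2}$, and the verification of the inequality $\norm{\Delta\Delta^{\top}}_F^{2}\le 2\norm{\U\U^{\top}-\Xo}_F^{2}$ — the one step that genuinely uses $P^{\top}\U\succeq0$ — while the local error bound is classical (cf.\ Tu et al.) and the RIP bookkeeping is routine.
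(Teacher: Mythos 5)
Your proposal is correct, and it takes a genuinely different route from the paper's. The paper tests the Hessian on the $r$ block directions $\Delta_j=\Delta e_je_j^\top$ (Lemma~\ref{lem:hessian}), and then must control the resulting positive term $\sum_j\|\U\Delta_j^\top\|_F^2$ through two auxiliary results: Lemma~\ref{lem:combine} (the bound by $\tfrac18\|\U\U^\top-\Xo\|_F^2+\tfrac{34}{8}\|(\U\U^\top-\Xo)QQ^\top\|_F^2$) and the first-order RIP bound of Lemma~\ref{lem:first} ($\|(\U\U^\top-\Xo)QQ^\top\|_F\le\delta\|\U\U^\top-\Xo\|_F$), before finishing with Lemma~\ref{lem:supp1} and the Rayleigh quotient. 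You instead test on the single Procrustes-aligned direction $\Delta=\U-\Uo R$, use the identity $\U\Delta^\top+\Delta\U^\top=E+\Delta\Delta^\top$ together with a double application of stationarity ($\calA^*\calA(E)\U=0$ kills $\ip{\calA^*\calA(E)}{\U\U^\top}$, $\ip{\calA^*\calA(E)}{\U P^\top}$, $\ip{\calA^*\calA(E)}{P\U^\top}$) to collapse the quadratic form to $2\|\calA(\Delta\Delta^\top)\|^2-6\|\calA(E)\|^2$, and then replace the paper's Lemmas~\ref{lem:first} and~\ref{lem:combine} by the single alignment inequality $\|\Delta\Delta^\top\|_F^2\le 2\|E\|_F^2$, which you prove via an exact identity whose nonnegativity uses precisely $P^\top\U\succeq 0$ (I checked the identity; it is correct, as are your cancellations and your Hessian constants). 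Both arguments then invoke the same Procrustes error bound $\|E\|_F^2\ge 2(\sqrt2-1)\sigma_r(\Xo)\|\Delta\|_F^2$ — your "classical (Tu et al.)" citation is exactly the paper's Lemma~\ref{lem:supp1}, so nothing is missing there — and both land on $\delta_{2r}<\tfrac1{10}$ with the constant $-2(\sqrt2-1)\le-\tfrac45$. What each buys: your route is shorter, needs only RIP applied to the rank-$\le 2r$ matrix $E$ and the rank-$\le r$ matrix $\Delta\Delta^\top$, and avoids the constant-chasing in Lemma~\ref{lem:combine}; the paper's per-column machinery, on the other hand, is reused essentially verbatim in the noisy and approximately-low-rank theorems, where the exact cancellations your argument relies on are perturbed by the noise terms, so their structure has more mileage beyond this theorem. (Like the paper, you implicitly take the $\A_i$ symmetric, which is harmless here, and your handling of the degenerate case $\sigma_r(\Xo)=0$ is fine.)
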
 

Now consider the stochastic gradient descent updates, 
\begin{equation}\label{eq:grad_updates}
\U^+ = \textrm{Proj}_b\left(\U -\eta \left( \sum_{i=1}^m ( \ip{\A_i}{\U\U^\top} -y_i ) \A_i \U + \psi \right)\right),
\end{equation} 
where $\psi$ is uniformly distributed on the unit sphere and
$\textrm{Proj}_b$ is a projection onto $\norm{\U}_F \leq b$.  Using
Theorem \ref{thm:strict_saddle} and the result of
\citet{ge2015escaping} we can establish:

\begin{thm}[Convergence from random initialization]
  \label{thm:grad_convergence}
  Consider the optimization problem \eqref{eq:prob2} under the same
  noiseless conditions as in Theorem \ref{thm:strict_saddle}.  Using
  $b\geq \norm{\U^*}_F$, for some global optimum $\U^*$ of $f(\U)$, for
  any $\epsilon,c>0$, after $T=poly\left(\frac{1}{\sigma_r(\Xo)},
    \sigma_1(\Xo), b, \frac{1}{\eps}, \log(1/c)\right)$ iterations of
  \eqref{eq:grad_updates} with an appropriate stepsize $\eta$,
  starting from a random point uniformly distributed on $\norm{\U}_F=b$,
  with probability at least $1-c$, we reach an iterate $\U_T$ satisfying
$$\|\U_T -\Uo \|_F \leq \eps.$$
\end{thm}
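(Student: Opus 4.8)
The plan is to invoke the strict-saddle optimization framework of \citet{ge2015escaping} as a black box, and to verify that problem \eqref{eq:prob2} (restricted to the compact domain $\norm{\U}_F \le b$) satisfies all of its hypotheses. Concretely, their result states that if a function is (a) smooth with bounded gradient and Hessian (and Lipschitz Hessian) on a bounded domain, (b) every first-order critical point is either a local minimum or a ``strict saddle'' with $\lambda_{\min}(\nabla^2 f) \le -\gamma$ for some $\gamma>0$, and (c) the set of local minima is well-separated/robust in a suitable sense, then noisy stochastic gradient descent with projection onto the domain, started from an arbitrary point, reaches a point $\epsilon$-close to a local minimum in a number of iterations polynomial in the dimension, $1/\gamma$, the smoothness constants, and $1/\epsilon$. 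So the proof is essentially a matter of checking each ingredient and then translating ``$\epsilon$-close to a local minimum'' into the desired conclusion using our structural theorems.

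First I would record that the domain $\{\norm{\U}_F \le b\}$ is compact and convex, so $\textrm{Proj}_b$ is well-defined and nonexpansive, and the stochastic gradient in \eqref{eq:grad_updates} is an unbiased estimate of $\nabla f(\U)$ plus a bounded-variance spherical perturbation. On this compact set, $f$, $\nabla f$ and $\nabla^2 f$ are all bounded, and $\nabla^2 f$ is Lipschitz, with all the relevant constants bounded by polynomials in $b$, $\sigma_1(\Xo)$ and the RIP constant (hence an absolute constant, since $\delta_{2r}<1/10$); I would just state these bounds, since computing them is routine polynomial bookkeeping on a quadratic-in-$\U\U^\top$ objective. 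Next, the strict-saddle property: by Theorem \ref{thm:strict_saddle}, any first-order critical point $\U$ with $\U\U^\top \ne \Xo$ has $\lambda_{\min}(\nabla^2 f(\U)) \le -\tfrac{4}{5}\, m\,\sigma_r(\Xo) < 0$, so we may take $\gamma = \tfrac{4}{5}\,m\,\sigma_r(\Xo)$; and by Theorem \ref{thm:exact_noise} in the noiseless case, every critical point that is \emph{not} a strict saddle satisfies $\U\U^\top = \Xo$, i.e.\ is a global minimum. Thus the only local minima are the global minima, which form the single orbit $\{\U : \U\U^\top=\Xo\} = \{\Uo R : R R^\top = I\}$ (choosing $b \ge \norm{\Uo}_F$ guarantees this orbit lies in the domain). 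Finally, \citet{ge2015escaping} outputs a point within Frobenius distance $\epsilon'$ of the set of local minima; since that set is $\{\Uo R : R\in O(r)\}$, I would conclude $\min_R \norm{\U_T - \Uo R}_F \le \epsilon'$, and then note that the theorem statement's ``$\norm{\U_T - \Uo}_F \le \epsilon$'' should be read up to this orthogonal ambiguity (or, if a canonical $\Uo$ is fixed, absorb the rotation), setting $\epsilon' = \epsilon$.

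The main obstacle is matching the precise technical preconditions of \citet{ge2015escaping} to our setting, rather than any new mathematical content. In particular their theorem is typically stated for an unconstrained strict-saddle function or with a specific kind of manifold constraint, whereas here we optimize over a Euclidean ball via projection; one must check that the projected-SGD variant they analyze (or a straightforward adaptation of it) applies, that the ``robust strict saddle'' / local-minimum-regularity conditions hold near the orbit $\{\Uo R\}$ — which requires a local lower bound on the Hessian transverse to the orbit, obtainable from the same RIP-based curvature estimates used to prove Theorems \ref{thm:exact_noise} and \ref{thm:strict_saddle} — and that all the polynomial factors they incur are indeed polynomial in the quantities $1/\sigma_r(\Xo)$, $\sigma_1(\Xo)$, $b$, $1/\epsilon$, $\log(1/c)$ listed in the statement. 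I would therefore devote the bulk of the proof to (i) citing the exact form of their result being used, (ii) verifying smoothness/boundedness constants, and (iii) verifying the local regularity of the minimizer orbit, and then assemble these with Theorems \ref{thm:exact_noise} and \ref{thm:strict_saddle} to obtain the claimed iteration bound $T = \mathrm{poly}\!\left(\tfrac{1}{\sigma_r(\Xo)}, \sigma_1(\Xo), b, \tfrac{1}{\eps}, \log(1/c)\right)$.
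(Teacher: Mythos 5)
Your proposal follows essentially the same route as the paper: invoke Theorem 6 of \citet{ge2015escaping} and verify boundedness/smoothness/Lipschitz-Hessian constants on the ball $\norm{\U}_F\le b$, the strict saddle property via Theorem \ref{thm:strict_saddle}, and local regularity (strong convexity in the relevant sense) near the minimizer orbit. The only cosmetic difference is that the paper discharges the local regularity condition by citing an existing local-convergence lemma (Lemma 6.1 of \citet{bhojanapalli2015dropping}) and extends the negative-curvature bound to a neighborhood of saddles via the Hessian Lipschitz constant, whereas you propose rederiving the transverse Hessian bound from RIP directly; this is the same argument in substance.
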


The above result guarantees convergence of noisy gradient descent to a
global optimum.  Alternatively, second order methods such as cubic
regularization~(\citet{nesterov2006cubic}) and trust
region~(\citet{cartis2012complexity}) that have guarantees based on the
strict saddle point property can also be used here.


{\bf RIP Requirement:} Our results require $(2r,1/10)$-RIP for the noisy case and $(2r,1/5)$-RIP for the noiseless case.  Requiring
$(2r,\delta_{2r})$-RIP with $\delta_{2r}<1$ is sufficient to ensure
uniqueness of the global optimum of \eqref{eq:prob1}, and thus
recovery in the noiseless setting \cite{recht2010guaranteed}, but all
known efficient recovery methods require stricter conditions.  The
best guarantees we are aware of require $(5r,1/10)$-RIP
\cite{recht2010guaranteed} or $(4r,0.414)$-RIP \cite{candes2011tight} using a convex
relaxation.  Our requirement is not directly comparable to the latter,
as we require RIP on a smaller set, but with a lower (stricter)
$\delta$.  Alternatively, $(6r,1/10)$-RIP is required for global
initialization followed by non-convex
optimization~\cite{tu2015low}---our requirement is strictly better.
In terms of requirements on $(2r,\delta_{2r})$-RIP for non-convex methods, the best we are aware of is requiring $\delta_{2r}<\Omega(1/r)$~\cite{jain2013low,
  zhao2015nonconvex, zheng2015convergent}--this is a much stronger
condition than ours, and it yields a suboptimal required number of
spherical Gaussian measurements of $\Omega(nr^3)$. So, compared
to prior work our requirement is very mild---it ensures efficient recovery even
in a regime not previously covered by any guarantee on efficient
recovery, and requires the optimal number of spherical Gaussian
measurements (up to a constant factor) of $O(nr)$.

\paragraph{Extension to Approximate Low Rank}\label{sec:high_rank}

We can also obtain similar results that deteriorate gracefully if
$\Xo$ is not exactly low rank, but is close to being low-rank (see
proof in the Appendix):
\begin{thm}\label{thm:inexact_sym}
  Consider the optimization problem \eqref{eq:prob2} where
  $\vec{y}=\calA(\Xo)$ and $\calA$ satisfies $(2r,\delta_{2r})$-RIP with
  $\delta_{2r}<\frac{1}{100}$, Then, for any local
  minima $\U$ of $f(\U)$:
  $$\|\U\U^\top-\Xo \|_F \leq 4 ( \|\Xo-\Xor\|_F +  \delta_{2r} \|\Xo
  -\Xor\|_{*}),$$ where $\Xor$ is the best rank $r$ approximation of
  $\Xo$.
\end{thm}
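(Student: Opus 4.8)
The plan is to follow the exact–noiseless argument of Theorem~\ref{thm:exact} while carrying the tail $E := \Xo - \Xor$ through the estimates. Write the eigendecomposition $\Xo = \sum_j \lambda_j v_j v_j^\top$ with $\lambda_1\ge\lambda_2\ge\cdots\ge0$, so $\Xor = \sum_{j\le r}\lambda_j v_j v_j^\top = \Uor(\Uor)^\top$ is PSD (for a PSD matrix the truncated SVD is the best rank-$r$ Frobenius approximation) and $E = \sum_{j>r}\lambda_j v_j v_j^\top$ is PSD with $E\Uor=0$, $\langle E,\Xor\rangle=0$, $\|E\|_*=\sum_{j>r}\lambda_j$ and $\|E\|_2=\lambda_{r+1}\le\lambda_r=\sigma_r(\Xor)$. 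It suffices to show $\|\U\U^\top-\Xor\|_F\lesssim\delta_{2r}\|E\|_*$, since then $\|\U\U^\top-\Xo\|_F\le\|\U\U^\top-\Xor\|_F+\|E\|_F$ gives the claim (the constant $4$ is afforded by the slack $\delta_{2r}<\tfrac1{100}$). Let $\U$ be a local minimum and $M := \tfrac1m\calA^*(\calA(\U\U^\top)-\vec{y}) = \tfrac1m\calA^*\calA\bigl(\U\U^\top-\Xor-E\bigr)$, so the first-order condition is $M\U=0$ and the second-order condition is $\tfrac1m\nabla^2 f(\U)[\Delta,\Delta] = \tfrac2m\|\calA(\U\Delta^\top+\Delta\U^\top)\|^2 + 4\langle M,\Delta\Delta^\top\rangle\ge0$ for all $\Delta$. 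I would plug in $\Delta = \U - \Uor R$ with $R$ the $r\times r$ orthogonal matrix minimising $\|\U-\Uor R\|_F$; then $M\U=0$ gives the identity $\U\Delta^\top+\Delta\U^\top = \Delta\Delta^\top + (\U\U^\top-\Xor)$ (left side of rank $\le2r$) and $\langle M,\Delta\Delta^\top\rangle = \tfrac1m\langle\calA(\U\U^\top-\Xor-E),\,\calA\Delta\Delta^\top\rangle$, so the whole second-order quantity is a fixed combination of empirical forms $\tfrac1m\langle\calA A,\calA B\rangle$ with $A,B\in\{\U\U^\top,\Xor,\Delta\Delta^\top,E\}$.

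The core step is to replace each $\tfrac1m\langle\calA A,\calA B\rangle$ by $\langle A,B\rangle$ with an $O(\delta_{2r})$ error and show the resulting deterministic expression is, at a critical point, at most a negative multiple of $\|\U\U^\top-\Xor\|_F^2$. Three ingredients keep the requirement at $(2r,\delta_{2r})$. First, whenever a rank-$2r$ factor such as $\U\U^\top-\Xor$ meets another matrix under $\calA$, split it as $\U\U^\top$ minus $\Xor$ so that every RIP invocation pairs two matrices of rank $\le r$; an a~priori bound $\|\U\U^\top\|_F\lesssim\|\Xor\|_F+\|E\|_*$, which follows from $\langle M\U,\U\rangle=0$, controls the norms multiplying $\delta_{2r}$. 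Second, the tail $E$ is only ever paired with the small matrix $\Delta\Delta^\top$; its genuine contribution $\langle E,\Delta\Delta^\top\rangle\le\|E\|_2\|\Delta\|_F^2\le\sigma_r(\Xor)\|\Delta\|_F^2$ is absorbed into the leading negative constant through the rotation bound $\sigma_r(\Xor)\|\Delta\|_F^2\lesssim\|\U\U^\top-\Xor\|_F^2$, and the RIP slack it incurs is $\lesssim\delta_{2r}\|E\|_*\|\Delta\Delta^\top\|_F\lesssim\delta_{2r}\|E\|_*\|\U\U^\top-\Xor\|_F$. Third, for the purely deterministic part I would invoke the standard rotation estimates for the optimal $R$ — $\|\Delta\Delta^\top\|_F\le c\|\U\U^\top-\Xor\|_F$ and $\sigma_r(\Xor)\|\Delta\|_F^2\le c\|\U\U^\top-\Xor\|_F^2$ — together with the PCA-type identity showing that at a critical point the deterministic part equals $-c_0\|\U\U^\top-\Xor\|_F^2$ up to terms that cancel against the $\|\Xor\|_F$-scale errors produced by the RIP substitutions, exactly as in the proof of Theorem~\ref{thm:exact}. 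Assembling everything yields
\begin{equation*}
0 \;\le\; \tfrac1m\nabla^2 f(\U)[\Delta,\Delta] \;\le\; -\bigl(c_0 - C\delta_{2r}\bigr)\,\|\U\U^\top-\Xor\|_F^2 \;+\; C'\,\delta_{2r}\,\|E\|_*\,\|\U\U^\top-\Xor\|_F
\end{equation*}
for absolute constants $c_0,C,C'$; since $\delta_{2r}<\tfrac1{100}$ makes $c_0-C\delta_{2r}>0$, this forces $\|\U\U^\top-\Xor\|_F\le\frac{C'\delta_{2r}}{c_0-C\delta_{2r}}\|E\|_*\lesssim\delta_{2r}\|E\|_*$, and the triangle inequality finishes the proof.

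I expect the main obstacle to be the bookkeeping in the second paragraph: arranging the RIP splittings and the rotation/PCA identities so that every $\|\Xor\|_F$-scale error produced by the imperfect isometry is matched by a corresponding deterministic term, leaving an inequality whose only inhomogeneous term is $O(\delta_{2r}\|E\|_*)$ and whose quadratic coefficient is genuinely negative. This balance is tighter than in the exact and noisy cases, which is precisely what costs the stricter constant $\delta_{2r}<1/100$; a secondary difficulty is checking that the tail contributions really do collapse — via $E\Uor=0$, $\langle E,\Xor\rangle=0$ and $\|E\|_2\le\sigma_r(\Xor)$ — to a single $O(\delta_{2r}\|E\|_*)$ term plus a harmless adjustment of the leading constant, rather than producing uncontrolled $\|E\|_F$- or $\|\Xor\|_F$-scale contributions.
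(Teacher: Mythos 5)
Your outline reproduces the paper's architecture — first- and second-order conditions at the optimal rotation $R$, a Lemma~\ref{lem:combine}-type bound on $\sum_j\|\U e_je_j^\top\Delta^\top\|_F^2$, and handling the tail $E=\Xo-\Xor$ under $(2r,\delta_{2r})$-RIP by splitting it into rank-$r$ pieces so the error accumulates as $\delta_{2r}\|E\|_*$ (this is exactly how the paper's Lemmas~\ref{lem:first_inexact} and~\ref{lem:hessian_inexact} generate the nuclear-norm terms). The genuine gap is in your tail bookkeeping: you claim every tail contribution either carries a factor $\delta_{2r}$ or can be absorbed into the negative quadratic coefficient, so that $\|\U\U^\top-\Xor\|_F\lesssim\delta_{2r}\|E\|_*$. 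That intermediate claim is not derivable from the ingredients you use. After the first-order substitution, the second-order inequality contains the genuine (not RIP-induced) cross term $\ip{E}{\U\U^\top-\Xor}=\ip{E}{\U\U^\top}\ge 0$ with no $\delta_{2r}$ in front, and the first-order lemma likewise contains the $\delta$-free term $\|(\Xo-\Xor)QQ^\top\|_F$; these can only be controlled by quantities of size $\|E\|_F\,\|\U\U^\top-\Xor\|_F$, which is precisely why the paper's Lemma~\ref{lem:hessian_inexact} keeps an additive $\|\Xo-\Xor\|_F^2$ and why the theorem itself carries the $4\|\Xo-\Xor\|_F$ term. Your proposed absorption $\ip{E}{\Delta\Delta^\top}\le\sigma_r(\Xor)\|\Delta\|_F^2\le\tfrac{1}{2(\sqrt2-1)}\|\U\U^\top-\Xor\|_F^2$ cannot rescue this, because the factor $\tfrac{1}{2(\sqrt2-1)}\approx 1.2$ times its coefficient exceeds the small margin left after the Lemma~\ref{lem:combine} step.

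Concretely, take $\delta_{2r}=0$ (full observations), $r=1$, $\Xo=\diag(\lambda_1,\lambda_2)$ with $\lambda_2<\lambda_1<2\lambda_2$, and $\U=\sqrt{\lambda_2}\,e_2$. Then $\nabla f(\U)=0$, and along your single test direction $\Delta=\U-\Uor R$ (optimal $R$) the curvature equals $-4(\lambda_1-2\lambda_2)(\lambda_1+\lambda_2)>0$. Thus every hypothesis your derivation invokes (stationarity, nonnegative curvature along $\Delta$, RIP with $\delta_{2r}=0$) holds, yet $\|\U\U^\top-\Xor\|_F=\sqrt{\lambda_1^2+\lambda_2^2}>0=C'\delta_{2r}\|E\|_*$, so your final display $0\le-(c_0-C\delta_{2r})\|\U\U^\top-\Xor\|_F^2+C'\delta_{2r}\|E\|_*\|\U\U^\top-\Xor\|_F$ cannot follow from those conditions. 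The obstruction is intrinsic: the argument only sees first-order stationarity and curvature along $\Delta$ (resp.~the $\Delta_j$), and such points can sit at distance $\sim\|E\|_F$ from $\Xor$ even with perfect isometry. The correct intermediate target — and what the paper proves by bounding $\ip{\U\U^\top-\Xor}{\Xo-\Xor}\le\tfrac12\|\U\U^\top-\Xor\|_F^2+\tfrac12\|\Xo-\Xor\|_F^2$ — is $\|\U\U^\top-\Xor\|_F^2\lesssim\|\Xo-\Xor\|_F^2+\delta_{2r}\|\Xo-\Xor\|_*^2$, after which the triangle inequality yields the theorem; with your intermediate goal restated this way, your outline collapses onto the paper's proof.
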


This theorem guarantees that any local optimum of $f(\U)$ is close to
$\Xo$ upto an error depending on $\|\Xo-\Xor\|$.  For the low-rank
noiseless case we have $\Xo=\Xor$ and the right hand side vanishes.
When $\Xo$ is not exactly low rank, the best recovery error we can
hope for is $\norm{\Xo-\Xor}_F$, since $\U\U^\top$ is at most rank $k$.
 On the right hand side of Theorem \ref{thm:inexact_sym}, we have also
 a nuclear norm term, which might be higher, but it also gets scaled
 down by $\delta_{2r}$, and so by the number of measurements.
 
\begin{figure}[ht]
	\begin{center}
		\includegraphics[width=0.3\textwidth]{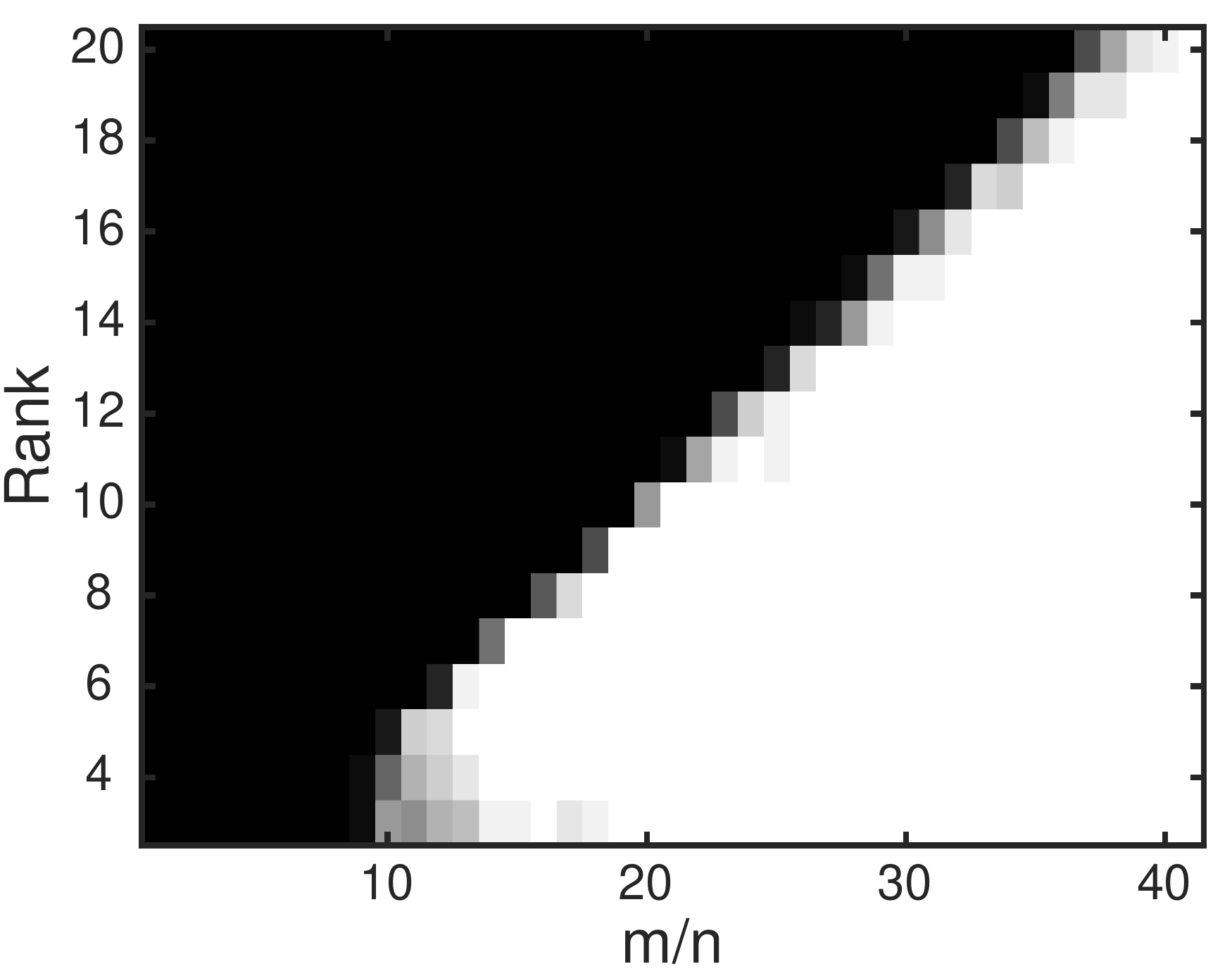} 
		\hspace{0.05in}
		\includegraphics[width=0.3\textwidth]{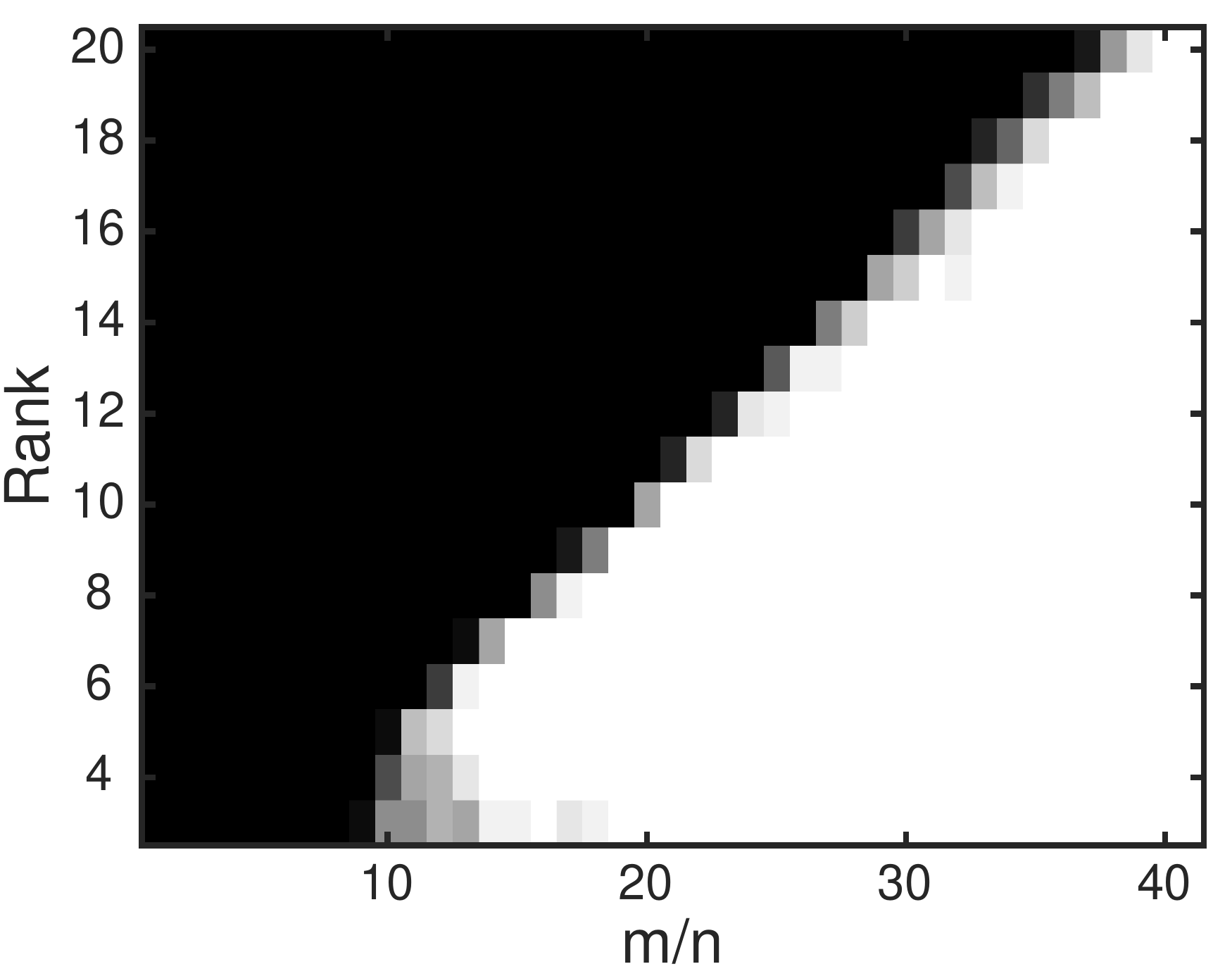}
		\hspace{0.05in}
		\includegraphics[width=0.3\textwidth]{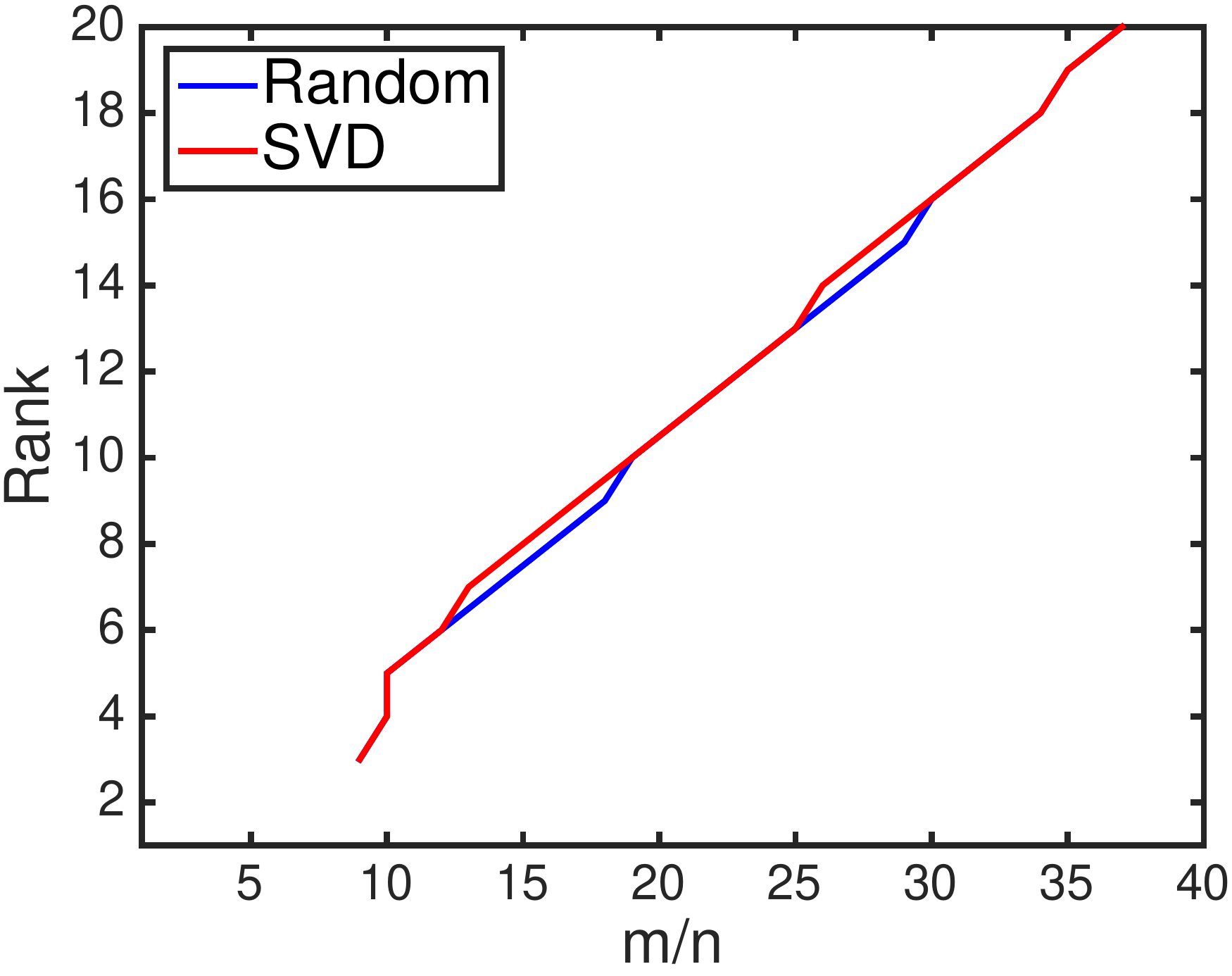}
	\end{center}
	\caption{The plots in this figure compare the success probability of gradient descent between (left) random and (center) SVD initialization (suggested in~\cite{jain2013low}), for problem~\eqref{eq:prob2}, with increasing number of samples $m$ and various values of rank $r$. Right most plot is the first $m$ for a given $r$, where the probability of success reaches the value $0.5$. A run is considered success if $\|\U \U^\top -\Xo\|_F/\|\Xo\|_F \leq 1e-2$. White cells denote success and black cells denote failure of recovery. We set $n$ to be $100$. Measurements $y_i$ are inner product of entrywise i.i.d Gaussian matrix and a rank-$r$ p.s.d matrix with random subspace. We notice no significant difference between the two initialization methods, suggesting absence of local minima as shown. Both methods have phase transition around $m =2\cdot n\cdot r$.}
{\label{fig2_main}}
\end{figure}
\vspace{-0.1in}
\section{Proof for the Noiseless Case}\label{sec:proof_exact}
In this section we present the proof characterizing the local minima of problem~\eqref{eq:prob2}. For ease of exposition we first present the results for the noiseless case ($\vec{w}=0$). Proof for the general case can be found in the Appendix.

\begin{thm}\label{thm:exact}
Consider the optimization problem \eqref{eq:prob2} where
  $\vec{y}=\calA(\Xo)$, $\calA$ satisfies $(2r,\delta_{2r})$-RIP with $\delta_{2r}<\frac{1}{5}$, and  $\rank(\Xo)\leq r$. Then, for any local minimum $\U$ of $f(\U)$:  $$\U\U^\top=\Xo.$$
\end{thm}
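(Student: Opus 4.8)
\emph{Proof plan.} Let $\U\in\R^{n\times r}$ be a local minimum of $f$, fix any rank-$\le r$ PSD factorization $\Xo=\Uo\Uo^\top$ with $\Uo\in\R^{n\times r}$, and set $e:=\U\U^\top-\Xo$ (so $\rank(e)\le 2r$). We may assume each $\A_i$ is symmetric, since only the symmetric parts enter $f$ and all matrices arising below are symmetric. First-order optimality then reads $H\U=0$, where $H:=\calA^*\calA(e)=\sum_i\ip{\A_i}{e}\A_i$ is symmetric; second-order optimality reads, for every direction $\Delta\in\R^{n\times r}$,
$$\norm{\calA(\U\Delta^\top+\Delta\U^\top)}^2+2\ip{\calA(e)}{\calA(\Delta\Delta^\top)}\ \ge\ 0 .$$
The plan is to feed into this one carefully chosen direction, $\Delta:=\U-\Uo R$ with $R$ the orthogonal matrix minimizing $\norm{\U-\Uo R}_F$ (the polar factor of $\Uo^\top\U$), and to show it forces $e=0$. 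The computation is driven by the elementary identity $\U\Delta^\top+\Delta\U^\top=\Delta\Delta^\top+e$, which holds for \emph{any} orthogonal $R$ (expand and use $RR^\top=I$).

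The next step is to use the first-order condition to eliminate cross terms. Since $H=H^\top$ and $H\U=0$, we get $\ip{H}{\U\Delta^\top+\Delta\U^\top}=0$, i.e.\ $\ip{\calA(e)}{\calA(\U\Delta^\top+\Delta\U^\top)}=0$, and plugging in the identity gives $\ip{\calA(e)}{\calA(\Delta\Delta^\top)}=-\norm{\calA(e)}^2$. Substituting this, along with $\calA(\U\Delta^\top+\Delta\U^\top)=\calA(\Delta\Delta^\top)+\calA(e)$, into the second-order inequality and expanding the square makes everything collapse to $\norm{\calA(\Delta\Delta^\top)}^2\ge 3\norm{\calA(e)}^2$. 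Now $(2r,\delta_{2r})$-RIP applies on both sides — $e$ has rank $\le 2r$ and $\Delta\Delta^\top$ has rank $\le r\le 2r$ — giving $(1+\delta_{2r})\norm{\Delta\Delta^\top}_F^2\ge 3(1-\delta_{2r})\norm{e}_F^2$. Note that only $\delta_{2r}$ is needed: the potentially rank-$3r$ cross term $\ip{\calA(e)}{\calA(\Delta\Delta^\top)}$ was killed using the gradient condition rather than bounded via RIP, and no dependence on $\sigma_r(\Xo)$ ever appears (so the argument is insensitive to whether $\rank(\Xo)=r$ or $<r$).

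The remaining and decisive ingredient is a Procrustes-type inequality for the optimal alignment: $\norm{\Delta\Delta^\top}_F^2\le 2\norm{\U\U^\top-\Xo}_F^2$. Granting it, $3(1-\delta_{2r})\norm{e}_F^2\le 2(1+\delta_{2r})\norm{e}_F^2$, and if $e\neq0$ this forces $\delta_{2r}\ge\tfrac15$, contradicting the hypothesis; hence $e=0$, i.e.\ $\U\U^\top=\Xo$. I expect proving this inequality with the sharp constant $2$ to be the main obstacle: it requires using crucially that $R$ is the \emph{optimal} rotation — equivalently, that $R^\top\Uo^\top\U$ is symmetric positive semidefinite — to control $\norm{\Delta\Delta^\top}_F$ against $\norm{e}_F$ via the decomposition $e=\Uo R\Delta^\top+\Delta R^\top\Uo^\top+\Delta\Delta^\top$, and it is tight already in the rank-one case (take $\U\perp\Uo$ with $\norm{\U}_F=\norm{\Uo}_F$), which is precisely why the threshold is $\delta_{2r}<\tfrac15$. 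Everything else — the symmetrization of the $\A_i$, the exact constants in the gradient and Hessian, and tracking the RIP ranks — is routine bookkeeping. (A direction-of-negative-curvature restatement of the same computation is what would underlie Theorem~\ref{thm:strict_saddle}, though with a weaker constant.)
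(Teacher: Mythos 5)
Your proposal is correct, and it takes a genuinely different route from the paper's proof, arriving at the same threshold $\delta_{2r}<1/5$. The paper tests the Hessian along the $r$ per-column directions $\Delta_j=\Delta e_je_j^\top$ (Lemma~\ref{lem:hessian}, Corollary~\ref{cor:second}), controls the first-order information through the projected bound $\|(\U\U^\top-\Uo{\Uo}^\top)QQ^\top\|_F\le\delta\|\U\U^\top-\Uo{\Uo}^\top\|_F$ (Lemma~\ref{lem:first}, where RIP is used to bound the cross term), and then needs the technical Lemma~\ref{lem:combine} with constants $1/8$ and $34/8$ to reconcile the two; you instead test a single direction $\Delta=\U-\Uo R$, use the identity $\U\Delta^\top+\Delta\U^\top=\Delta\Delta^\top+(\U\U^\top-\Xo)$ and the gradient condition to cancel the cross term \emph{exactly} (so RIP is only ever invoked on the two rank-$\le 2r$ symmetric matrices $\U\U^\top-\Xo$ and $\Delta\Delta^\top$), and close with the alignment inequality $\|\Delta\Delta^\top\|_F^2\le 2\|\U\U^\top-\Xo\|_F^2$. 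Your one flagged worry is unfounded: that inequality is true precisely under the property you identify (optimal $R$ makes $\V^\top\U=\U^\top\V\succeq 0$ with $\V:=\Uo R$), and has a short proof. Writing $M:=\U\U^\top-\Xo=\tfrac12\left[(\U+\V)\Delta^\top+\Delta(\U+\V)^\top\right]$ and expanding gives $\|M\|_F^2=\tfrac12\trace\left((\U+\V)^\top(\U+\V)\,\Delta^\top\Delta\right)+\tfrac12\|\U^\top\U-\V^\top\V\|_F^2$, since $(\U+\V)^\top\Delta=\U^\top\U-\V^\top\V$ is symmetric; and because $(\U+\V)^\top(\U+\V)=\Delta^\top\Delta+4\,\U^\top\V\succeq\Delta^\top\Delta$, the trace term is at least $\|\Delta\Delta^\top\|_F^2$, which is the claim (it is a cousin of, but not the same statement as, the paper's Lemma~\ref{lem:supp1}). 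What your route buys is a much leaner noiseless proof with no projected-error bookkeeping; what the paper's heavier column-wise machinery buys is reuse: the projected first-order bound and the per-column directions (with orthogonal columns) are exactly what carry over to the noisy and approximately-low-rank theorems and to the strict-saddle eigenvalue bound of Theorem~\ref{thm:strict_saddle}, whereas your single-direction computation yields a negative-curvature statement with somewhat different constants, as you note.
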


For the proof of this theorem we first discuss the implications of the first and second order optimality conditions and then show how to combine them to yield the result.

Our proof techniques are different from existing results characterizing local minima of dictionary learning~\cite{sun2015complete}, phase retrieval~\cite{sun2016geometric} and community detection~\cite{bandeira2016low, montanari2016grothendieck}. For most of these problems Hessian is PSD only for points close to optimum. However, Hessian of $f(\U)$ can be PSD even for points far from to optima. Hence we need new directions to use the second order conditions. 

Invariance of $f(\U)$ over $r \times r$ orthonormal matrices introduces additional challenges in comparing a given stationary point to a global optimum. We have to find the best orthonormal matrix $R$ to align a given stationary point $\U$ to a global optimum $\Uo$, where $\Uo {\Uo}^\top =\Xo$, to combine results from the first and second order conditions, without degrading the isometry constants.

Consider a local optimum $\U$ that satisfies first and second order optimality conditions of problem~\eqref{eq:prob2}. In particular $\U$ satisfies $\nabla f(\U) =0 $ and $z^\top \nabla^2 f(\U) z \geq 0$ for any $z \in \R^{n\cdot r}$. Now we will see how these two conditions constrain the error $\U\U^\top -\Uo {\Uo}^\top$.

First we present the following consequence of the RIP assumption~\citep[see][Lemma~2.1]{candes2008restricted}.
\begin{lem}\label{cor:rip1}
Given two $n \times n$ rank-$r$ matrices $\X$ and $\Y$, and a $(2r, \delta)$-RIP measurement operator $\calA$, the following holds:
\begin{equation}\label{eq:rip1}\abs{ \frac{1}{m}\sum_{i=1}^m \ip{\A_i}{\X}\ip{\A_i}{\Y}-\ip{\X}{\Y} }\leq \delta \|\X\|_F \|\Y\|_F.\end{equation}
\end{lem}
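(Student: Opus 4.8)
The plan is to derive this directly from the RIP by a polarization argument, so the proof is short. The one genuinely useful preliminary observation is a homogeneity reduction: the quantity $B(\X,\Y)\defeq\frac{1}{m}\sum_{i=1}^m\ip{\A_i}{\X}\ip{\A_i}{\Y}$ is a symmetric bilinear form, so $B(\X,\Y)-\ip{\X}{\Y}$ and the right-hand side $\delta\norm{\X}_F\norm{\Y}_F$ are each homogeneous of degree one in $\X$ and in $\Y$. Hence it suffices to prove \eqref{eq:rip1} under the normalization $\norm{\X}_F=\norm{\Y}_F=1$; the general statement then follows by applying the normalized inequality to $\X/\norm{\X}_F$ and $\Y/\norm{\Y}_F$ and rescaling (the case where $\X$ or $\Y$ is zero being trivial). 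This reduction matters, because working without normalization only yields a bound of the form $\tfrac{\delta}{2}(\norm{\X}_F^2+\norm{\Y}_F^2)$, which by AM--GM points the wrong way.

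Next I would apply the polarization identity
$$B(\X,\Y)-\ip{\X}{\Y}=\tfrac14\Big[\big(B(\X+\Y,\X+\Y)-\norm{\X+\Y}_F^2\big)-\big(B(\X-\Y,\X-\Y)-\norm{\X-\Y}_F^2\big)\Big].$$
Since $\X$ and $\Y$ each have rank at most $r$, the matrices $\X+\Y$ and $\X-\Y$ have rank at most $2r$, so the $(2r,\delta)$-RIP bound \eqref{eq:rip} applies to each of them, giving $\abs{B(\X\pm\Y,\X\pm\Y)-\norm{\X\pm\Y}_F^2}\le\delta\norm{\X\pm\Y}_F^2$. Plugging these into the identity via the triangle inequality, then using the parallelogram law $\norm{\X+\Y}_F^2+\norm{\X-\Y}_F^2=2\norm{\X}_F^2+2\norm{\Y}_F^2$, yields
$$\abs{B(\X,\Y)-\ip{\X}{\Y}}\le\tfrac{\delta}{4}\big(\norm{\X+\Y}_F^2+\norm{\X-\Y}_F^2\big)=\tfrac{\delta}{2}\big(\norm{\X}_F^2+\norm{\Y}_F^2\big)=\delta,$$
where the last equality uses $\norm{\X}_F=\norm{\Y}_F=1$. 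Undoing the normalization gives \eqref{eq:rip1}.

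There is no substantive obstacle here; the argument is essentially bookkeeping. The only two points needing care are (i) verifying that RIP is invoked \emph{only} on matrices of rank at most $2r$ — which is precisely why the hypothesis is stated with parameter $2r$ rather than $r$ — and (ii) the homogeneity/normalization step described above, without which the constant comes out as $\tfrac{\delta}{2}(\norm{\X}_F^2+\norm{\Y}_F^2)$ instead of the desired $\delta\norm{\X}_F\norm{\Y}_F$. Everything else is the standard polarization computation.
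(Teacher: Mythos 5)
Your proof is correct: the polarization/parallelogram argument, applied after normalizing to $\norm{\X}_F=\norm{\Y}_F=1$ and invoking the $(2r,\delta)$-RIP only on the rank-$\leq 2r$ matrices $\X\pm\Y$, gives exactly the stated bound. The paper does not prove this lemma itself but cites it (Lemma 2.1 of \citet{candes2008restricted}), and your argument is precisely the standard proof of that cited result, so there is nothing further to reconcile.
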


 \subsection{First order optimality}

First we will consider the first order condition, $\nabla f(\U) =0 $. For any stationary point $\U$ this implies  
\begin{equation}
\sum_i \ip{\A_i}{\U\U^\top-\Uo{\Uo}^\top}\A_i \U=0.\label{eq:first}
\end{equation}
Now using the isometry property of $\A_i$ gives us the following result.

\begin{lem}\label{lem:first}[First order condition]
For any first order stationary point $\U$ of $f(\U)$, and $\calA$ satisfying the $(2r, \delta)$-RIP~\eqref{eq:rip}, the following holds:
\begin{equation*}
\|(\U\U^\top - \Uo{\Uo}^\top)QQ^\top\|_F \leq \delta \norm{\U\U^\top - \Uo{\Uo}^\top}_F, 
\end{equation*}
where $Q$ is an orthonormal matrix that spans the column space of $\U$.
\end{lem}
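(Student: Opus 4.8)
The plan is to convert the first-order equation \eqref{eq:first} into a scalar identity and then read off the bound from RIP. Write $M=\U\U^\top-\Uo{\Uo}^\top$, a symmetric matrix of rank at most $2r$. By \eqref{eq:first} every column of $\U$ lies in the kernel of $\sum_i\ip{\A_i}{M}\A_i$; since the columns of $Q$ span the column space of $\U$, this forces $\bigl(\sum_i\ip{\A_i}{M}\A_i\bigr)QQ^\top=0$. Taking the Frobenius inner product of this matrix identity with $M$, and using $\ip{M}{\A_iQQ^\top}=\ip{\A_i}{MQQ^\top}$ (cyclicity and transpose-invariance of the trace), I obtain the scalar identity
\[
\frac1m\sum_{i=1}^m \ip{\A_i}{M}\,\ip{\A_i}{MQQ^\top}=0 .
\]

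Next I would record two elementary facts. First, $\ip{M}{MQQ^\top}=\|MQQ^\top\|_F^2$: indeed $M=MQQ^\top+M(I-QQ^\top)$, and the cross term $\ip{M(I-QQ^\top)}{MQQ^\top}$ vanishes because $QQ^\top(I-QQ^\top)=0$. Second, $M\pm MQQ^\top=M(I\pm QQ^\top)$ has rank at most $\rank(M)\le 2r$. The second fact is precisely what lets me apply Lemma~\ref{cor:rip1} to the pair $(M,\,MQQ^\top)$: the lemma is stated for two rank-$r$ matrices, but its proof (polarization plus the $(2r,\delta)$-RIP bound applied to $M\pm MQQ^\top$) only needs $M\pm MQQ^\top$ to have rank at most $2r$, so it applies here even though $M$ itself is not rank $\le r$. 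This gives
\[
\Bigl| \frac1m\sum_{i=1}^m \ip{\A_i}{M}\ip{\A_i}{MQQ^\top} \;-\; \ip{M}{MQQ^\top}\Bigr| \;\le\; \delta\,\|M\|_F\,\|MQQ^\top\|_F .
\]

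Combining the two displays with $\ip{M}{MQQ^\top}=\|MQQ^\top\|_F^2$ yields $\|MQQ^\top\|_F^2\le\delta\,\|M\|_F\,\|MQQ^\top\|_F$, hence $\|MQQ^\top\|_F\le\delta\|M\|_F$, which is the claim. I expect the one genuine subtlety to be the rank bookkeeping in the RIP step: a naive count ($\rank M\le 2r$, $\rank MQQ^\top\le r$) suggests $(3r,\delta)$-RIP is needed, but since right-multiplication by $I\pm QQ^\top$ cannot increase rank, $\rank(M\pm MQQ^\top)\le 2r$ and $(2r,\delta)$-RIP suffices — this is exactly the kind of care with the isometry constants flagged in the discussion above. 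A smaller point to verify is the trace identity $\ip{M}{\A_iQQ^\top}=\ip{\A_i}{MQQ^\top}$ used to derive the first display.
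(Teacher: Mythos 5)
Your proof is correct and follows essentially the same route as the paper's: test the first-order condition against a direction supported on the column space of $\U$ (your specific choice $MQQ^\top$ with $M=\U\U^\top-\Uo{\Uo}^\top$, versus the paper's arbitrary $\ZZ Q R^{-\top}$ followed by a supremum over $\ZZ$), then invoke the RIP cross-term bound of Lemma~\ref{cor:rip1}. Your explicit rank bookkeeping (observing that $M(I\pm QQ^\top)$ has rank at most $2r$, so $(2r,\delta)$-RIP suffices) and your derivation of $\bigl(\sum_i\ip{\A_i}{M}\A_i\bigr)QQ^\top=0$ without using $R^{-1}$ are minor refinements of the same argument, the paper glossing over the former and implicitly assuming $\U$ has full column rank for the latter.
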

This lemma states that any stationary point of $f(\U)$ is close to a global optimum $\Uo$ in the subspace spanned by columns of  $\U$. Notice that the error along the orthogonal direction $\|\Xo Q_{\perp}Q_{\perp}^\top\|_F$ can still be large making the distance between $\X$ and $\Xo$ arbitrarily far.  

\begin{proof}[Proof of Lemma~\ref{lem:first}]
Let $\U=QR$, for some orthonormal $Q$. Consider any matrix of the form $\ZZ Q{R^{-1}}^\top$. The first order optimality condition then implies,
$$\sum_{i=1}^m\ip{\A_i}{\U\U^\top - \Uo{\Uo}^\top}\ip{\A_i}{\U R^{-1}Q^\top \ZZ^\top}=0$$
The above equation together with Restricted Isometry Property (equation~\eqref{eq:rip1}) gives us the following inequality:
$$
\abs{ \ip{\U\U^\top - \Uo{\Uo}^\top}{QQ^\top \ZZ^\top} } \leq \delta \norm{\U\U^\top - \Uo{\Uo}^\top}_F \norm{QQ^\top \ZZ^\top}_F.
$$ 
Note that for any matrix $\A$, $\ip{\A}{QQ^\top \ZZ}=\ip{QQ^\top \A  }{\ZZ}$. Furthermore, for any matrix $\A$, $\sup_{\{\ZZ:\|\ZZ\|_F \leq 1\}} \ip{\A}{\ZZ} =\|\A\|_F$. Hence the above inequality implies the lemma statement.
\end{proof}

\subsection{Second order optimality}

We now consider the second order condition to show that the error along $Q_{\perp} Q_{\perp}^\top$ is indeed bounded well. Let $\nabla^2f(\U)$ be the hessian of the objective function. Note that this is an $n\cdot r \times n \cdot r$ matrix. Fortunately for our result we need to only evaluate the Hessian along the direction $\texttt{vec}(\U -\Uo R)$ for some orthonormal matrix $R$. Here $\texttt{vec}(.)$ denotes writing a matrix in vector form.

\begin{lem}\label{lem:hessian}[Hessian computation]
Let $\U$ be a first order critical point of $f(\U)$. Then for any $r\times r$ orthonormal matrix $R$ and $\Delta_j =\Delta e_j e_j^\top$ ( $\Delta=\U-\Uo R$),
\begin{equation*}
\sum_{j=1}^r \texttt{vec} \left(\Delta_j \right)^\top \left[\nabla^2f(\U)\right] \texttt{vec}\left(\Delta_j \right) \\=   \sum_{i=1}^m (\sum_{j=1}^r4\ip{\A_i}{\U\Delta_j^\top}^2 -2\ip{\A_i}{\U\U^\top -\Uo{\Uo}^\top}^2),
\end{equation*}
\end{lem}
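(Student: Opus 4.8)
The plan is to reduce the statement to a second-order Taylor expansion of $f$ along each rank-one direction $\Delta_j$, sum the resulting expressions over $j$, and then use the first-order stationarity of $\U$ to rewrite the leftover curvature term. Throughout I work in the noiseless case, so $f(\U)=\sum_{i=1}^m g_i(\U)^2$ with $g_i(\U)\defeq\ip{\A_i}{\U\U^\top-\Uo{\Uo}^\top}$, and I may assume each $\A_i$ is symmetric: replacing $\A_i$ by $\tfrac12(\A_i+\A_i^\top)$ does not change $\ip{\A_i}{\X}$ for any symmetric $\X$, and every matrix fed to $\calA$ below is symmetric or appears only through such a combination.

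First I would fix a matrix direction $Z\in\R^{n\times r}$ and expand $g_i(\U+tZ)=g_i(\U)+2t\ip{\A_i}{\U Z^\top}+t^2\ip{\A_i}{ZZ^\top}$; differentiating $\sum_i g_i(\U+tZ)^2$ twice at $t=0$ gives
\[ \texttt{vec}(Z)^\top[\nabla^2 f(\U)]\,\texttt{vec}(Z)=\sum_{i=1}^m\Bigl(4\ip{\A_i}{\U Z^\top}^2+2\,g_i(\U)\,\ip{\A_i}{ZZ^\top}\Bigr). \]
Now specialize to $Z=\Delta_j=\Delta e_j e_j^\top$ and sum over $j=1,\dots,r$: the first term is already $\sum_i\sum_j 4\ip{\A_i}{\U\Delta_j^\top}^2$, the first term in the claim. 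For the curvature term, $\Delta_j\Delta_j^\top=\Delta e_j e_j^\top\Delta^\top$, so $\sum_{j=1}^r\Delta_j\Delta_j^\top=\Delta(\sum_j e_j e_j^\top)\Delta^\top=\Delta\Delta^\top$, and it collapses to $\sum_i 2\,g_i(\U)\ip{\A_i}{\Delta\Delta^\top}$.

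The only step with real content is then to show that first-order stationarity forces $\sum_i g_i(\U)\ip{\A_i}{\Delta\Delta^\top}=-\sum_i g_i(\U)^2$. I would split $\Delta\Delta^\top=\Delta\U^\top-\Delta R^\top{\Uo}^\top$ and use the identity
\[ \U\U^\top-\Uo{\Uo}^\top=\U\Delta^\top+\Delta R^\top{\Uo}^\top, \]
which is valid since $R$ is orthonormal ($RR^\top=I$). Contracting the first-order condition \eqref{eq:first} against an arbitrary $W\in\R^{n\times r}$ gives $\sum_i g_i(\U)\ip{\A_i}{\U W^\top}=0$; taking $W=\Delta$, together with symmetry of $\A_i$ to identify $\ip{\A_i}{\Delta\U^\top}=\ip{\A_i}{\U\Delta^\top}$, kills $\sum_i g_i(\U)\ip{\A_i}{\Delta\U^\top}$, while the displayed identity turns $\sum_i g_i(\U)\ip{\A_i}{\Delta R^\top{\Uo}^\top}$ into $\sum_i g_i(\U)(g_i(\U)-\ip{\A_i}{\U\Delta^\top})=\sum_i g_i(\U)^2$, the $\U\Delta^\top$ piece vanishing again by stationarity. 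Combining this with the previous paragraph yields the stated identity.

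The hard part is precisely this last step: finding the right decompositions of $\Delta\Delta^\top$ and of $\U\U^\top-\Uo{\Uo}^\top$ so that every term of the form $\ip{\A_i}{\U\Delta^\top}$ is annihilated by the first-order condition, leaving only $\sum_i g_i(\U)^2$. Everything else is routine differentiation and index bookkeeping. Choosing the rank-one pieces column by column rather than using $\Delta$ directly does not change this argument, but it is what will keep the first term amenable to RIP later, since each $\U\Delta_j^\top=(\U e_j)(\Delta e_j)^\top$ is rank one (so $\U\Delta_j^\top+\Delta_j\U^\top$ has rank at most two).
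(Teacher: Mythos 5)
Your proof is correct and follows essentially the same route as the paper: expand the directional second derivative, specialize to $Z=\Delta_j$, and use the first-order stationarity condition contracted against suitable directions (under the same implicit symmetric-$\A_i$ convention the paper uses) to turn the curvature term into $-2\sum_{i}\ip{\A_i}{\U\U^\top-\Uo{\Uo}^\top}^2$. The only cosmetic difference is that you sum over $j$ first and manipulate $\Delta\Delta^\top$ directly, whereas the paper performs the first-order-condition substitutions per $j$ and sums at the end.
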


\begin{proof}[Proof of Lemma~\ref{lem:hessian}]
For any matrix $\ZZ$, taking directional second derivative of the function $f(\U)$ with respect to $\ZZ$ we get:
\begin{align*}
&\texttt{vec} \left(\ZZ\right)^\top \left[\nabla^2f(\U)\right] \texttt{vec}\left(\ZZ\right) = \texttt{vec}\left(\ZZ\right)^\top \lim_{t\rightarrow 0} \left[\frac{\nabla f\left(\U+t(\ZZ)\right)-\nabla f(\U)}{t}\right]\\
&= 2\sum_{i=1}^m \bigg[2\ip{\A_i}{\U\ZZ^\top}^2 + \ip{\A_i}{\U\U^\top - \Uo{\Uo}^\top}\ip{\A_i}{\ZZ\ZZ^\top} \bigg]
\end{align*}
Setting $\ZZ=\Delta_j=(\U-\Uo R)e_j e_j^\top$ and using the first order optimality condition on $\U$, we get,
\begin{align*}
&\texttt{vec} \left((\U-\Uo R)e_j e_j^\top \right)^\top \left[\nabla^2f(\U)\right] \texttt{vec}\left((\U-\Uo R)e_j e_j^\top\right)\\
&=  \sum_{i=1}^m 4\ip{\A_i}{\U \Delta_j^\top}^2 +2\ip{\A_i}{\U{\U}^\top -\Uo {\Uo}^\top}\ip{\A_i}{\Delta_j \Delta_j^\top}  \\
&\stackrel{(i)}=  \sum_{i=1}^m 4\ip{\A_i}{\U e_j e_j^\top \Delta_j^\top }^2 +2\ip{\A_i}{\U\U^\top-\Uo {\Uo}^\top}\ip{\A_i}{\Uo e_j e_j^\top (\Uo e_j e_j^\top)^\top}\\
&\stackrel{(ii)}{=}\sum_{i=1}^m 4\ip{\A_i}{\U e_j e_j^\top \Delta_j^\top}^2 -2\ip{\A_i}{\U\U^\top-\Uo {\Uo}^\top}\ip{\A_i}{\U e_j e_j^\top \U^\top - \Uo e_j e_j^\top {\Uo}^\top}. \label{eq:hessian}
\end{align*}
$(i)$ and $(ii)$ follow from the first order optimality condition~\eqref{eq:first}, $$ \sum_{i=1}^m \ip{\A_i}{\U\U^\top}{\U e_j e_j^\top} = \sum_{i=1}^m \ip{\A_i}{\Uo {\Uo}^\top}{\U e_j e_j^\top},$$ for $j =1 \cdots r$. Finally taking sum over $j$ from $1$ to $r$ gives the result.
\end{proof}
Hence from second order optimality of $\U$ we get,
\begin{cor}\label{cor:second}[Second order optimality]
Let $\U$ be a local minimum of $f(\U)$ . For any  $r \times r$ orthonormal matrix $R$,
\begin{equation} \label{eq:second}
\sum_{j=1}^r  \sum_{i=1}^m 4\ip{\A_i}{\U\Delta_j^\top}^2 \geq \frac{1}{2} \sum_{i=1}^m \ip{\A_i}{\U\U^\top -\Uo{\Uo}^\top}^2,
\end{equation}
Further for $\calA $ satisfying $(2r, \delta)$ -RIP (equation~\eqref{eq:rip}) we have,
\begin{equation} \label{eq:second1}
 \sum_{j=1}^r  \|\U e_j e_j^\top(\U -\Uo R)^\top\|_F^2 \geq \frac{1-\delta}{2(1+\delta)}\|\U\U^\top -\Uo{\Uo}^\top\|_F^2.
\end{equation}
\end{cor}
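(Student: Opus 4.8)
\noindent {\bf Proof plan.} Both displays are bookkeeping consequences of Lemma~\ref{lem:hessian}: \eqref{eq:second} comes from combining the Hessian identity there with second order optimality, and \eqref{eq:second1} comes from passing from the squared measurements $\ip{\A_i}{\cdot}^2$ to Frobenius norms via the RIP~\eqref{eq:rip}, once one checks that every matrix the RIP is applied to has rank at most $2r$.

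For \eqref{eq:second}: since $\U$ is a local minimum, $\nabla^2 f(\U)\succeq 0$, so $\texttt{vec}(\Delta_j)^\top[\nabla^2 f(\U)]\texttt{vec}(\Delta_j)\ge 0$ for each $j=1,\dots,r$, hence also $\sum_{j=1}^r\texttt{vec}(\Delta_j)^\top[\nabla^2 f(\U)]\texttt{vec}(\Delta_j)\ge 0$. Substituting the identity of Lemma~\ref{lem:hessian} for the left-hand side gives
$$0 \;\le\; \sum_{i=1}^m\Bigl(\,\sum_{j=1}^r 4\ip{\A_i}{\U\Delta_j^\top}^2 \;-\; 2\,\ip{\A_i}{\U\U^\top-\Uo{\Uo}^\top}^2\Bigr),$$
and rearranging yields $\sum_{j=1}^r\sum_{i=1}^m 4\ip{\A_i}{\U\Delta_j^\top}^2 \ge 2\sum_{i=1}^m\ip{\A_i}{\U\U^\top-\Uo{\Uo}^\top}^2$, which in particular implies \eqref{eq:second}. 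Note that $R$ enters only through $\Delta=\U-\Uo R$, and Lemma~\ref{lem:hessian} holds for every orthonormal $R$, so this inequality does too.

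For \eqref{eq:second1} I would bound the two sides of the inequality just derived separately. First, $\U\Delta_j^\top = \U e_j e_j^\top(\U-\Uo R)^\top = (\U e_j)\bigl((\U-\Uo R)e_j\bigr)^\top$ is an outer product of two vectors, hence has rank at most $1\le 2r$, so the upper $(2r,\delta)$-RIP bound gives $\frac1m\sum_{i=1}^m\ip{\A_i}{\U\Delta_j^\top}^2 \le (1+\delta)\,\norm{\U e_j e_j^\top(\U-\Uo R)^\top}_F^2$. Second, $\U\U^\top-\Uo{\Uo}^\top$ is a difference of two matrices of rank at most $r$, hence has rank at most $2r$, so the lower RIP bound gives $\frac1m\sum_{i=1}^m\ip{\A_i}{\U\U^\top-\Uo{\Uo}^\top}^2 \ge (1-\delta)\,\norm{\U\U^\top-\Uo{\Uo}^\top}_F^2$. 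Substituting both into $\sum_{j=1}^r\sum_{i=1}^m 4\ip{\A_i}{\U\Delta_j^\top}^2 \ge 2\sum_{i=1}^m\ip{\A_i}{\U\U^\top-\Uo{\Uo}^\top}^2$ and dividing through by $4m(1+\delta)$ gives exactly $\sum_{j=1}^r\norm{\U e_j e_j^\top(\U-\Uo R)^\top}_F^2 \ge \frac{1-\delta}{2(1+\delta)}\norm{\U\U^\top-\Uo{\Uo}^\top}_F^2$, which is \eqref{eq:second1}.

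The argument is routine, so there is no substantial obstacle; the two points that need a little care are (i) getting nonnegativity of the \emph{sum} $\sum_j\texttt{vec}(\Delta_j)^\top[\nabla^2 f(\U)]\texttt{vec}(\Delta_j)$ — which should be argued termwise from $\nabla^2 f(\U)\succeq 0$, since the sum is not itself a single quadratic form — and (ii) verifying that both $\U\Delta_j^\top$ (rank $\le 1$) and $\U\U^\top-\Uo{\Uo}^\top$ (rank $\le 2r$) lie within the reach of $(2r,\delta)$-RIP, so that the single RIP hypothesis controls both sides of the estimate.
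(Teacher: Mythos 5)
Your proof is correct and follows essentially the same route as the paper: second-order optimality makes the quadratic form of Lemma~\ref{lem:hessian} nonnegative (termwise, since the Hessian is PSD at a local minimum), and the $(2r,\delta)$-RIP then converts both measurement sums into Frobenius norms, which is legitimate because $\U\Delta_j^\top$ has rank at most $1$ and $\U\U^\top-\Uo{\Uo}^\top$ has rank at most $2r$. Note that you actually derive the stronger inequality with constant $2$ rather than the $\tfrac12$ written in \eqref{eq:second}, and this stronger form is precisely what is needed (and what the paper implicitly uses) to obtain the constant $\frac{1-\delta}{2(1+\delta)}$ in \eqref{eq:second1}, so your handling of the constants is the right one.
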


The proof of this result follows simply by applying Lemma~\ref{lem:hessian}. The above Lemma gives a bound on the distance in the factor $(\U)$ space $\|\U(\U -\Uo R)^\top\|_F^2$. To be able to compare the second order condition to the first order condition we need a relation between  $\|\U(\U -\Uo R)^\top\|_F^2$ and $\|\X-\Xo\|_F^2$. Towards this we show the following result.

\begin{lem}\label{lem:combine}
Let $\U$ and $\Uo$ be two $n \times r$ matrices, and $Q$ is an orthonormal matrix that spans the column space of $\U$. Then there exists an $r \times r$ orthonormal matrix $R$ such that for any first order stationary point $\U$ of $f(\U)$, the following holds:
$$\sum_{j=1}^r  \|\U e_j e_j^\top(\U -\Uo R)^\top\|_F^2  \leq  \frac{1}{8}\| \U\U^\top -\Uo{\Uo}^\top\|_F^2 + \frac{34}{8} \| (\U\U^\top -\Uo{\Uo}^\top) QQ^\top\|_F^2.$$
\end{lem}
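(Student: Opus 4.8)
I would take $R$ to be the orthogonal factor in the polar decomposition of ${\Uo}^\top\U$, i.e.\ the minimizer of the orthogonal Procrustes problem $\min_{R^\top R=I}\|\U-\Uo R\|_F$. Writing $\V:=\Uo R$ (so $\V\V^\top=\Uo{\Uo}^\top$) and $\Delta:=\U-\V$, this choice makes $\U^\top\V=\V^\top\U$ symmetric positive semidefinite; in particular $(\U e_j)^\top(\V e_j)\ge 0$ for every $j$, and $\Delta^\top\U=\U^\top\U-\V^\top\U$ is symmetric. I would also record the elementary ``block'' consequences of $(I-QQ^\top)\U=0$ (which holds since $Q$ spans the column space of $\U$): that $\U\U^\top(I-QQ^\top)=0$, hence $\Uo{\Uo}^\top(I-QQ^\top)=-(\U\U^\top-\Uo{\Uo}^\top)(I-QQ^\top)$ and therefore $\|\Uo{\Uo}^\top(I-QQ^\top)\|_F^2=\|\U\U^\top-\Uo{\Uo}^\top\|_F^2-\|(\U\U^\top-\Uo{\Uo}^\top)QQ^\top\|_F^2$; and that, combined with the algebraic identity $\U\V^\top+\V\U^\top=\U\U^\top+\V\V^\top-\Delta\Delta^\top$, one gets $\U\V^\top(I-QQ^\top)=(\Uo{\Uo}^\top-\Delta\Delta^\top)(I-QQ^\top)$.

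Since $\U e_je_j^\top\Delta^\top=(\U e_j)(\Delta e_j)^\top$ is rank one, the left-hand side equals $\sum_j\|\U e_j\|^2\|\Delta e_j\|^2$. Splitting each column $\Delta e_j$ into its $QQ^\top$-part and its $(I-QQ^\top)$-part and using $\U e_j\in\operatorname{range}(Q)$, this sum splits orthogonally as $T_\parallel+T_\perp$, with $T_\parallel:=\sum_j\|\U e_j\|^2\|QQ^\top\Delta e_j\|^2$ and $T_\perp:=\sum_j\|\U e_j\|^2\|(I-QQ^\top)\V e_j\|^2$ (using $(I-QQ^\top)\Delta e_j=-(I-QQ^\top)\V e_j$). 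For $T_\parallel$ everything lives in the column space of $Q$, so compressing by $Q^\top$ turns it into the same type of quantity with the ``square'' factors $Q^\top\U,\,Q^\top\V$, whose Gram product is still symmetric PSD; using $\|\U e_j\|\le\|(\U+\V)e_j\|$ (valid because $(\U e_j)^\top(\V e_j)\ge0$) and the identity $\tfrac12\big((\U+\V)\Delta^\top+\Delta(\U+\V)^\top\big)=\U\U^\top-\V\V^\top$, a direct computation bounds $T_\parallel$ by an absolute constant times $\|QQ^\top(\U\U^\top-\Uo{\Uo}^\top)QQ^\top\|_F^2\le\|(\U\U^\top-\Uo{\Uo}^\top)QQ^\top\|_F^2$, which I would fold into the $\tfrac{34}{8}$ term.

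The bulk of the work is $T_\perp$. Here I would start from $\sum_j(\U e_j)\big((I-QQ^\top)\V e_j\big)^\top=\U\V^\top(I-QQ^\top)=(\Uo{\Uo}^\top-\Delta\Delta^\top)(I-QQ^\top)$, so by the first block identity $\big\|\U\V^\top(I-QQ^\top)\big\|_F\le\sqrt{\|\U\U^\top-\Uo{\Uo}^\top\|_F^2-\|(\U\U^\top-\Uo{\Uo}^\top)QQ^\top\|_F^2}+\big\|\Delta\Delta^\top(I-QQ^\top)\big\|_F$, with $\Delta\Delta^\top(I-QQ^\top)=-\Delta\big((I-QQ^\top)\V\big)^\top$. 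The two remaining tasks are (i) to pass from the matrix norm $\|\U\V^\top(I-QQ^\top)\|_F^2$ to the diagonal sum $T_\perp$ at the cost of only an absolute factor --- exploiting the orthogonality $\U e_j\perp(I-QQ^\top)\V e_j$, the Procrustes positive-semidefiniteness, and the first-order condition~\eqref{eq:first} relating the columns of $\U$ to $\Uo$ --- and (ii) to bound the residual term $\big\|\Delta\big((I-QQ^\top)\V\big)^\top\big\|_F$, again via~\eqref{eq:first} together with the block structure. Finally I would add the bounds for $T_\parallel$ and $T_\perp$, use Young's inequality to separate the $\|\U\U^\top-\Uo{\Uo}^\top\|_F$ and $\|(\U\U^\top-\Uo{\Uo}^\top)QQ^\top\|_F$ contributions, and tune the free parameters to land exactly on $\tfrac18$ and $\tfrac{34}{8}$.

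The step I expect to be the main obstacle is exactly tasks (i) and (ii): controlling $T_\perp$ with constants that are genuinely absolute, i.e.\ independent of the condition number $\sigma_1(\Uo{\Uo}^\top)/\sigma_r(\Uo{\Uo}^\top)$ and of the rank $r$. Every naive estimate reintroduces such a factor --- bounding $\|\U e_j\|$ by $\|\U\|_2$, or $\|\Delta\big((I-QQ^\top)\V\big)^\top\|_F$ by $\|\Delta\|_2\|(I-QQ^\top)\V\|_F$ and then $\|\Delta\|_2\le\|\U\|_2+\|\Uo\|_2$, or passing from a product-distance to a factor-distance bound through $\sigma_r(\Uo{\Uo}^\top)$ --- so the first-order stationarity condition and the positive-semidefiniteness of $\V^\top\U$ must be used in tandem precisely to kill those factors. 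It is the bookkeeping in this part, and in the concluding Young's-inequality step, that pins down the specific constants $\tfrac18$ and $\tfrac{34}{8}$.
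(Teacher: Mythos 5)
Your plan starts from the same place as the paper (the Procrustes choice of $R$, so that $\U^\top\Uo R$ is symmetric PSD, and a split of the error into components along and orthogonal to the column space of $\U$), but it stops short of a proof: the bound on $T_\parallel$, and both tasks (i) and (ii) for $T_\perp$, are exactly the quantitative content of the lemma, and you leave them open while correctly identifying them as the main obstacle. Worse, as stated they cannot be closed with absolute constants. The passage you need in task (i) --- from the matrix quantity $\norm{\U\V^\top(I-QQ^\top)}_F^2$ back up to the columnwise sum $T_\perp=\sum_j\norm{\U e_j}^2\norm{(I-QQ^\top)\V e_j}^2$ --- is false in general, because $\norm{\sum_j a_jb_j^\top}_F^2$ can be arbitrarily smaller than $\sum_j\norm{a_j}^2\norm{b_j}^2$ when the $a_j=\U e_j$ are far from orthogonal (take two nearly opposite columns); the same issue already breaks your claim that $T_\parallel$ reduces to a Frobenius-norm quantity in the range of $Q$. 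The missing idea is the reduction the paper makes at the outset: replace $\U$ by $\U R'$ with orthogonal columns, which is legitimate because $\U\U^\top$ (and the way the lemma is consumed in Corollary~\ref{cor:second}) is invariant under right rotation and $\U R'$ is again a stationary point. With orthogonal columns the columnwise sums \emph{equal} the corresponding matrix Frobenius norms, and the whole difficulty you flag disappears.

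The paper's actual argument is then purely algebraic --- it never invokes the first-order condition~\eqref{eq:first}, contrary to what your tasks (i)--(ii) anticipate --- and its decomposition is different from yours: for each $j$ it splits $\Uo R e_j$ along and against the \emph{single-column} direction $Q_j$ spanned by $\U e_j$ (not the full space $Q$), applies the rank-one case of Lemma~\ref{lem:supp1} (the $2(\sqrt2-1)$ inequality, valid because $e_j^\top\U^\top\Uo Re_j\ge0$) to the parallel piece, and controls the perpendicular piece by a trace expansion, Cauchy--Schwarz, and the inequality $\sum_{j}(e_j^\top \mat{T}^\top\mat{T}e_j)^2\le\norm{\mat{T}\mat{T}^\top}_F^2$, which is what produces the constants $\tfrac18$ and $\tfrac{34}{8}$. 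So the gap in your proposal is concrete: without the orthogonal-columns reduction your estimates reintroduce dependence on the column geometry of $\U$, and the machinery you hope will save you (stationarity plus PSD-ness of $\V^\top\U$) is not what closes it in the paper.
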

This Lemma bounds the distance in the factor space ($\|(\U -\Uo R)\U^\top\|_F^2$) with $\| \U\U^\top -\Uo{\Uo}^\top\|_F^2$ and $ \| (\U\U^\top -\Uo{\Uo}^\top) QQ^\top\|_F^2$. Combining this with the result from second order optimality (Corollary~\ref{cor:second}) shows  $\|\U\U^\top -\Uo{\Uo}^\top\|_F^2$  is bounded by a constant factor of $ \| (\U\U^\top -\Uo{\Uo}^\top) QQ^\top\|_F^2$. This implies $\|\Xo Q_{\perp} Q_{\perp}\|_F$ is bounded, opposite to what the first order condition implied (Lemma~\ref{lem:first}). The proof of the above lemma is in Section~\ref{sec:supp}. Hence from the above optimality conditions we get the proof of Theorem~\ref{thm:exact}. 
\begin{proof}[Proof of Theorem~\ref{thm:exact}]
Assuming $\U\U^\top \neq \Uo{\Uo}^\top$, from Lemmas~\ref{lem:first}, ~\ref{lem:combine} and Corollary~\ref{cor:second} we get, 
$$  \left(\frac{1-\delta}{2(1+\delta)} -\frac{1}{8} \right)\|\U\U^\top -\Uo{\Uo}^\top\|_F^2 \leq \frac{34}{8} \delta^2 \norm{(\U\U^\top - \Uo{\Uo}^\top)}_F^2.$$ If $\delta  \leq \frac{1}{5}$ the above inequality holds only if $\U\U^\top = \Uo{\Uo}^\top$.
\end{proof}


\section{Necessity of RIP}\label{sec:necessity}

We showed that there are no spurious local minima only under a
restricted isometry assumption.  A natural question is whether this is
necessary, or whether perhaps the problem \eqref{eq:prob2} never has
any spurious local minima, perhaps similarly to the non-convex PCA
problem $\min_{\U}\norm{\A-\U\U^\top}$.   

A good indication that this is not the case is that \eqref{eq:prob2}
is NP-hard, even in the noiseless case when $\y=\calA(\Xo)$ for
$\rank(\Xo)\leq k$~\cite{recht2010guaranteed} (if we don't require
RIP, we can have each $\A_i$ be non-zero on a single entry in which
case \eqref{eq:prob2} becomes a matrix completion problem, for which
hardness has been shown even under fairly favorable conditions
\cite{hardt2014computational}).  That is, we are unlikely to
have a poly-time algorithm that succeeds for any linear measurement
operator.  Although this doesn't formally preclude the possibility
that there are no spurious local minima, but it just takes a very long
time to find a local minima, this scenario seems somewhat unlikely.

To resolve the question, we present an explicit example of a
measurement operator $\calA$ and $\y=\calA(\Xo)$ (i.e.~$f(\Xo)=0$),
with $\rank(\Xo)=r$, for which \eqref{eq:prob1}, and so also
\eqref{eq:prob2}, have a non-global local minima.

{\it Example 1:} Let \removed{\begin{footnotesize}$$f(\X) = \left(
    \ip{\begin{bmatrix} 1 & 0 \\ 0 & 1 \end{bmatrix}} {
      \begin{bmatrix} X_{11} & X_{12} \\ X_{21} & X_{22}
      \end{bmatrix}} -1 \right)^2 + \left( \ip{\begin{bmatrix} 1 & 0
        \\ 0 & 0 \end{bmatrix}} { \begin{bmatrix} X_{11} & X_{12} \\
        X_{21} & X_{22} \end{bmatrix}} -1 \right)^2.
  $$\end{footnotesize} Then,} $f(\X) =(X_{11} +X_{22} -1 )^2 + (X_{11}
-1)^2$ and consider~\eqref{eq:prob1} with $r=1$ (i.e.~a rank-$1$
constraint).  For \begin{small}$\Xo=\begin{bmatrix} 1 & 0 \\ 0 & 0
  \end{bmatrix}$\end{small} we have $f(\Xo)=0$ and $\rank(\Xo)=1$. But
\begin{small}$\X=\begin{bmatrix} 0 & 0 \\ 0 & 1\end{bmatrix}$
\end{small} is a rank $1$ local minimum with $f(\X) = 1$.

We can be extended the construction to any rank $r$ by simply adding
$\sum_{i=3}^{r+2} (X_{ii} -1)^2$ to the objective, and padding both
the global and local minimum with a diagonal beneath the leading
$2\times 2$ block.

In Example 1, we had a rank-$r$ problem, with a rank-$r$ exact
solution, and a rank-$r$ local minima.  Another question we can ask is
what happens if we allow a larger rank than the rank of the optimal
solution.  That is, if we have $f(\Xo)=0$ with low $\rank(\Xo)$, even
$\rank(\Xo)=1$, but consider \eqref{eq:prob1} or \eqref{eq:prob2} with
a high $r$.  Could we still have non-global local minima?  The answer
is yes...

{\it Example 2:} Let \remove{\begin{footnotesize} \begin{align*} f(\X)
      = \left( \ip{\begin{bmatrix} 1 & 0 & 0 \\0 & 1 & 0 \\ 0 & 0 & 1
          \end{bmatrix}} { \begin{bmatrix} X_{11} & X_{12} &X_{13} \\
            X_{21} & X_{22} & X_{23} \\ X_{31} & X_{32} & X_{33}
          \end{bmatrix}} -1 \right)^2 &+ \left( \ip{\begin{bmatrix} 1
            & 0 & 0 \\0 & 0 & 0 \\ 0 & 0 & 0 \end{bmatrix}} {
          \begin{bmatrix} X_{11} & X_{12} &X_{13} \\ X_{21} & X_{22} &
            X_{23} \\ X_{31} & X_{32} & X_{33} \end{bmatrix}} -1
      \right)^2 \\ &\quad \quad \quad+ \left( \ip{\begin{bmatrix} 0 &
            0 & 0 \\0 & 1 & 0 \\ 0 & 0 & -1 \end{bmatrix}} {
          \begin{bmatrix} X_{11} & X_{12} &X_{13} \\ X_{21} & X_{22} &
            X_{23} \\ X_{31} & X_{32} & X_{33} \end{bmatrix}}
      \right)^2 . \end{align*} \end{footnotesize} Then,} $f(\X)
  =(X_{11} +X_{22} +X_{33} -1 )^2 + (X_{11} -1)^2 + (X_{22}
  -X_{33})^2$ and consider the problem~\eqref{eq:prob1} with a rank
  $r=2$ constraint. We can verify that \begin{footnotesize}$\Xo=\begin{bmatrix} 1 & 0 &
      0\\ 0 & 0 & 0\\ 0 & 0 & 0 \end{bmatrix}$\end{footnotesize} is a
  rank=$1$ global minimum with $f(\Xo)=0$, but 
  \begin{footnotesize}$\X=\begin{bmatrix} 0 & 0 & 0\\ 0 &
      \nicefrac{1}{2} & 0\\ 0 & 0 & \nicefrac{1}{2}
    \end{bmatrix}$\end{footnotesize} is a local minimum with $f(\X) =
  1$.  Also for an arbitrary large rank constraint $r>1$ (taking $r$ to be
  odd for simplicity), extend the objective to $f(\X) =(X_{11} -1)^2 +
  \sum_{i=1}^{(r-1)/2} \left[ (X_{11} +X_{2i, 2i} +X_{(2i+1) , (2i+1)}
    -1 )^2 \right.$ $\left. + (X_{2i, 2i} -X_{(2i+1), (2i+1)})^2 \right]$.  We still
  have a rank-$1$ global minimum $\Xo$ with a single non-zero entry
  $\Xo_{11}=1$, while $\X=(I-\Xo)/2$ is a local minimum with $f(\X)=1$.

\section{Conclusion}
We established that under conditions similar to those required for
convex relaxation recovery guarantees, the non-convex
formulation of matrix sensing~\eqref{eq:prob2} does not exhibit any spurious local
minima (or, in the noisy and approximate settings, at least not
outside some small radius around a global minima), and we can obtain
theoretical guarantees on the success of optimizing it using SGD from
{\em random initialization}.  This matches the methods frequently used
in practice, and can explain their success.  This guarantee is very
different in nature from other recent work on non-convex optimization
for low-rank problems, which relied heavily on initialization to get
close to the global optimum, and on local search just for the final
local convergence to the global optimum.  We believe this is the first
result, together with the parallel work of \citet{ge2016matrix}, on the global
convergence of local search for common rank-constrained problems that
are worst-case hard.

Our result suggests that SVD initialization is not necessary for global convergence, and random initialization would succeed under similar conditions (in fact, our conditions are even weaker than in previous work that used SVD initialization).  To investigate empirically whether SVD initialization is indeed helpful for ensuring global convergence, in Figure \ref{fig2_main} we compare recovery probability of random rank-$k$ matrices for random and SVD initialization---there is no significant difference between the two.

Beyond the implications for matrix sensing, we are hoping these
type of results could be a first step and serve as a model for
understanding local search in deep networks.  Matrix factorization,
such as in \eqref{eq:prob2}, is a depth-two neural network with linear
transfer---an extremely simple network, but already non-convex and
arguably the most complicated network we have a good theoretical
understanding of.  Deep networks are also hard to optimize in the
worst case, but local search seems to do very well in practice.  Our
ultimate goal is to use the study of matrix recovery as a guide in
understating the conditions that enable efficient training of deep
networks.

\subsubsection*{Acknowledgements}
Authors would like to thank Afonso Bandeira for discussions, Jason Lee and Tengyu Ma for sharing and discussing their work.  This research was supported in part by an NSF RI-AF award and by Intel ICRI-CI.

\bibliographystyle{abbrvnat}
{ \bibliography{low_rank_recovery}}
\clearpage
\appendix
\section{Numerical Simulations}\label{sec:simulations}

In this section we present simulation results for performance of gradient descent over $f(\U)$. We consider measurements $y_i =\ip{\A_i}{\Xo}$, where $\A_i$ are i.i.d Gaussian with each entry distributed as $\mathcal{N}(0,\nicefrac{1}{m})$. $\Xo$ is a $100 \times 100$ rank $r$ random p.s.d matrix with $\|\Xo\|_F=1$. $r$ is varied from 1 to 20 in the experiments.

We consider both standard gradient descent and noisy gradient descent~\eqref{eq:grad_updates} with step size $\frac{1}{\|\U\|_2}$. We add noise of magnitude $1e-4$ for the noisy gradient updates. Each method is run until convergence (max of 200 iterations). Let the output of gradient descent be $\widehat{\U}$. A run of this experiment is considered success if the final error $\|\widehat{\U} \widehat{\U}^\top -\Xo\|_F \leq 1e-2$. Each experiment is repeated for 20 times and average probability of success is computed.

We repeat the above procedure starting from both random initialization and SVD initialization. For SVD initialization, the initial point is set to be the rank $r$ approximation of $\sum_{i=1}^m y_i \A_i$ as suggested by~\citet{jain2013low}. In figure~\ref{fig2} we have the plots for the cases discussed above. All of them have phase transition around number of samples $m=2\cdot n\cdot r$. This is in agreement with the results in Section~\ref{sec:main}. $f(\U)$ has no local minima once $m \geq 2\cdot n\cdot r$ and random initialization has same performance as SVD initialization.

In figure~\ref{fig3}, the left two plots show error $\nicefrac{\|\widehat{\U}\widehat{\U}^\top -\Xo\|_F}{\|\Xo\|_F}$ behaves with varying rank and number of samples for random and SVD initializations. The rightmost plot shows the phase transition for rank 10 case for all the methods. Again we notice no significant difference between these methods.

\begin{figure}[!ht]
	\begin{center}
		\includegraphics[width=0.4\textwidth]{figures/exact-init_random_tel.pdf} 
		\hspace{0.1in}
		\includegraphics[width=0.4\textwidth]{figures/exact-init_svd_tel.pdf}\\
	    \includegraphics[width=0.4\textwidth]{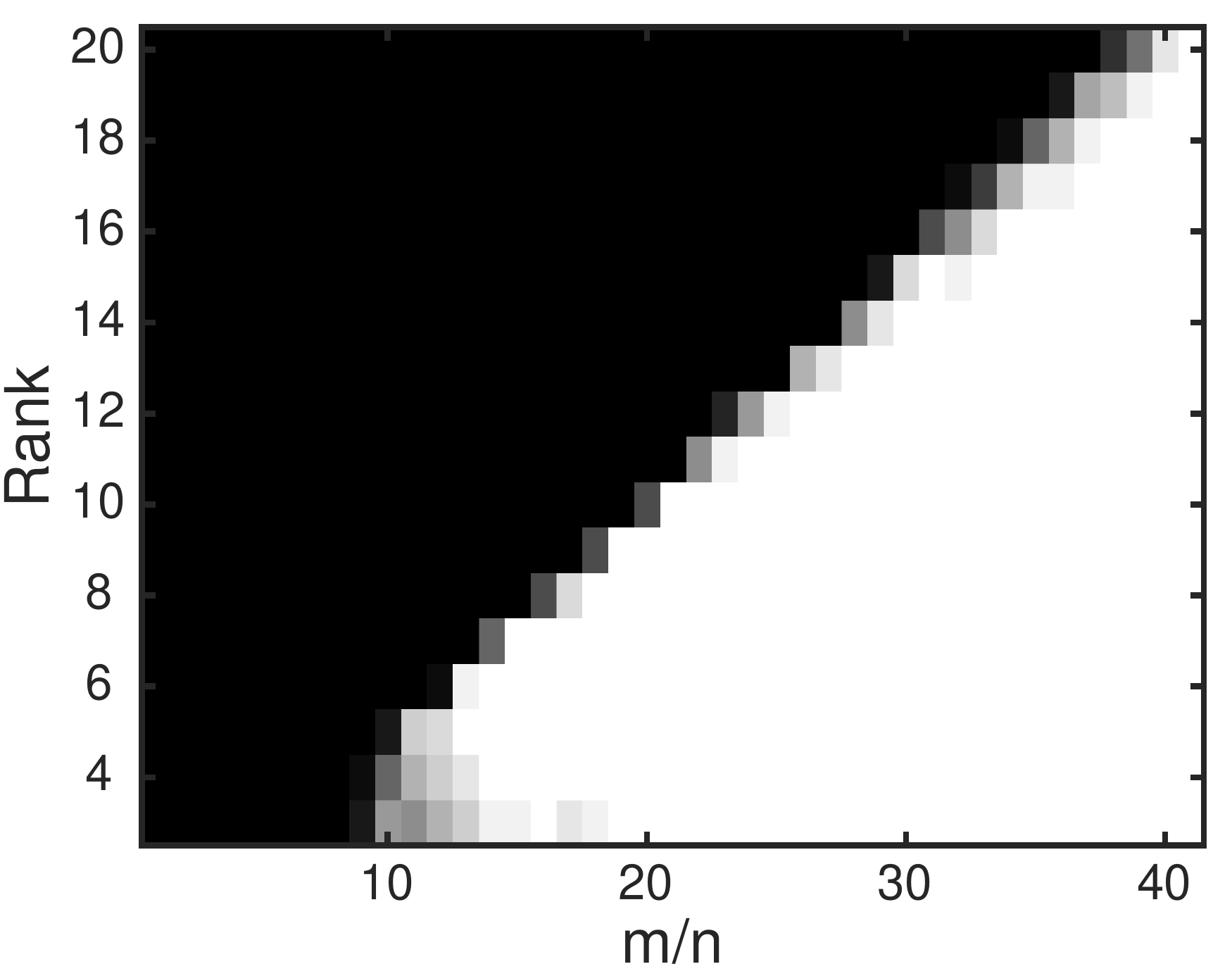} 
	    \hspace{0.1in}
		\includegraphics[width=0.4\textwidth]{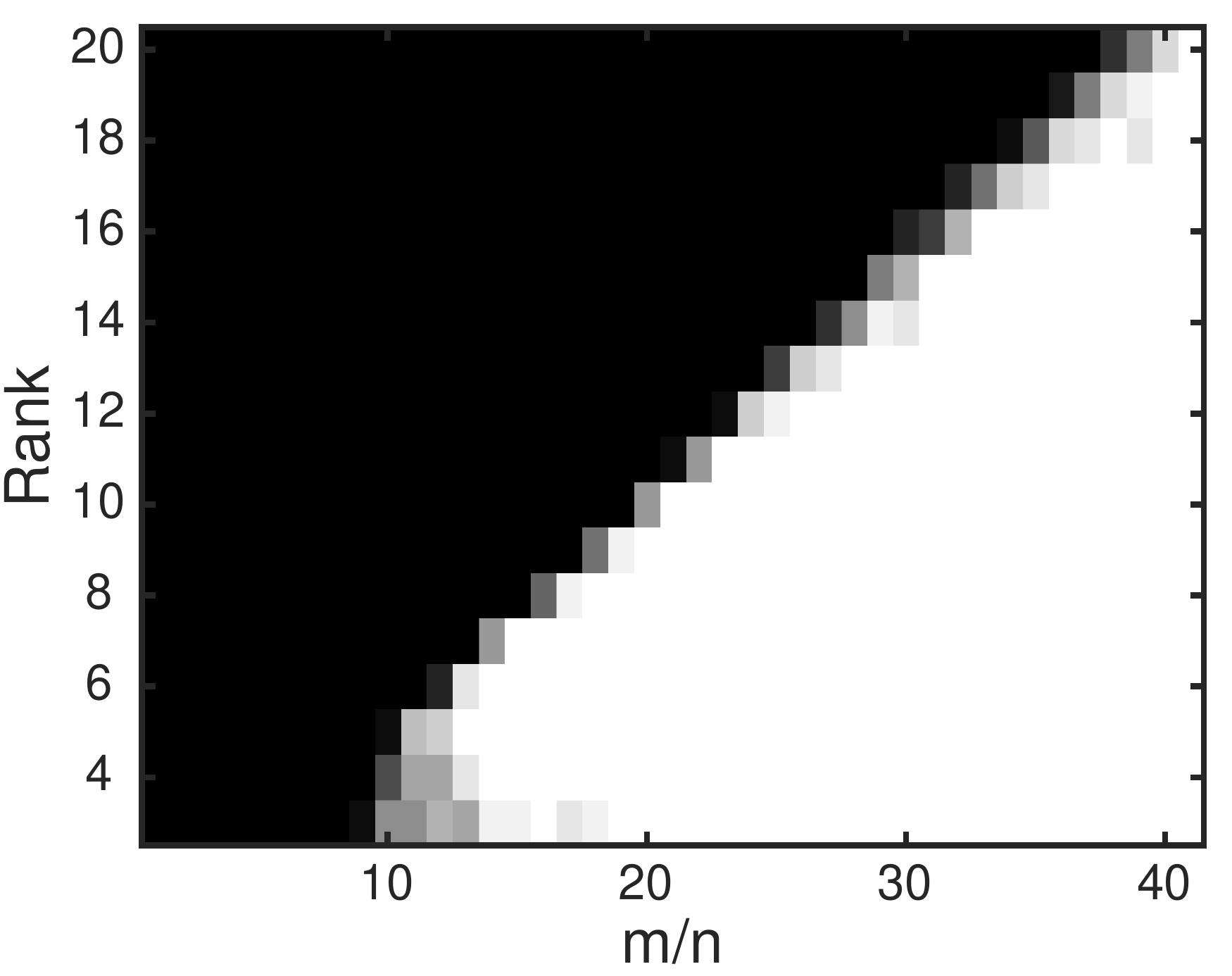}
	\end{center}
	\caption{This figure plots the success probability for increasing number of samples $m$ and various values of rank $r$. The plots on the top are for gradient descent, left for random initialization and the right for SVD initialization. Similarly the bottom plots are for the noisy gradient descent. We notice no significant difference between all these settings. They all have phase transition around $m =2\cdot n\cdot r$.}
{\label{fig2}}
\end{figure}
\vspace{-0.1in}

\begin{figure}[!ht]
	\begin{center}
		\includegraphics[width=0.3\textwidth]{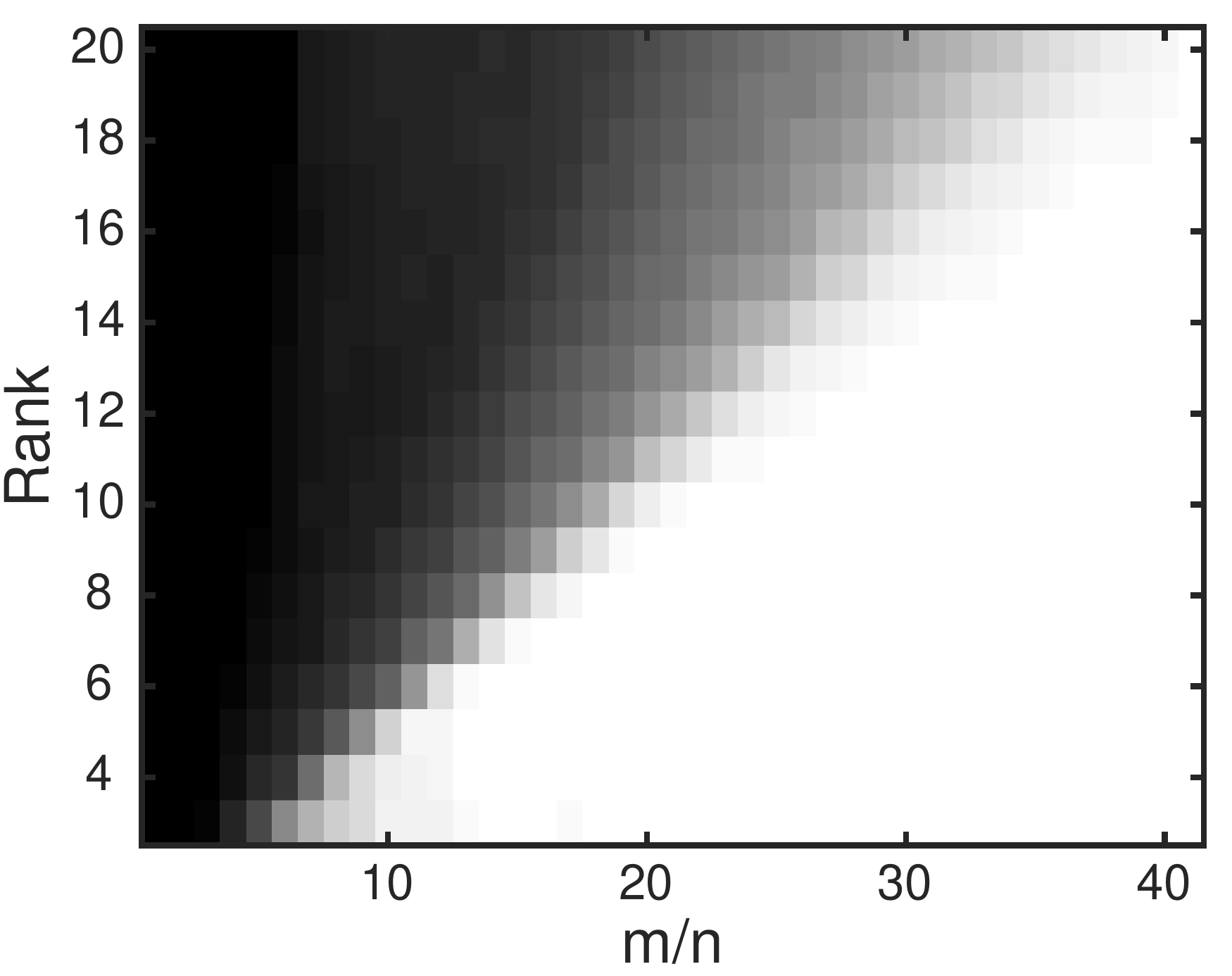} \hspace{0.05in}
		\includegraphics[width=0.3\textwidth]{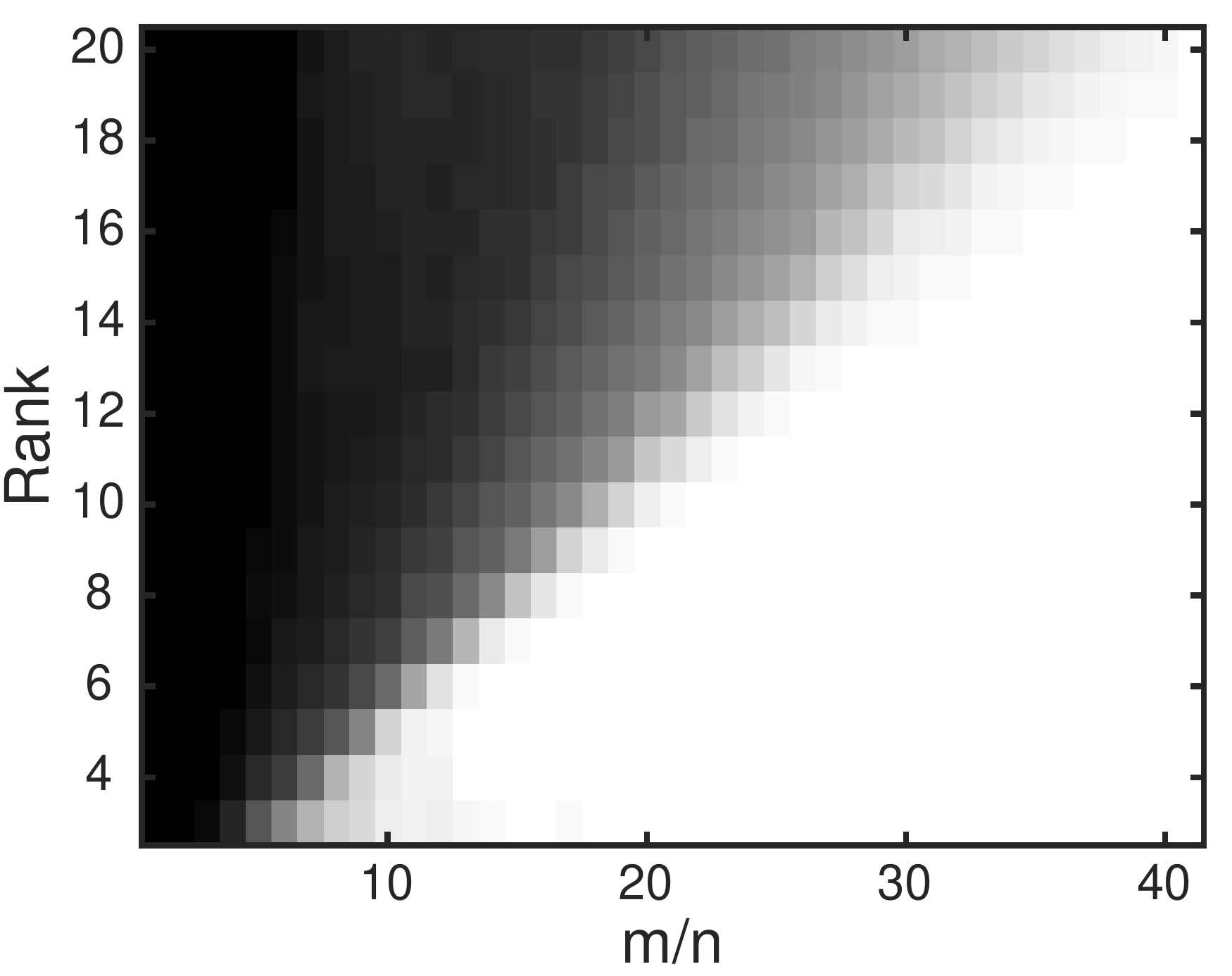}
		\hspace{0.05in}
		\includegraphics[width=0.3\textwidth]{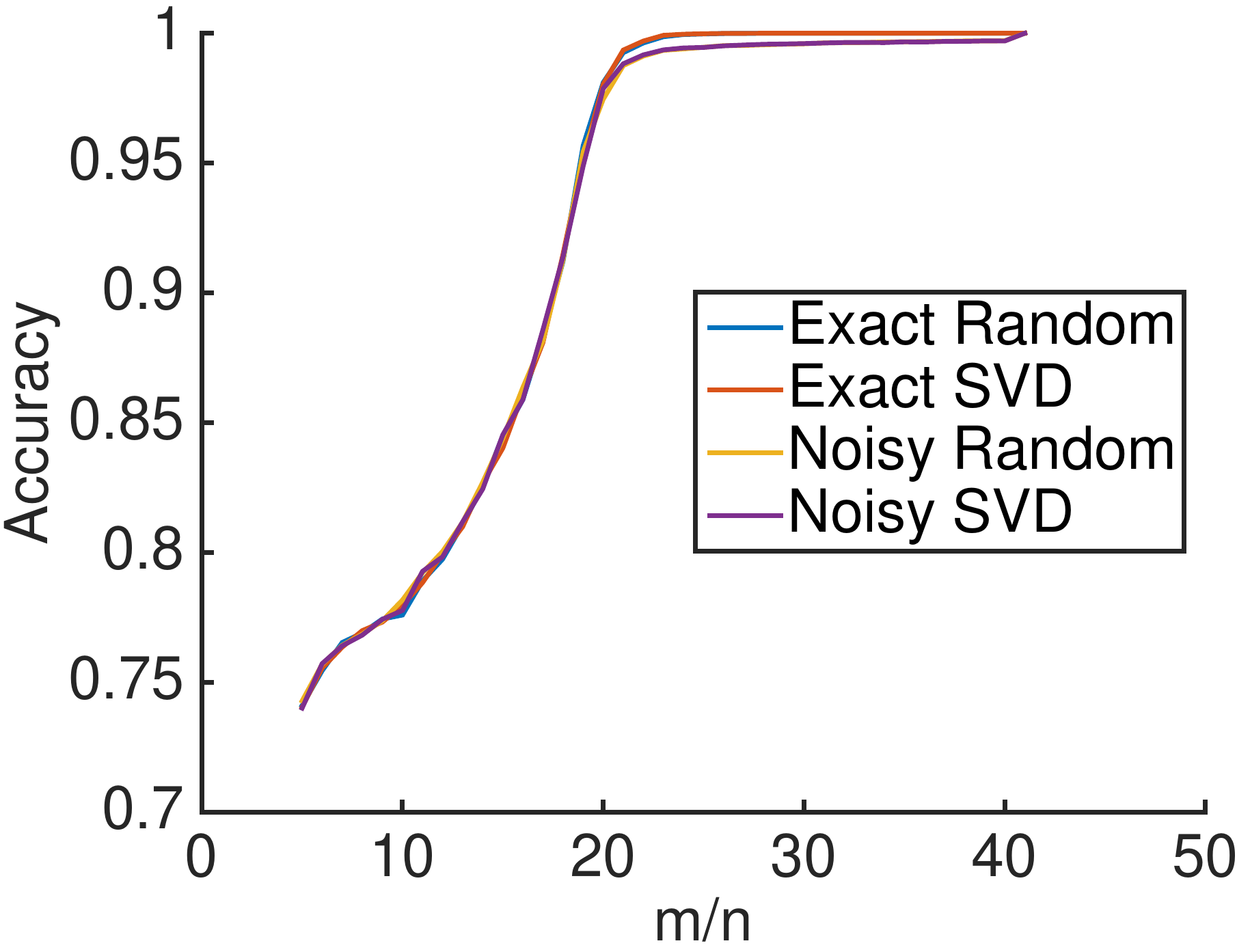}
	\end{center}
	\caption{This figure plots the error $\nicefrac{\|\widehat{U} \widehat{\U}^\top -\Xo\|_F}{\|\Xo\|_F}$ for increasing number of samples $m$. The left plot is for gradient descent with random initialization, center plot corresponds to SVD initialization. Again we notice no difference in error for these two settings. The rightmost figure shows phase transition of low rank recovery for all the different settings when $\Xo$ is rank 10.}
{\label{fig3}}
\end{figure}
\vspace{-0.1in}

\section{Proof for the Noisy Case}\label{sec:proof__noise}

In this section we present the proof characterizing the local minima of problem~\eqref{eq:prob2}. Recall $\vec{y} =\calA(\Xo) + \vec{w}$, where $\Xo$ is a rank-$r$ matrix and $\vec{w}$  is i.i.d.~$\calN(0,\sigma_w^2).$

We consider local optimum that satisfies first and second order optimality conditions of problem~\eqref{eq:prob2}. In particular $\U$ satisfies $\nabla f(\U) =0 $ and $z^\top \nabla^2 f(\U) z \geq 0$ for any $z \in \R^{n\cdot r}$. Now we will see how these two conditions constrain the error $\U\U^\top -\Uo {\Uo}^\top$.
 
 \subsection{First order optimality}

First we will consider the first order condition, $\nabla f(\U) =0 $. For any stationary point $\U$ this implies  
\begin{equation}\label{eq:first_noise}
\sum_i \ip{\A_i}{\U\U^\top-\Uo{\Uo}^\top}\A_i \U=\sum_{i=1}^m w_i \A_i \U.
\end{equation}
Now using the isometry property of $\A_i$ gives us the following result.

\begin{lem}\label{lem:first_noise}[First order condition]
For any first order stationary point $\U$ of $f(\U)$, and $\calA$ satisfying the $(2r, \delta)$-RIP~\eqref{eq:rip}, the following holds:
\begin{equation*}
\|(\U\U^\top - \Uo{\Uo}^\top)QQ^\top\|_F \leq \delta \norm{\U\U^\top - \Uo{\Uo}^\top}_F + 2\sqrt{\frac{(1+\delta) \log(n) }{m}} \sigma_w, 
\end{equation*}
w.p. $\geq 1-\frac{1}{n^2}$, where $Q$ is an orthonormal matrix that spans the column space of $\U$.
\end{lem}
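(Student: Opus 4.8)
The plan is to re-run the argument behind Lemma~\ref{lem:first} essentially line by line, the only new ingredient being the noise term on the right-hand side of the stationarity condition~\eqref{eq:first_noise}, and then to control that term by Gaussian concentration whose variance is tamed through RIP. Write $\U=QR$ with $Q$ an orthonormal basis of the column space of $\U$, set $M=\U\U^\top-\Uo{\Uo}^\top$ (which has rank at most $2r$), and test~\eqref{eq:first_noise} against an arbitrary direction $\ZZ\,QR^{-\top}$. Dividing by $m$, this gives for every $\ZZ$ the identity
\begin{equation*}
\frac{1}{m}\sum_{i=1}^m \ip{\A_i}{M}\,\ip{\A_i}{QQ^\top\ZZ} \;=\; \frac{1}{m}\sum_{i=1}^m w_i\,\ip{\A_i}{QQ^\top\ZZ}.
\end{equation*}
Applying Lemma~\ref{cor:rip1} to the left-hand side replaces it by $\ip{M}{QQ^\top\ZZ}=\ip{QQ^\top M}{\ZZ}$ up to an additive error of at most $\delta\norm{M}_F\norm{QQ^\top\ZZ}_F\le\delta\norm{M}_F\norm{\ZZ}_F$, exactly as in the noiseless proof; the right-hand side is the only genuinely new object.

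Next I would pick the extremal test direction $\ZZ=QQ^\top M/\norm{QQ^\top M}_F$, so that $\norm{\ZZ}_F=1$ and the left-hand side becomes $\norm{QQ^\top M}_F=\norm{MQQ^\top}_F$. Writing $\Gamma=QQ^\top M/\norm{QQ^\top M}_F$ --- a unit-Frobenius matrix of rank at most $2r$ whose column space is that of $\U$ --- the whole lemma reduces to the bound
\begin{equation*}
\frac{1}{m}\,\ip{\,\textstyle\sum_{i=1}^m w_i\A_i\,}{\Gamma} \;\le\; 2\sqrt{\frac{(1+\delta)\log(n)}{m}}\,\sigma_w
\end{equation*}
holding with probability at least $1-n^{-2}$. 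For a \emph{fixed} admissible $\Gamma$ this is routine: $\ip{\sum_i w_i\A_i}{\Gamma}=\sum_i w_i\,\ip{\A_i}{\Gamma}$ is a centred Gaussian whose variance is $\sigma_w^2\sum_i\ip{\A_i}{\Gamma}^2\le(1+\delta)\,m\,\sigma_w^2$ by the upper RIP inequality~\eqref{eq:rip} (since $\Gamma$ has rank $\le 2r$ and unit Frobenius norm), so a sub-Gaussian tail bound at level $\log(1/p)=2\log n$ reproduces exactly the stated constant with failure probability $O(n^{-2})$.

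The hard part is that $\Gamma$ is \emph{not} fixed: it is built from the column space of a stationary point $\U$ that itself depends on $\vec w$, and the probability in the lemma does not depend on which stationary point we consider, so the estimate above must hold uniformly over all directions that can conceivably arise. I would address this by a covering argument over the relevant set --- unit-Frobenius matrices of rank at most $2r$ whose column space ranges over the $r$-dimensional subspaces of $\R^n$ --- using that the RIP variance bound $\sum_i\ip{\A_i}{\Gamma}^2\le(1+\delta)m$ is a deterministic property of $\calA$ and therefore holds simultaneously at every net point, taking a union bound of the Gaussian tail over the net, and passing back to the supremum via the Lipschitz estimate $|\ip{\sum_i w_i\A_i}{\Gamma}-\ip{\sum_i w_i\A_i}{\Gamma'}|\le\norm{\sum_i w_i\A_i}_F\norm{\Gamma-\Gamma'}_F$ with $\norm{\sum_i w_i\A_i}_F$ crudely controlled through concentration of $\norm{\vec w}$. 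The delicate point --- and where I expect the real work to be --- is calibrating this uniformization so that only a $\log(n)$ factor, rather than a full dimensional factor, survives in the final bound; this should exploit the extra structure of $\Gamma$ (half of it is pinned to the column spaces of $\U$ and of $\Uo$) and may require a chaining argument rather than a single-scale net. Once the noise term is bounded in this way, substituting $\delta=\delta_{2r}$ and collecting the failure probabilities yields Lemma~\ref{lem:first_noise}.
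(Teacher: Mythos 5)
Your treatment of the deterministic part coincides with the paper's: write $\U=QR$, test the stationarity condition \eqref{eq:first_noise} against $\ZZ QR^{-\top}$, use Lemma~\ref{cor:rip1} to extract the $\delta\norm{\U\U^\top-\Uo{\Uo}^\top}_F$ term, and pass to the extremal direction (equivalently, the supremum over $\norm{\ZZ}_F\le 1$). The divergence is in the noise term, and there your proposal has a genuine gap. The paper handles that term in one line: it applies the fixed-vector Gaussian tail bound (Lemma~\ref{lem:gauss}) to $\sum_i w_i x_i$ with $x_i=\ip{\A_i}{QQ^\top\ZZ^\top}$, bounding $\norm{\vec{x}}\le\sqrt{(1+\delta)m}\,\norm{\ZZ}_F$ by RIP --- exactly your ``fixed $\Gamma$'' computation --- and stops. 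You decline to stop there because $Q$ depends on $\vec{w}$, and instead propose to make the bound uniform over all unit-Frobenius, rank-$\le 2r$ directions via a net or chaining argument; but you never carry this out, and it cannot be carried out at the claimed scale. For that class the supremum of $\frac{1}{m}\ip{\sum_i w_i\A_i}{\Gamma}$ is governed by the top $2r$ singular values of $\sum_i w_i\A_i$ and is of order $\sigma_w\sqrt{nr/m}$ (already for i.i.d.\ Gaussian $\A_i$), not $\sigma_w\sqrt{\log(n)/m}$; the column-space structure you invoke does not bring the metric entropy below order $nr$, so no single-scale net or chaining over that class can recover the stated constant $2\sqrt{(1+\delta)\log(n)/m}\,\sigma_w$.

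So as written, the key quantitative step of the lemma --- the noise bound with only a $\sqrt{\log n}$ factor --- is missing, and the uniformization route you sketch for it would only yield a dimension-dependent bound, i.e.\ a strictly weaker lemma. You have in fact put your finger on a subtlety that the paper's own proof glosses over (Lemma~\ref{lem:gauss} is applied to a direction that is not statistically independent of $\vec{w}$), but identifying the issue is not the same as resolving it: to match the stated bound you must either apply the pointwise Gaussian estimate as the paper does, or find an argument exploiting far more than the rank and column-space structure of the direction. As submitted, the proposal does not prove the lemma.
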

This lemma states that any stationary point of $f(\U)$ is close to a global optimum $\Uo$ in the subspace spanned by columns of  $\U$. Notice that the error along the orthogonal direction $\|\Xo Q_{\perp}Q_{\perp}^\top\|_F$ can still be large making the distance between $\X$ and $\Xo$ arbitrarily big.  

\begin{proof}[Proof of Lemma~\ref{lem:first_noise}]
Let $\U=QR$, for some orthonormal $Q$. Consider any matrix of the form $\ZZ Q{R^{-1}}^\top$. The first order optimality condition then implies,
$$\sum_{i=1}^m\ip{\A_i}{\U\U^\top - \Uo{\Uo}^\top}\ip{\A_i}{\U R^{-1}Q^\top \ZZ^\top}=\sum_{i=1}^m w_i \A_i \U R^{-1}Q^\top \ZZ^\top .$$
The above equation together with Restricted Isometry Property(equation~\eqref{eq:rip1}) gives us the following inequality:
$$
\abs{ \ip{\U\U^\top - \Uo{\Uo}^\top}{QQ^\top \ZZ^\top} } \leq \delta \norm{\U\U^\top - \Uo{\Uo}^\top}_F \norm{QQ^\top \ZZ^\top}_F +2 \sqrt{\frac{(1+\delta) \log(n) }{m}} \sigma_w \|\ZZ^\top\|_F,
$$ 
by Cauchy Schwarz inequality and Lemma~\ref{lem:gauss}. Note that for any matrix $A$, $\ip{\A}{QQ^\top \ZZ}=\ip{\A QQ^\top }{\ZZ}$. Furthermore, for any matrix $A$, $\sup_{\{\ZZ:\|\ZZ\|_F \leq 1\}} \ip{\A}{\ZZ} =\|\A\|_F$. Hence the above inequality implies the lemma statement.
\end{proof}

\subsection{Second order optimality}

We will now consider the second order condition to show that the error along $Q_{\perp} Q_{\perp}^\top$ is indeed bounded well. Let $\nabla^2f(\U)$ be the hessian of the objective function. Note that this is an $n\cdot r \times n \cdot r$ matrix. Fortunately for our result we need to only evaluate the Hessian along the direction $\texttt{vec}(U -\Uo R)$ for some orthonormal matrix $R$. 

\begin{lem}\label{lem:hessian_noise}[Hessian computation]
Let $\U$ be a first order critical point of $f(\U)$. Then for any $r\times r$ orthonormal matrix $R$ and $\Delta=\U-\Uo R$,
\begin{multline*}
\sum_{j=1}^r \texttt{vec} \left(\Delta_j \right)^\top \left[\nabla^2f(\U)\right] \texttt{vec}\left(\Delta_j \right) \\=  \sum_{i=1}^m \left( \sum_{j=1}^r 4\ip{\A_i}{\U\Delta_j^\top}^2 -2\ip{\A_i}{\U\U^\top -\Uo{\Uo}^\top}^2 - 2 w_i \ip{\A_i}{\X -\Xo} \right),
\end{multline*}
\end{lem}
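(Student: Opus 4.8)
The plan is to follow the proof of Lemma~\ref{lem:hessian} essentially verbatim, the only change being that in the noisy setting the residual of the $i$-th measurement at $\U$ is $\ip{\A_i}{\U\U^\top-\Uo{\Uo}^\top}-w_i$ rather than $\ip{\A_i}{\U\U^\top-\Uo{\Uo}^\top}$, since $y_i=\ip{\A_i}{\Uo{\Uo}^\top}+w_i$. Concretely, I would first take the directional second derivative of $f$ along an arbitrary direction $\ZZ$, exactly as in the noiseless proof (using that only the symmetric part of each $\A_i$ matters), to obtain
$$\texttt{vec}(\ZZ)^\top\big[\nabla^2 f(\U)\big]\texttt{vec}(\ZZ)=2\sum_{i=1}^m\Big[2\ip{\A_i}{\U\ZZ^\top}^2+\big(\ip{\A_i}{\U\U^\top-\Uo{\Uo}^\top}-w_i\big)\ip{\A_i}{\ZZ\ZZ^\top}\Big],$$
i.e.\ the noiseless expression with $\ip{\A_i}{\U\U^\top-\Uo{\Uo}^\top}$ replaced by the full residual. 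I would then substitute $\ZZ=\Delta_j=(\U-\Uo R)e_je_j^\top$, for which $\ip{\A_i}{\U\ZZ^\top}=\ip{\A_i}{\U\Delta_j^\top}$ and $\ZZ\ZZ^\top=(\U e_j-\Uo R e_j)(\U e_j-\Uo R e_j)^\top$, which expands into a $\U e_je_j^\top\U^\top$ term, a $\Uo R e_je_j^\top R^\top{\Uo}^\top$ term, and two cross terms, each of the latter three carrying the column $\U e_j$ as a factor.

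The key step is the first-order condition in its noisy form \eqref{eq:first_noise}: contracting $\sum_i\ip{\A_i}{\U\U^\top-\Uo{\Uo}^\top}\A_i\U=\sum_i w_i\A_i\U$ against an arbitrary $n\times r$ matrix $M$ yields $\sum_i\big(\ip{\A_i}{\U\U^\top-\Uo{\Uo}^\top}-w_i\big)\ip{\A_i}{\U M^\top}=0$. Applying this with $M=\U e_je_j^\top$ and with $M=\Uo R e_je_j^\top$ annihilates, inside $\sum_i(\text{residual}_i)\ip{\A_i}{\Delta_j\Delta_j^\top}$, the $\U e_je_j^\top\U^\top$ term and both cross terms, and then lets me re-introduce a vanishing copy of $\U e_je_j^\top\U^\top$ with the opposite sign; this is exactly steps $(i)$ and $(ii)$ of the noiseless proof, neither of which used $w_i=0$. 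Summing over $j$ and using $\sum_j e_je_j^\top=I_r$ (and $\sum_j R e_je_j^\top R^\top=I_r$, since $R$ is orthonormal) collapses $\sum_j\big(\U e_je_j^\top\U^\top-\Uo e_je_j^\top{\Uo}^\top\big)$ to $\U\U^\top-\Uo{\Uo}^\top$, so the residual-times-quadratic contribution becomes $-2\sum_i\big(\ip{\A_i}{\U\U^\top-\Uo{\Uo}^\top}-w_i\big)\ip{\A_i}{\U\U^\top-\Uo{\Uo}^\top}$. Expanding this product isolates the term $-2\ip{\A_i}{\U\U^\top-\Uo{\Uo}^\top}^2$ together with a noise term proportional to $w_i\ip{\A_i}{\X-\Xo}$, which combined with $\sum_j 4\ip{\A_i}{\U\Delta_j^\top}^2$ gives the stated identity.

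This is essentially a bookkeeping exercise rather than a conceptual one: every manipulation parallels Lemma~\ref{lem:hessian}, with a single additive $-w_i$ riding along inside the residual. The one point requiring care is to invoke stationarity in its noisy form \eqref{eq:first_noise}, so that it is the \emph{full} residual $\ip{\A_i}{\U\U^\top-\Uo{\Uo}^\top}-w_i$ — not merely $\ip{\A_i}{\U\U^\top-\Uo{\Uo}^\top}$ — that is eliminated against the columns of $\U$; doing otherwise would leave uncancelled $w_i$ terms. I would also double-check the overall sign of the final $w_i\ip{\A_i}{\X-\Xo}$ term, as it is the only coefficient whose form differs from the noiseless computation.
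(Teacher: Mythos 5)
Your proposal is correct and is essentially the paper's own proof: compute the directional second derivative with the full residual $\ip{\A_i}{\U\U^\top-\Uo{\Uo}^\top}-w_i$, substitute $\ZZ=\Delta_j$, use the noisy first-order condition \eqref{eq:first_noise} to annihilate every piece of $\Delta_j\Delta_j^\top$ carrying a column of $\U$ and to re-insert $\U e_je_j^\top\U^\top$ at no cost, then sum over $j$. On the one point you flagged: carrying the residual through carefully gives $-2\sum_i\bigl(\ip{\A_i}{\X-\Xo}-w_i\bigr)\ip{\A_i}{\X-\Xo}$, so the noise term actually enters as $+2w_i\ip{\A_i}{\X-\Xo}$; the $-2w_i\ip{\A_i}{\X-\Xo}$ in the statement (and in the last display of the paper's proof) is a benign sign slip, immaterial downstream because Corollary~\ref{cor:second_noise} only uses the magnitude bound on $\abs{\sum_i w_i\ip{\A_i}{\X-\Xo}}$ supplied by Lemma~\ref{lem:gauss}.
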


\begin{proof}[Proof of Lemma~\ref{lem:hessian_noise}]
For any matrix $\ZZ$, taking directional second derivative of the function $f(\U)$ with respect to $\ZZ$ we get:
\begin{align*}
&\texttt{vec} \left(\ZZ\right)^\top \left[\nabla^2f(\U)\right] \texttt{vec}\left(\ZZ\right) = \texttt{vec}\left(\ZZ\right)^\top \lim_{t\rightarrow 0} \left[\frac{\nabla f\left(\U+t(\ZZ)\right)-\nabla f(\U)}{t}\right]\\
&= 2\sum_{i=1}^m \bigg[2\ip{\A_i}{\U\ZZ^\top}^2  + \left(\ip{\A_i}{\U\U^\top - \Uo{\Uo}^\top} -w_i\right)\ip{\A_i}{\ZZ\ZZ^\top} \bigg]
\end{align*}

Setting $\ZZ=\Delta_j=(\U-\Uo R)e_j e_j^\top$ and using the first order optimality condition on $\U$, we get,
\begin{align*}
&\texttt{vec} \left((\U-\Uo R)e_j e_j^\top \right)^\top \left[\nabla^2f(\U)\right] \texttt{vec}\left((\U-\Uo R)e_j e_j^\top\right)\\
&=  \sum_{i=1}^m 4\ip{\A_i}{\U \Delta_j^\top}^2 +2(\ip{\A_i}{\U{\U}^\top -\Uo {\Uo}^\top} -w_i )\ip{\A_i}{\Delta_j \Delta_j^\top}  \\
&=  \sum_{i=1}^m 4\ip{\A_i}{\U e_j e_j^\top \Delta_j^\top }^2 +2(\ip{\A_i}{\U\U^\top-\Uo {\Uo}^\top} -w_i)\ip{\A_i}{\Uo e_j e_j^\top (\Uo e_j e_j^\top)^\top}\\
&=\sum_{i=1}^m 4\ip{\A_i}{\U e_j e_j^\top \Delta_j^\top}^2 -2\ip{\A_i}{\U\U^\top-\Uo {\Uo}^\top}\ip{\A_i}{\U e_j e_j^\top \U^\top - \Uo e_j e_j^\top {\Uo}^\top} \\&\qquad \qquad -2w_i\ip{\A_i}{\U e_j e_j^\top \U^\top - \Uo e_j e_j^\top {\Uo}^\top}.\numberthis \label{eq:hessian_noise}
\end{align*}

where the last equality is again by the first order optimality condition~\eqref{eq:first_noise}.
\end{proof}
Hence from second order optimality of $U$ we get,
\begin{cor}\label{cor:second_noise}[Second order optimality]
Let $\U$ be a local minimum of $f(\U)$ . For any  $r \times r$ orthonormal matrix $R$, w.p. $\geq 1-\frac{1}{n^2}$,
\begin{align*}
 \frac{1}{2} \sum_{i=1}^m\ip{\A_i}{\U\U^\top -\Uo{\Uo}^\top}^2 &\leq  \sum_{i=1}^m \sum_{j=1}^r \ip{\A_i}{\U\Delta_j^\top}^2  + \sqrt{ \log(n)} \sigma_w \|\calA(\X -\Xo)\|_2  \\
&\leq   \sum_{i=1}^m \sum_{j=1}^r \ip{\A_i}{\U\Delta_j^\top}^2  + 5\log(n) \sigma_w^2 + \frac{1}{20} \sum_{i=1}^m \ip{\A_i}{\X -\Xo}^2   \label{eq:second_noise}
\end{align*}
Further for $\calA$ satisfying $(2r, \delta)$ -RIP (equation~\eqref{eq:rip}) we have,
\begin{equation} \label{eq:second1_noise}
\frac{1-\delta}{2(1+\delta)}\|\U\U^\top -\Uo{\Uo}^\top\|_F^2 \leq \sum_{j=1}^r \|\U\Delta_j^\top\|_F^2  + \frac{1}{20}\|\X -\Xo\|_F^2+\frac{5\log(n)}{m(1+\delta)} \sigma_w^2. 
\end{equation}
\end{cor}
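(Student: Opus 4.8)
The plan is to insert the second-order necessary condition at a local minimum into the Hessian identity of Lemma~\ref{lem:hessian_noise}, then strip off the noise contribution with a Gaussian concentration bound and translate back from measurements to Frobenius norms using the RIP. Since $\U$ is a local minimum, $\nabla^2 f(\U)\succeq 0$, so in particular $\sum_{j=1}^r \texttt{vec}(\Delta_j)^\top\nabla^2 f(\U)\,\texttt{vec}(\Delta_j)\geq 0$ (a sum of $r$ nonnegative quadratic forms) for $\Delta_j=\Delta e_je_j^\top$, $\Delta=\U-\Uo R$, and any $r\times r$ orthonormal $R$. As $\U$ is also a first-order critical point, Lemma~\ref{lem:hessian_noise} applies; substituting its expression and using $\X-\Xo=\U\U^\top-\Uo{\Uo}^\top$ gives
$$0\;\leq\;\sum_{i=1}^m\Big(\sum_{j=1}^r 4\ip{\A_i}{\U\Delta_j^\top}^2-2\ip{\A_i}{\X-\Xo}^2-2w_i\ip{\A_i}{\X-\Xo}\Big),$$
so, after rearranging and dividing by $4$,
$$\tfrac12\sum_{i=1}^m\ip{\A_i}{\X-\Xo}^2\;\leq\;\sum_{i=1}^m\sum_{j=1}^r\ip{\A_i}{\U\Delta_j^\top}^2-\tfrac12\sum_{i=1}^m w_i\ip{\A_i}{\X-\Xo}.$$

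Next I would bound the noise term by $-\tfrac12\sum_i w_i\ip{\A_i}{\X-\Xo}\leq\tfrac12\abs{\ip{\vec w}{\calA(\X-\Xo)}}$. Since $\X-\Xo$ has rank at most $2r$, the Gaussian bound already invoked in the proof of Lemma~\ref{lem:first_noise} (namely Lemma~\ref{lem:gauss}) gives $\tfrac12\abs{\ip{\vec w}{\calA(\X-\Xo)}}\leq\sqrt{\log n}\,\sigma_w\,\norm{\calA(\X-\Xo)}_2$ on an event of probability $\geq 1-1/n^2$, independently of the choice of $R$; this is the first displayed inequality of the corollary. For the second inequality, apply Young's inequality $ab\leq 5a^2+\tfrac1{20}b^2$ with $a=\sqrt{\log n}\,\sigma_w$ and $b=\norm{\calA(\X-\Xo)}_2$, noting $b^2=\sum_i\ip{\A_i}{\X-\Xo}^2$. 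Finally, for \eqref{eq:second1_noise} I would invoke $(2r,\delta)$-RIP: $\X-\Xo$ has rank $\leq 2r$, so $\sum_i\ip{\A_i}{\X-\Xo}^2\geq m(1-\delta)\norm{\X-\Xo}_F^2$ on the left-hand side and $\leq m(1+\delta)\norm{\X-\Xo}_F^2$ inside the $\tfrac1{20}$ term, while each $\U\Delta_j^\top=\U e_je_j^\top\Delta^\top$ has rank $\leq 1\leq 2r$, so $\sum_i\ip{\A_i}{\U\Delta_j^\top}^2\leq m(1+\delta)\norm{\U\Delta_j^\top}_F^2$; dividing through by $m(1+\delta)$ and rearranging yields \eqref{eq:second1_noise}.

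The only step that is more than bookkeeping is the noise bound. Because $\U$ --- and hence $\X=\U\U^\top$ and the vector $\calA(\X-\Xo)\in\R^m$ --- is itself a function of the noise $\vec w$, one cannot treat $\ip{\vec w}{\calA(\X-\Xo)}/\norm{\calA(\X-\Xo)}_2$ as a fixed $\calN(0,\sigma_w^2)$ scalar; the estimate must hold simultaneously for every error matrix that a local minimum could produce. The fix is a bound uniform over all matrices $M$ of rank at most $2r$ (a union of low-dimensional subspaces), obtained by covering that set, applying the Gaussian tail bound at each net point, and using RIP to compare $\norm{\calA(M)}_2$ with $\sqrt m\,\norm{M}_F$ --- exactly the content of Lemma~\ref{lem:gauss}. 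Everything else (the fourth-order Hessian computation, the algebraic rearrangement, Young's inequality, and the two-sided RIP estimate) is elementary.
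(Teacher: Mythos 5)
Your derivation is correct and follows essentially the same route as the paper: plug the directions $\Delta_j=(\U-\Uo R)e_je_j^\top$ into the second-order necessary condition, substitute the identity of Lemma~\ref{lem:hessian_noise}, control the cross term $\sum_i w_i\ip{\A_i}{\X-\Xo}$ via Lemma~\ref{lem:gauss}, apply $ab\leq 5a^2+\tfrac{1}{20}b^2$, and convert measurement sums to Frobenius norms with the $(2r,\delta)$-RIP after dividing by $m(1+\delta)$ --- exactly the paper's chain of steps leading to \eqref{eq:second1_noise}.

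One correction to your closing paragraph: Lemma~\ref{lem:gauss} is \emph{not} a uniform bound over rank-$\leq 2r$ matrices. As stated and proved in the paper it is a pointwise Gaussian tail bound for a fixed vector $\vec{x}$, obtained from a Chernoff/Markov argument, with no covering net, no union bound, and no use of RIP; the paper simply applies it to $\calA(\X-\Xo)$ (and likewise in the proof of Lemma~\ref{lem:first_noise}) without addressing the fact that the local minimum $\U$, and hence this vector, depends on $\vec{w}$. You are right that this dependence is a genuine subtlety, but the resolution you describe is not ``the content of Lemma~\ref{lem:gauss}''; moreover, a truly uniform bound over the cone of rank-$\leq 2r$ matrices obtained by covering would cost a factor of order $\sqrt{nr}$ rather than $\sqrt{\log n}$, and so would weaken the radius in Theorem~\ref{thm:exact_noise} from $\sigma_w\sqrt{\log(n)/m}$ to $\sigma_w\sqrt{nr/m}$ rather than reproduce the stated constants. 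So either follow the paper and apply the pointwise lemma as written (accepting the same gloss), or be explicit that the uniform version changes the quantitative conclusion.
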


Hence from the above optimality conditions we get the proof of Theorem~\ref{thm:exact}. 

\begin{proof}[Proof of Theorem~\ref{thm:exact_noise}]

Assuming $\U\U^\top \neq \Uo{\Uo}^\top$, from Lemma ~\ref{lem:combine} and Corollary~\ref{cor:second_noise} we get, with probability $\geq 1-\frac{2}{n^2}$,
\begin{align*}  &\left(\frac{1-\delta}{2(1+\delta)}  \right) \|\U\U^\top -\Uo{\Uo}^\top\|_F^2 \\ & \quad \leq \frac{1}{8} \| \X -\Xo\|_F^2 + \frac{34}{8} \|\X -\Xo QQ^\top\|_F^2 + \frac{1}{20}\|\X -\Xo\|_F^2+\frac{5\log(n)}{m(1+\delta)} \sigma_w^2 \\
& \quad \stackrel{(i)}{\leq}  \left(\frac{1}{8}+\frac{1}{20} \right)\|\X -\Xo\|_F^2 + \frac{34}{8} \left( 2\delta^2 \|\X -\Xo\|_F^2 + 8 \frac{(1+\delta) \log(n)}{m} \sigma_w^2 \right)+\frac{5\log(n)}{m(1+\delta)} \sigma_w^2.
\end{align*}
$(i)$ follows from Lemma~\ref{lem:first_noise}. The above inequality implies,

$$\left(\frac{1-\delta}{2(1+\delta)}  - \frac{1}{8} - \frac{1}{20}   -\frac{34}{4}\delta^2 \right) \|\U\U^\top -\Uo{\Uo}^\top\|_F^2  \leq 34 \frac{(1+\delta) \log(n)}{m} \sigma_w^2 +\frac{5\log(n)}{m(1+\delta)} \sigma_w^2.$$

 If $\delta  \leq \frac{1}{10}$, the above inequality reduces to $ \|\U\U^\top -\Uo{\Uo}^\top\|_F  \leq c \sqrt{\frac{\log(n)}{m}} \sigma_w$, for some constant $c \leq 17 $, w.p $\geq 1-\frac{2}{n^2}$.
\end{proof}


\section{Proof for the High Rank Case}\label{sec:proof_inexact}
In this section we will present the proof for the inexact case, where $\rank(\Xo) \geq r$. Recall that measurements are $\vec{y}=\calA(\Xo)$. 

Let SVD of $\Xo$ be $Q^* \Sigma^* {Q^*}\top$. With slight abuse of notation we  use $\Xo_{jr+1:(j+1)r}$ to denote the $j$th rank $r$ block $Q^*_{jr+1:(j+1)r} \Sigma^*_{jr+1:(j+1)r} {Q^*_{jr+1:(j+1)r}}^\top$, where $Q^*_{jr+1:(j+1)r}$ denotes the restriction of $Q$ to columns $jr+1$ to $(j+1)r$.



\subsection{First order optimality}

First we will consider the first order condition, $\nabla f(\U) =0 $. For any stationary point $\U$ this implies  
\begin{equation}\label{eq:first_inexact}
\sum_i \ip{\A_i}{\U\U^\top-\Uo{\Uo}^\top}\A_i \U=0.
\end{equation}
Now using the isometry property of $\A_i$ gives us the following result.

\begin{lem}\label{lem:first_inexact}[First order condition]
For any first order stationary point $\U$ of $f(\U)$, and $\{ \A_i \}$ satisfying the $(2r, \delta)$-RIP~\eqref{eq:rip}, the following holds:
\begin{equation*}
\|\X -  QQ^\top \Xor\|_F \leq  \delta \norm{\X - \Xor}_F + \|(\Xo -\Xor)QQ^\top\|_F + \delta \|\Xo -\Xor\|_{*}. 
\end{equation*}
where $Q$ is an orthonormal matrix that spans the column space of $\U$.
\end{lem}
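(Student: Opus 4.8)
The plan is to mimic the proof of Lemma~\ref{lem:first} (the exact, low-rank first-order lemma), but carrying along the extra error terms that appear because $\Xo$ is no longer exactly rank $r$. Write $\U=QR$ with $Q$ orthonormal spanning the column space of $\U$, and as in Lemma~\ref{lem:first} feed test directions of the form $\ZZ Q R^{-1\top}$ into the first-order stationarity condition~\eqref{eq:first_inexact}. This gives
$\sum_i \ip{\A_i}{\U\U^\top-\Xo}\ip{\A_i}{QQ^\top\ZZ^\top}=0$ for all $\ZZ$, i.e.\ the ``measured'' inner product of $\U\U^\top-\Xo$ against anything supported in the column space $Q$ vanishes. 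The difference from the exact case is that $\U\U^\top-\Xo$ is not rank $\le 2r$, so I cannot apply Lemma~\ref{cor:rip1} directly to it. The fix is to split $\Xo = \Xor + (\Xo-\Xor)$, and further decompose the tail $\Xo-\Xor=\sum_{j\ge 1}\Xo_{jr+1:(j+1)r}$ into rank-$r$ blocks using the SVD notation introduced just above the lemma. Then $\U\U^\top-\Xor$ is rank $\le 2r$, and each tail block is rank $\le r$, so RIP applies blockwise.

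Concretely, the steps are: (1) from stationarity, $\abs{\ip{\calA(\U\U^\top-\Xor)}{\calA(QQ^\top\ZZ^\top)} } = \abs{\ip{\calA(\Xo-\Xor)}{\calA(QQ^\top\ZZ^\top)}}$; (2) on the left, apply Lemma~\ref{cor:rip1} to the rank-$\le 2r$ matrix $\U\U^\top-\Xor$ against the rank-$\le 2r$ matrix $QQ^\top\ZZ^\top$ (after first reducing $\ZZ$ to its relevant rank-$\le 2r$ part, exactly as in Lemma~\ref{lem:first}), converting $\frac1m\ip{\calA\cdot}{\calA\cdot}$ into $\ip{\cdot}{\cdot}$ up to $\delta\norm{\U\U^\top-\Xor}_F\norm{QQ^\top\ZZ^\top}_F$; (3) on the right, expand $\Xo-\Xor=\sum_j \Xo_{jr+1:(j+1)r}$ and apply Lemma~\ref{cor:rip1} to each rank-$r$ block against $QQ^\top\ZZ^\top$, getting $\sum_j(\ip{\Xo_{jr+1:(j+1)r}}{QQ^\top\ZZ^\top} + \delta\norm{\Xo_{jr+1:(j+1)r}}_F\norm{QQ^\top\ZZ^\top}_F)$; since $\sum_j\norm{\Xo_{jr+1:(j+1)r}}_F \le \norm{\Xo-\Xor}_*$ (the blocks have disjoint singular values, so the sum of Frobenius norms is at most the nuclear norm — this uses that each block's Frobenius norm is $\le$ its nuclear norm), the error contribution is $\le \delta\norm{\Xo-\Xor}_*\norm{QQ^\top\ZZ^\top}_F$; and the summed exact inner products reassemble to $\ip{\Xo-\Xor}{QQ^\top\ZZ^\top}$. (4) Combining, $\abs{\ip{\U\U^\top-\Xor}{QQ^\top\ZZ^\top} - \ip{\Xo-\Xor}{QQ^\top\ZZ^\top}} \le (\delta\norm{\U\U^\top-\Xor}_F + \delta\norm{\Xo-\Xor}_*)\norm{QQ^\top\ZZ^\top}_F$, i.e.\ $\abs{\ip{\U\U^\top - \Xor - (\Xo-\Xor)QQ^\top - (\,\text{stuff}\,)}{QQ^\top\ZZ^\top}}\le\cdots$. (5) Rearranging the exact term $\ip{\Xo-\Xor}{QQ^\top\ZZ^\top}=\ip{(\Xo-\Xor)QQ^\top}{QQ^\top\ZZ^\top}$ onto the left, using the adjoint identity $\ip{\A}{QQ^\top\ZZ}=\ip{QQ^\top\A}{\ZZ}$ and $\sup_{\norm{\ZZ}_F\le1}\ip{\A}{\ZZ}=\norm{\A}_F$ as in Lemma~\ref{lem:first}, and noting $QQ^\top\X=\X$ so $\norm{QQ^\top(\U\U^\top-\Xor)}_F=\norm{\X-QQ^\top\Xor}_F$, yields exactly $\norm{\X - QQ^\top\Xor}_F \le \delta\norm{\X-\Xor}_F + \norm{(\Xo-\Xor)QQ^\top}_F + \delta\norm{\Xo-\Xor}_*$.

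The main obstacle I expect is bookkeeping in steps (2)–(3): making sure the rank restrictions needed for Lemma~\ref{cor:rip1} are honestly satisfied (in particular replacing $\ZZ$ by the projection of its relevant part and checking $QQ^\top\ZZ^\top$ can be taken rank $\le 2r$), and getting the $\norm{\Xo-\Xor}_*$ bound from summing block Frobenius norms cleanly. Everything else is the same adjoint/supremum manipulation already used in Lemma~\ref{lem:first}. The term $\norm{(\Xo-\Xor)QQ^\top}_F$ that survives without a $\delta$ factor is the genuine obstruction — it is the analogue of the "error along $Q$" in the exact case, and will be handled later via the second-order condition.
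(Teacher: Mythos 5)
Your proposal follows the paper's own proof essentially step for step: the same test directions $\ZZ Q R^{-\top}$ in the stationarity condition, the same split of $\Xo$ into $\Xor$ plus rank-$r$ tail blocks $\Xo_{jr+1:(j+1)r}$ with Lemma~\ref{cor:rip1} applied blockwise, the bound $\sum_j\|\Xo_{jr+1:(j+1)r}\|_F\le\|\Xo-\Xor\|_*$, and the same adjoint/supremum finish. The only bookkeeping worry you raise is moot, since $QQ^\top\ZZ^\top$ automatically has rank at most $r$ (its columns lie in the span of $Q$), so the rank hypotheses of Lemma~\ref{cor:rip1} hold without further reduction of $\ZZ$.
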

This lemma states that any stationary point of $f(\U)$ is close to a global optimum $\Uo$ in the subspace spanned by columns of  $\U$. Notice that the error along the orthogonal direction $\|\Xo Q_{\perp}Q_{\perp}^\top\|_F$ can still be large making the distance between $\X$ and $\Xo$ arbitrarily big.  

\begin{proof}[Proof of Lemma~\ref{lem:first_inexact}]
Let $\U=QR$, for some orthonormal $Q$. Consider any matrix of the form $\ZZ Q R^{-\top}$. The first order optimality condition then implies,
$$\sum_{i=1}^m\ip{\A_i}{\X - \Xor}\ip{\A_i}{\U R^{-1}Q^\top \ZZ^\top}=\sum_{i=1}^m \ip{\A_i}{\Xo -\Xor}\ip{\A_i}{\U R^{-1}Q^\top \ZZ^\top} .$$

Note that $\X - \Xor$ is atmost rank-$2r$. Hence, the above equation together with Restricted Isometry Property(equation~\eqref{eq:rip1}) gives us the following inequality:
\begin{align*}
 \abs{ \ip{\X-  \Xor}{QQ^\top \ZZ^\top} } - &\delta \norm{\X - \Xor }_F \norm{QQ^\top \ZZ^\top}_F \\ &  \quad \quad \leq \frac{1}{m} \sum_{i=1}^m\ip{\A_i}{ \sum_j \Xo_{jr+1:(j+1)r}}\ip{\A_i}{QQ^\top \ZZ^\top}  \\
&\quad \quad \leq \sum_j \ip{\Xo_{jr+1:(j+1)r}}{QQ^\top \ZZ^\top} + \delta \|\Xo_{jr+1:(j+1)r}\|_F \\
&\quad \quad \leq \|(\Xo -\Xor)QQ^\top\|_F + \delta \|\Xo -\Xor\|_{*}.
\end{align*}
The last inequality follows from $\sum_j \|\Xo_{jr+1:(j+1)r}\|_F \leq \|\Xo -\Xor\|_{*}$. The above inequalities are true for any $\ZZ$. 

Further note that for any matrix $\A$, $\ip{\A}{QQ^\top \ZZ}=\ip{\A QQ^\top }{\ZZ}$. Furthermore, for any matrix $A$, $\sup_{\{\ZZ:\|\ZZ\|_F \leq 1\}}$  $ \ip{\A}{\ZZ} =\|\A\|_F$. Hence the above inequality implies the Lemma.
\end{proof}

\subsection{Second order optimality}

We will now consider the second order condition to show that the error along $Q_{\perp} Q_{\perp}^\top$ is indeed bounded well. Let $\nabla^2f(\U)$ be the hessian of the objective function. Note that this is an $n\cdot r \times n \cdot r$ matrix. Fortunately for our result we need to only evaluate the Hessian along the direction $\texttt{vec}(U -\Uo R)$ for some orthonormal matrix $R$. 

\begin{lem}\label{lem:hessian_inexact}[Hessian computation]
Let $\U$ be a first order critical point of $f(\U)$. Then for any $n\times r$  matrix $\ZZ$,
\begin{equation*}
\texttt{vec} \left(\ZZ \right)^\top \left[\nabla^2f(\U)\right] \texttt{vec}\left(\ZZ\right) \\=  \sum_{i=1}^m 4\ip{\A_i}{\U\ZZ^\top}^2 + 2\ip{\A_i}{\U\U^\top -\Uo{\Uo}^\top} \ip{\A_i}{\ZZ \ZZ^\top},
\end{equation*}

Further let $\U$ be a local minimum of $f(\U)$ and $\calA$ satisfying $(2r, \delta)$ -RIP (equation~\eqref{eq:rip}). Then, 
\begin{equation*} \label{eq:second1_inexact}
(1-3\delta) \|\X - \Xor\|_F^2 \leq 4(1+\delta)\sum_{j=1}^r\|\U \Delta_j^\top\|_F^2 + \|\Xo-\Xor\|_F^2 + \delta  \|\Xo -\Xor\|_{*}^2.
\end{equation*}
\end{lem}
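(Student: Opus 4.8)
The plan is to prove the two claims in turn. The first identity is obtained by the same computation as in the proof of Lemma~\ref{lem:hessian}: expanding $f(\U+t\ZZ)=\sum_{i}\big(\ip{\A_i}{\U\U^\top-\Xo}+2t\,\ip{\A_i}{\U\ZZ^\top}+t^2\ip{\A_i}{\ZZ\ZZ^\top}\big)^2$ in $t$ and reading off the coefficient of $t^2$ gives $\texttt{vec}(\ZZ)^\top[\nabla^2 f(\U)]\texttt{vec}(\ZZ)=\sum_i 4\ip{\A_i}{\U\ZZ^\top}^2+2\ip{\A_i}{\U\U^\top-\Xo}\ip{\A_i}{\ZZ\ZZ^\top}$; there is no noise term since $\vec{y}=\calA(\Xo)$.

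For the second claim, write $\Xor=\Uo{\Uo}^\top$ with $\Uo\in\R^{n\times r}$, fix an $r\times r$ orthonormal $R$, and set $\Delta=\U-\Uo R$, $\Delta_j=\Delta e_j e_j^\top$; also put $E=\U\U^\top-\Xor$ (rank at most $2r$) and $N=\Xo-\Xor$, so $\U\U^\top-\Xo=E-N$. I would substitute $\ZZ=\Delta_j$ into the first identity and sum over $j$. The two facts that drive the simplification are $\sum_{j}\Delta_j\Delta_j^\top=\Delta\Delta^\top=\U\U^\top-\U R^\top{\Uo}^\top-\Uo R\,\U^\top+\Xor$, and the first order condition $\nabla f(\U)=0$, which (taking $\A_i$ symmetric, which is w.l.o.g.\ as all residual matrices are symmetric) makes $\sum_i\ip{\A_i}{\U\U^\top-\Xo}\ip{\A_i}{C\U^\top}=0$ for every $n\times r$ matrix $C$, and hence also against $\U R^\top{\Uo}^\top=(\Uo R\,\U^\top)^\top$. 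Thus every term of $\Delta\Delta^\top$ except $\Xor$ is annihilated when paired with $\ip{\A_i}{\U\U^\top-\Xo}$; and since first order optimality also gives $\sum_i\ip{\A_i}{\U\U^\top-\Xo}\ip{\A_i}{\U\U^\top}=0$, the remaining $\Xor$ may be replaced by $\Xor-\U\U^\top=-E$. The summed Hessian then collapses to $4\sum_{i,j}\ip{\A_i}{\U\Delta_j^\top}^2-2\sum_i\ip{\A_i}{E}^2+2\sum_i\ip{\A_i}{N}\ip{\A_i}{E}$, which local minimality forces to be nonnegative, giving $\sum_i\ip{\A_i}{E}^2\le 2\sum_{i,j}\ip{\A_i}{\U\Delta_j^\top}^2+\sum_i\ip{\A_i}{N}\ip{\A_i}{E}$.

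The last step converts the squared-measurement sums to Frobenius norms using RIP. The terms $\ip{\A_i}{E}^2$ (rank $\le 2r$) and $\ip{\A_i}{\U\Delta_j^\top}^2$ (rank $\le 1$) are handled directly by $(2r,\delta)$-RIP. For the cross term, which involves the possibly high-rank tail $N=\Xo-\Xor$, the plan is to bound $\sum_i\ip{\A_i}{N}\ip{\A_i}{E}\le\tfrac12\sum_i\ip{\A_i}{N}^2+\tfrac12\sum_i\ip{\A_i}{E}^2$ by AM--GM and absorb the second piece into the left side, then to control $\tfrac1m\sum_i\ip{\A_i}{N}^2$ by splitting $N$ into its successive rank-$r$ SVD blocks $N=\sum_t N_t$ and applying Lemma~\ref{cor:rip1} to every pair $(N_t,N_s)$: the diagonal contributes $\sum_t\|N_t\|_F^2=\|N\|_F^2$ and the off-diagonal RIP slack is at most $\delta\big(\sum_t\|N_t\|_F\big)^2\le\delta\|N\|_*^2$, using $\|N_t\|_F\le\|N_t\|_*$ and that the blocks partition the singular values of $N$. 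Combining everything gives $(1-\delta)\|E\|_F^2\le 4(1+\delta)\sum_j\|\U\Delta_j^\top\|_F^2+\|N\|_F^2+\delta\|N\|_*^2$, which implies the stated inequality since $1-3\delta\le 1-\delta$.

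I expect the cross term $\sum_i\ip{\A_i}{N}\ip{\A_i}{E}$ to be the main obstacle: since $\Xo-\Xor$ need not be low rank, RIP cannot be invoked on it directly, and the obvious alternative of splitting $E=\U\U^\top-\Xor$ and pairing $N$ with $\U\U^\top$ and $\Xor$ separately leaves an uncontrolled $\|\U\U^\top\|_F$ factor absent from the target bound. The AM--GM-plus-SVD-blocking route above is what makes the tail contribution collapse exactly into the $\|\Xo-\Xor\|_F^2+\delta\|\Xo-\Xor\|_*^2$ on the right. The only other point that needs care is that the first order cancellations use the symmetrized $\A_i$, legitimate here only because every matrix they are paired against ($\U\U^\top$, $\Xo$, $\Xor$, $\Delta\Delta^\top$) is symmetric, while the quadratic terms $\ip{\A_i}{\U\Delta_j^\top}^2$ must be left as they are.
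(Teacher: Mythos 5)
Your proof is correct and follows essentially the same route as the paper's: the same directional-Hessian identity, the same substitution $\ZZ=\Delta_j$ with the first-order condition collapsing the residual pairing to $\ip{\A_i}{\X-\Xo}\ip{\A_i}{\Xor-\X}$, and the same key device of splitting the tail $\Xo-\Xor$ into rank-$r$ SVD blocks so that RIP together with $\sum_t\norm{N_t}_F\le\norm{\Xo-\Xor}_*$ produces the $\norm{\Xo-\Xor}_F^2+\delta\norm{\Xo-\Xor}_*^2$ terms. The only divergence is the cross-term bookkeeping---you apply AM--GM in measurement space and then bound $\tfrac1m\sum_i\ip{\A_i}{\Xo-\Xor}^2$ by pairwise blocking, whereas the paper pairs each rank-$r$ tail block directly against $\X-\Xor$---a cosmetic variation that in fact yields the slightly stronger constant $1-\delta$ in place of $1-3\delta$ and only ever invokes Lemma~\ref{cor:rip1} on pairs of rank-$\le r$ matrices.
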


\begin{proof}[Proof of Lemma~\ref{lem:hessian_inexact}]
For any matrix $\ZZ$, taking directional second derivative of the function $f(\U)$ with respect to $\ZZ$ we get:
\begin{align*}
&\texttt{vec} \left(\ZZ\right)^\top \left[\nabla^2f(\U)\right] \texttt{vec}\left(\ZZ\right) = \texttt{vec}\left(\ZZ\right)^\top \lim_{t\rightarrow 0} \left[\frac{\nabla f\left(\U+t(\ZZ)\right)-\nabla f(\U)}{t}\right]\\
&= 2\sum_{i=1}^m \bigg[2\ip{\A_i}{\U\ZZ^\top}^2 + \ip{\A_i}{\U\U^\top - \Uo{\Uo}^\top}\ip{\A_i}{\ZZ\ZZ^\top} \bigg].
\end{align*}

Setting $\ZZ=\Delta_j=(\U-\Uo R)e_j e_j^\top$ we get,
\begin{align*}
&\sum_{j=1}^r \texttt{vec} \left((\U-\Uo R)e_j e_j^\top \right)^\top \left[\nabla^2f(\U)\right] \texttt{vec}\left((\U-\Uo R)e_j e_j^\top\right)\\
&=  \sum_{i=1}^m (\sum_{j=1}^r 4\ip{\A_i}{\U e_j e_j^\top(\U-\Uor R)^\top}^2 + 2\sum_{j=1}^r \ip{\A_i}{\U \U^\top - \Uo{\Uo}^\top}\ip{\A_i}{(\U-\Uor R)e_j e_j^\top (\U-\Uor R)^\top}) \\
&\stackrel{(i)}{=}\sum_{i=1}^m ( \sum_{j=1}^r 4\ip{\A_i}{\U \Delta_j^\top}^2 + 2\ip{\A_i}{\U \U^\top - \Uo{\Uo}^\top}\ip{\A_i}{\Uor R (\Uor R)^\top - \X} ).  
\end{align*}
$(i)$ is by the first order optimality condition~\eqref{eq:first_inexact}.

Hence from second order optimality of $U$ we get,
\begin{align}
\sum_{i=1}^m 4\sum_{j=1}^r \ip{\A_i}{\U \Delta_j^\top}^2 \geq \sum_{i=1}^m 2\ip{\A_i}{\X - \Xo}\ip{\A_i}{\X - \Xor}. \label{eq:hessian_inexact1}
\end{align}

\begin{align*}
\frac{1}{m}\sum_{i=1}^m &\ip{\A_i}{\X - \Xo}\ip{\A_i}{\X - \Xor} = \frac{1}{m} \sum_{i=1}^m \ip{\A_i}{\X - \Xor}^2 + \ip{\A_i}{\Xor - \Xo}\ip{\A_i}{\X - \Xor} \\
&\stackrel{(i)}{\geq} (1-\delta)\|\X - \Xor\|_F^2 -  \frac{1}{m} \sum_{i=1}^m  \left( \sum_{j=1}\ip{\A_i}{\Xo_{jr+1:(j+1)r}}\right) \ip{\A_i}{\X - \Xor}\\
&\stackrel{(ii)}{\geq }(1-\delta)\|\X - \Xor\|_F^2 - \sum_{j=1}\ip{\X -\Xor}{\Xo_{jr+1:(j+1)r}} - \delta \sum_{j=1}\|\X - \Xor\|_F \|\Xo_{jr+1:(j+1)r}\|_F   \\
&=(1-\delta) \|\X - \Xor\|_F^2 - \ip{\X -\Xor}{\Xo-\Xor} - \delta \sum_{j=1}\|\X - \Xor\|_F \|\Xo_{jr+1:(j+1)r}\|_F  \\
&\geq (1-\delta) \|\X - \Xor\|_F^2 - \frac{1}{2}\|\X -\Xor\|_F^2 - \frac{1}{2}\|\Xo-\Xor\|_F^2 - \delta \sum_{j=1}\|\X - \Xor\|_F \|\Xo_{jr+1:(j+1)r}\|_F  \\
&\stackrel{(iii)}{\geq}  (1-\delta)\|\X - \Xor\|_F^2 -\frac{1}{2}\|\X -\Xor\|_F^2 - \frac{1}{2}\|\Xo-\Xor\|_F^2- \delta \frac{1}{2}\left( \|\X -\Xor\|_F^2 + \|\Xo -\Xor\|_{*}^2 \right)\\
&= \frac{1-3\delta}{2}\|\X - \Xor\|_F^2 - \frac{1}{2}\|\Xo-\Xor\|_F^2- \frac{ \delta}{2}  \|\Xo -\Xor\|_{*}^2 .\numberthis \label{eq:hessian_inexact2}
\end{align*}
$(i)$ is from using RIP and splitting $\Xo -\Xor$ into rank-$r$ components $\Xo -\Xor =\sum_{j=1}^{\nicefrac{n}{r} -1}\Xo_{jr+1:(j+1)r}$. $(ii)$ follows from using RIP~\eqref{eq:rip1}. $(iii)$ follows from $\sum_{j}\|\Xo_{jr+1:(j+1)r}\|_F \leq \|\Xo -\Xor\|_{*}$.

The Lemma now follows by combining equations~\eqref{eq:hessian_inexact1},~\eqref{eq:hessian_inexact2} and using RIP~\eqref{eq:rip}.
\end{proof}

Hence from the above optimality conditions we get the proof of Theorem~\ref{thm:inexact_sym}. 
\begin{proof}[Proof of Theorem~\ref{thm:inexact_sym}]

Assuming $\U\U^\top \neq \Uor{\Uor}^\top$, from Lemma~\ref{lem:combine} we know,
\begin{align}
\sum_{j=1}^r\|\U \Delta_j^\top\|_F^2  \leq  \frac{1}{8}\| \U\U^\top -\Uor{\Uor}^\top\|_F^2 + \frac{34}{8} \| (\U\U^\top -\Uor{\Uor}^\top) QQ^\top\|_F^2,\label{eq:hessian_inexact3}
\end{align}
for some orthonormal $R$. Hence combining equations~\eqref{eq:hessian_inexact3},with  Lemma~\ref{lem:hessian_inexact} we get,
\begin{align*}
\frac{1-3\delta}{2} \|\X -\Xor\|_F^2 &\leq  \frac{1}{2}\|\Xo-\Xor\|_F^2 +\frac{\delta}{2} \|\Xo -\Xor\|_{*}^2  \\ &\quad \quad + 2(1+\delta)\left( \frac{1}{8}\| \X -\Xor\|_F^2 + \frac{34}{8} \| (\X -\Xor)QQ^\top\|_F^2 \right).
\end{align*}
This implies,
\begin{align}
\frac{1-7\delta}{4} \|\X -\Xor\|_F^2 &\leq  \frac{1}{2}\|\Xo-\Xor\|_F^2 +\frac{\delta}{2} \|\Xo -\Xor\|_{*}^2  + (1+\delta) \frac{17}{2} \| (\X -\Xor)QQ^\top\|_F^2.\label{eq:hessian_inexact4}
\end{align}
Finally from Lemma~\ref{lem:first_inexact} we know,
\begin{align*}
\|\X - \Xor QQ^\top\|_F^2 &\leq  \left(\delta \norm{\X - \Xor}_F + \|(\Xo -\Xor)QQ^\top\|_F + \delta \|\Xo -\Xor\|_{*}\right)^2 \\
&\leq \frac{11}{10} \|(\Xo -\Xor)QQ^\top\|_F^2 + 22 \delta^2 \norm{\X - \Xor}_F^2 +  22 \delta^2 \|\Xo -\Xor\|_{*}^2. \numberthis \label{eq:hessian_inexact5}
\end{align*}
The last inequality follows from just using $2ab \leq a^2 +b^2$.

Combining equations~\eqref{eq:hessian_inexact4} and~\eqref{eq:hessian_inexact5} gives,
\begin{align*}
\left(\frac{1-7\delta}{4} -\frac{17*22(1+\delta) \delta^2}{2} \right) \|\X -\Xor\|_F^2 &\leq \frac{1}{2}\|\Xo-\Xor\|_F^2 +\left( \frac{\delta}{2} +\frac{17*22 \delta^2}{2} \right) \|\Xo -\Xor\|_{*}^2  \\ &\quad + (1+\delta) \frac{17*11}{20} \| (\Xo -\Xor)QQ^\top\|_F^2
\end{align*}
Substituting $\delta =\frac{1}{100}$ gives,
\begin{align*} \|\X -\Xor\|_F^2 &\leq \frac{5}{2}\|\Xo-\Xor\|_F^2 + 12 \delta\|\Xo -\Xor\|_{*}^2  + 10\| (\Xo -\Xor)QQ^\top\|_F^2. \\
&\leq 13\|\Xo-\Xor\|_F^2 + 12\delta\|\Xo -\Xor\|_{*}^2.
\end{align*}
\end{proof}

\section{Proofs for Section~\ref{sec:main}}\label{sec:supp_algo}
In this section we present the proofs for the strict saddle theorem (Theorem~\ref{thm:strict_saddle}) and the convergence guarantees (Theorem~\ref{thm:grad_convergence}). The proofs use the Lemmas developed in Section~\ref{sec:proof_exact} and the supporting Lemmas from Section~\ref{sec:supp}.

\begin{proof}[Proof of Theorem~\ref{thm:strict_saddle}]
From Lemma~\ref{lem:hessian} we know that \begin{align*}
\sum_{j=1}^r \texttt{vec} \left(\Delta_j \right)^\top & \left[\frac{1}{m} \nabla^2f(\U)\right] \texttt{vec}\left(\Delta_j \right) \\&=   \frac{1}{m} \sum_{i=1}^m (\sum_{j=1}^r 4\ip{\A_i}{\U\Delta_j^\top}^2 -2\ip{\A_i}{\U\U^\top -\Uo{\Uo}^\top}^2 \\
&\leq 4(1+\delta)\sum_{j=1}^r \|\U \Delta_j^\top\|_F^2 -2(1-\delta)\|\U\U^\top -\Uo{\Uo}^\top\|_F^2, \numberthis \label{eq:saddle1}
\end{align*}
where the last inequality follows from the RIP~\eqref{eq:rip}. Now applying Lemma~\ref{lem:combine} in equation~\eqref{eq:saddle1} we get,
 \begin{align*}
& \sum_{j=1}^r \texttt{vec} \left(\Delta_j \right)^\top  \left[\frac{1}{m} \nabla^2f(\U)\right] \texttt{vec}\left(\Delta_j \right) \\ &\leq (1+\delta)\left(\frac{1}{2}\| \U\U^\top -\Uo{\Uo}^\top\|_F^2 + 17 \| (\U\U^\top -\Uo{\Uo}^\top) QQ^\top\|_F^2 \right) -2(1-\delta)\|\U\U^\top -\Uo{\Uo}^\top\|_F^2\\
&= 17(1+\delta) \| (\U\U^\top -\Uo{\Uo}^\top) QQ^\top\|_F^2 -\frac{(3-5\delta)}{2}\|\U\U^\top -\Uo{\Uo}^\top\|_F^2 \\
&\stackrel{(i)}{\leq} \left[ 17(1+\delta)\delta^2-\frac{(3-5\delta)}{2}\right] \norm{\U\U^\top - \Uo{\Uo}^\top}_F^2\\
&\stackrel{(ii)}{\leq} -1 \cdot \norm{\U\U^\top - \Uo{\Uo}^\top}_F^2. \numberthis \label{eq:saddle2}
 \end{align*}
 $(i)$ follows from Lemma~\ref{lem:first}. $(ii)$ follows from $\delta \leq \nicefrac{1}{10}$.
Now notice that from lemma~\ref{lem:supp1}
 \begin{align*}
\|\X-\Xo\|_F^2 &\geq 2(\sqrt{2}-1) \|(\U- \Uo R)(\Uo R)^\top\|_F^2 \\
&\geq  2(\sqrt{2}-1) \sigma_r(\Xo)\|\U- \Uo R\|_F^2.  \numberthis \label{eq:saddle3}
\end{align*}

Finally notice that $\Delta_j =\Delta e_j e_j^\top$ have orthogonal columns. Hence, \begin{align*}\lambda_{\min}\left[\frac{1}{m} \nabla^2 (f(\U,\V))\right] &\leq \frac{1}{\|\U- \Uo R\|_F^2}\sum_{j=1}^r \texttt{vec} \left(\Delta_j \right)^\top  \left[\frac{1}{m} \nabla^2f(\U)\right] \texttt{vec}\left(\Delta_j \right) \\
&\stackrel{(i)}{\leq } \frac{-1}{\|\U- \Uo R\|_F^2}\norm{\U\U^\top - \Uo{\Uo}^\top}_F^2 \\
&\stackrel{(ii)}{\leq}  \frac{-2(\sqrt{2}-1) \sigma_r(\Xo)\|\U- \Uo R\|_F^2}{\|\U- \Uo R\|_F^2}\\
& \leq \frac{-4}{5} \sigma_r(\Xo).
\end{align*}
$(i)$ follows from equation~\eqref{eq:saddle2}. $(ii)$ follows from equation~\eqref{eq:saddle3}.
\end{proof}

\begin{proof}[Proof of Theorem~\ref{thm:grad_convergence}]
To prove this theorem we use Theorem 6 of~\citet{ge2015escaping}. We need to show that $f(\U)$ satisfies, 1) strict saddle property, 2) local strong convexity, 3) $f$ is bounded, smooth and has Lipschitz Hessian.

The boundedness assumption easily follows from assuming  we are optimizing over a bounded domain $b$ such that, $\|\Uo\|_F \leq b$. Note that we can have any reasonable upper bound on the optimum and we can easily estimate this from $\sum_i y_i^2$ which is $\geq (1-\delta) \|\Xo\|_F^2$ for the noiseless case. 

Finally all the calculations below are for scaled version of $f(x)$ by $\frac{1}{m}$. Note that this does not change the number of iterations as both smoothness and strong convexity parameters are scaled by the same constant.

{\it Smoothness constant $\beta$:} Recall that smoothness of $f$ is bounded by maximum eigenvalue of Hessian over the domain. Hence,  $\beta =\max_{\ZZ: \|\ZZ\|_F \leq 1} \ZZ^\top \nabla^2 f(\U) \ZZ$. We have computed this projection of Hessian in Lemma~\ref{lem:hessian_noise}. Hence,
\begin{align*}
 \beta &=2 \max_{\ZZ: \|\ZZ \|_F^2 \leq 1} \sum_{i=1}^m \bigg[2\ip{\A_i}{\U\ZZ^\top}^2  + \ip{\A_i}{\U\U^\top - \Uo{\Uo}^\top}\ip{\A_i}{\ZZ\ZZ^\top} \bigg] \\
 &\stackrel{(i)}{\leq} \max_{\ZZ: \|\ZZ \|_F^2 \leq 1} 2 \left( 2(1+\delta) \| \U  \|_F^2 \|\ZZ\|_F^2 + (1+\delta) \|\X -\Xo\|_F \|\ZZ \ZZ^\top\|_F  \right)\\
 &\leq 4(1+\delta)b^2  + (1+\delta) 2 b \leq 5 b^2 +3 b.
\end{align*}
$(i)$ follows from the RIP.

{\it $\rho$- Lipschitz Hessian:} Now we will compute the Lipschitz constant of Hessian of $f(\U)$. We will first bound the spectral norm of difference of Hessian at two points $\U$, $\V$ in terms of $\|\U -\V\|_F$ along orthogonal direction $\ZZ_i$ and combine them to get bound on $\rho$.. Given two $n \times r$ matrices $\U, \V$,
\begin{align*}
& \ip{\nabla^2 f(\U) -\nabla^2 f(\V)}{\ZZ \ZZ^\top} \\&\leq 2 \max_{\ZZ: \|\ZZ \|_F^2 \leq 1} \sum_{i=1}^m \bigg[2\ip{\A_i}{\U\ZZ^\top}^2 + \ip{\A_i}{\U\U^\top - \Uo{\Uo}^\top}\ip{\A_i}{\ZZ\ZZ^\top} \bigg] \\
&\quad - \sum_{i=1}^m \bigg[2\ip{\A_i}{\V \ZZ^\top}^2 +\ip{\A_i}{\V\V^\top - \Uo{\Uo}^\top}\ip{\A_i}{\ZZ\ZZ^\top} \bigg] \\
&\leq 4(1+\delta) (\|\U \ZZ^\top\|_F^2 - \|\V \ZZ^\top\|_F^2) + 2(1+\delta) \| \U \U^\top \V \V^\top\|_F \|\ZZ \ZZ^\top\|_F \\
&\leq 4(1+\delta) \|Z\|_F^2 (\| \U -\V\|_F^2+ 2 \|\U\|_F\| \U -\V\|_F)  + 2(1+\delta) \| \U \U^\top \V \V^\top\|_F \\
&\leq \|Z\|_F^2 \|\U -\V\|_F \left( 8(1+\delta) b +  4(1+\delta) b  \right) \\
&=\|Z\|_F^2 \|\U -\V\|_F \left( 12(1+\delta) b \right) .\numberthis \label{eq:hess_lip}
\end{align*}
Hence, using the variational characterization of the Frobenius norm, the Hessian Lipschitz constant is bounded by $\max{\{\ZZ_i\}} \sum_i \ip{\nabla^2 f(\U) -\nabla^2 f(\V)}{\ZZ_i \ZZ_i^\top}$, where $\ZZ_i$ are orthogonal with $\sum_i \| \ZZ_i\|_F^2 \leq 1$. Hence from equation~\eqref{eq:hess_lip} we get $\rho =O(b)$.

{\it Strict saddle property:}
So far we have shown regularity properties of $f(\U)$. Now we will discuss the strict saddle property. Theorem~\ref{thm:strict_saddle} shows that $\lambda_{\min}\left[\nabla^2 (f(\U))\right] \leq  \frac{-2}{5} \sigma_r(\Xo).$ To use results of ~\cite{ge2015escaping} we need to show this property over an $\eps$ neighborhood of any saddle point $\U$. For this first recall by smoothness, $\|\nabla f(\U) -\nabla f(\V)\|_F \leq \beta \|\U -\V\|_F.$ Therefore $\nabla f(\V) \leq \eps$, when $\|\U -\V\|_F \leq \frac{\eps}{\beta}$. Further we know the Hessian spectral norm is $ \rho$ Lipschitz from equation~\eqref{eq:hess_lip}. Hence, for any direction $\ZZ$,
\begin{align*}
\ZZ^\top \left( \nabla^2 (f(\V)) - \nabla^2 (f(\U)) \right)\ZZ^\top \leq \rho \|\U -\V\|_F \leq \rho \frac{\eps}{\beta}.
\end{align*}

In particular choosing $\ZZ$ to be the projection direction, $\U -\Uo$ implies from Theorem~\ref{thm:strict_saddle},
$$\ZZ^\top \left(\nabla^2 (f(\V)) \right)\ZZ^\top  \leq \frac{-2}{5} \sigma_r(\Xo) +\rho \frac{\eps}{\beta}.$$
Hence for all $\V$ in the bowl of radius $\eps$ around $\U$, where $\eps \leq \frac{\beta}{5\rho}\sigma_r(\Xo) $,
\begin{equation}
\lambda_{\min} \left[\nabla^2 (f(\V)) \right]  \leq \frac{-1}{5} \sigma_r(\Xo).\label{eq:strict_saddle1}
\end{equation}

{\it Local strong convexity:} Finally we need to show that the function is $\alpha$ strongly convex in a neighborhood $\theta$ around the optimum $\Uo R$, for any orthonormal $R$. This easily follows from existing local convergence results for this problem. For example, Lemma~6.1 of~\citet{bhojanapalli2015dropping} states that, for $\|\U -\Uo R\|_F \leq \frac{\sigma_r(\Xo)}{200 \sigma_1(\Xo)} \sigma_r (\Uo R)$, \begin{equation} \ip{\nabla f(\U)}{\U -\Uo R} \geq \frac{2}{3} \eta \|\nabla f(\U)\|_F^2 + \frac{27}{200} \sigma_r(\Uo R)^2 \|\U -\Uo R\|_F^2 . \end{equation} for $\delta =\frac{1}{10}$ and some step size $\eta \propto \frac{1}{\|\Xo\|_2}$. Hence $f(\U)$ is locally strong convex with $\alpha =\frac{27}{200} \sigma_r(\Uo R)^2$ in the neighborhood of radius $\theta = \frac{\sigma_r(\Xo)}{200 \sigma_1(\Xo)} \sigma_r (\Uo R)$ around the optimum.

Substituting these parameters in the Theorem 6 of~\citet{ge2015escaping} gives the result.
\end{proof}

\section{Supporting Lemmas}\label{sec:supp}
In this section we present the supporting results used in the proofs above. 

\begin{lem*}[\ref{lem:combine}]
Let $\U$ and $\Uo$ be two $n \times r$ matrices, and $Q$ is an orthonormal matrix that spans the column space of $\U$. Then there exists an $r \times r$ orthonormal matrix $R$ such that for any first order stationary point $\U$ of $f(\U)$, the following holds:
$$\sum_{j=1}^r  \|\U e_j e_j^\top(\U -\Uo R)^\top\|_F^2  \leq  \frac{1}{8}\| \U\U^\top -\Uo{\Uo}^\top\|_F^2 + \frac{34}{8} \| (\U\U^\top -\Uo{\Uo}^\top) QQ^\top\|_F^2.$$
\end{lem*}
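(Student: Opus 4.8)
The plan is to reduce the statement to a purely algebraic inequality between $\U$ and $\Uo$ that no longer mentions the measurement operator, by first exploiting the freedom in choosing $R$. Write $\X = \U\U^\top$, $\Xo = \Uo{\Uo}^\top$, and let $Q$ span the column space of $\U$, so $\U = QQ^\top\U$. The first observation is that $\sum_{j=1}^r \|\U e_je_j^\top(\U-\Uo R)^\top\|_F^2$ can be rewritten: since $\U e_je_j^\top$ has a single nonzero column, $\sum_j \|\U e_je_j^\top \Delta^\top\|_F^2 = \sum_j \|\U e_j\|^2 \|\Delta e_j\|^2$ where $\Delta = \U - \Uo R$; this is at most $\sigma_1(\U)^2\|\Delta\|_F^2$ but more usefully it is controlled by $\|\U\Delta^\top\|_F^2$ up to absorbing cross terms. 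So the real target is to bound $\|\U(\U-\Uo R)^\top\|_F^2$ (or the slightly sharper per-column sum) by $\tfrac18\|\X-\Xo\|_F^2 + \tfrac{34}{8}\|(\X-\Xo)QQ^\top\|_F^2$.

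The key step is the choice of $R$. I would pick $R$ to be the orthonormal factor in the polar decomposition of $\U^\top\Uo$ (equivalently, $R = \argmin_{R'\ \mathrm{orth}} \|\U - \Uo R'\|_F$, which also makes $\U^\top\Uo R = R^\top\Uo^\top\U\succeq 0$ symmetric PSD). With this choice, $(\U-\Uo R)^\top(\U + \Uo R)$ and related quantities become symmetric, which is exactly what is needed to relate $\|\U(\U-\Uo R)^\top\|_F$ to $\|\U\U^\top - \Uo R R^\top\Uo^\top\|_F = \|\X-\Xo\|_F$. Concretely, using the identity $\U\Delta^\top = \U(\U-\Uo R)^\top = (\U\U^\top - \U R^\top\Uo^\top)$ and writing $\U\U^\top - \Uo R(\Uo R)^\top = \U\Delta^\top + \Delta(\Uo R)^\top$, one gets $\|\X-\Xo\|_F^2 = \|\U\Delta^\top\|_F^2 + \|\Delta(\Uo R)^\top\|_F^2 + 2\ip{\U\Delta^\top}{\Delta(\Uo R)^\top}$, and the PSD-ness of $\U^\top\Uo R$ forces the cross term to be nonnegative — this is the standard argument (cf.~\citet{tu2015low, bhojanapalli2015dropping}) giving $\|\X-\Xo\|_F^2 \geq 2(\sqrt2-1)\|\Delta(\Uo R)^\top\|_F^2$ and also lower bounds on $\|\U\Delta^\top\|_F^2$. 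What we actually want is an \emph{upper} bound on $\|\U\Delta^\top\|_F^2$, so I would instead split $\U\Delta^\top$ using $Q$: since $\U = QQ^\top\U$, we have $\U\Delta^\top = QQ^\top\U\Delta^\top$, so $\U\Delta^\top = QQ^\top(\X - \Xo) + QQ^\top(\Xo - \U R^\top\Uo^\top)$; the first term is exactly $(\X-\Xo)QQ^\top$ transposed (up to symmetry bookkeeping), contributing the $\|(\X-\Xo)QQ^\top\|_F^2$ term, and the second term $QQ^\top(\Xo - \U R^\top\Uo^\top) = QQ^\top\Uo R(\Uo R - \U)^\top\cdot(\text{sign})$ is of the form (something)$\cdot\Delta^\top$ with the "something" bounded in terms of $\|\Uo\|$, which then gets reabsorbed via the lower bound $\|\X-\Xo\|_F^2 \gtrsim \sigma_r(\Xo)\|\Delta\|_F^2$ or the $\|\Delta(\Uo R)^\top\|_F^2 \lesssim \|\X-\Xo\|_F^2$ estimate. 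Balancing these two contributions with the explicit constants $\tfrac18$ and $\tfrac{34}{8}$ is then a matter of bookkeeping with Young's inequality ($2ab \le \lambda a^2 + \lambda^{-1}b^2$) and the constant $2(\sqrt2-1)$.

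The main obstacle I expect is the constant-chasing: getting the coefficients to come out as precisely $\tfrac18$ and $\tfrac{34}{8}$ requires choosing the right Young's-inequality weights and using the sharp $2(\sqrt2-1)$ factor from the PSD cross-term argument, and one must be careful that the per-column sum $\sum_j\|\U e_j\|^2\|\Delta e_j\|^2$ (which appears on the left) is genuinely controlled — here one uses that replacing $\sum_j \|\U e_j\|^2\|\Delta e_j\|^2$ by its "diagonal" structure is what lets the $\tfrac18$ coefficient survive rather than a factor involving $\sigma_1(\U)^2$. A secondary subtlety is that the lemma asserts existence of $R$ uniformly — but since the polar-factor choice depends only on $\U$ and $\Uo$ (not on the measurements), and the first-order stationarity is only invoked, if at all, to identify $\Xo$ with the target, this is not an issue; in fact I suspect the first-order hypothesis is not even needed for this particular lemma and the statement is purely linear-algebraic, with stationarity only mentioned for uniformity with the surrounding development.
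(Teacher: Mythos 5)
Your setup matches the paper's in two respects: you pick $R$ as the orthonormal polar factor of $\U^\top\Uo$ (so $\U^\top\Uo R$ is symmetric PSD), and you correctly suspect that first-order stationarity is not actually used — the paper's own proof of this lemma is purely linear-algebraic. But there are two genuine gaps. First, the reduction of the left-hand side $\sum_j\|\U e_je_j^\top\Delta^\top\|_F^2=\sum_j\|\U e_j\|^2\|\Delta e_j\|^2$ to $\|\U\Delta^\top\|_F^2$ is not a matter of ``absorbing cross terms'': in general $\sum_j\|\U e_j\|^2\|\Delta e_j\|^2$ can strictly exceed $\|\U\Delta^\top\|_F^2$ (take two equal columns of $\U$ paired with opposite columns of $\Delta$, so $\U\Delta^\top=0$ while the per-column sum is positive), so an upper bound on $\|\U\Delta^\top\|_F^2$ does not bound the left-hand side. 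The paper handles this by first rotating so that the columns of $\U$ are orthogonal (then the two quantities coincide), together with the remark that $\U R'$ is also a local minimum and $\U\U^\top$ is unchanged; some such step is needed in your argument and is missing.

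Second, and more seriously, your mechanism for the cross term cannot produce the coefficient $\tfrac18$ on $\|\X-\Xo\|_F^2$, and that coefficient is the whole point of the lemma (downstream one needs it below $\tfrac{1-\delta}{2(1+\delta)}\le\tfrac12$). Your split is $\U\Delta^\top=QQ^\top(\X-\Xo)-QQ^\top\Delta(\Uo R)^\top$, and you propose to control the second piece either by $\|\Delta(\Uo R)^\top\|_F^2\le\tfrac1{2(\sqrt2-1)}\|\X-\Xo\|_F^2$ or by $\|\Uo\|^2\|\Delta\|_F^2\lesssim\tfrac{\sigma_1(\Xo)}{\sigma_r(\Xo)}\|\X-\Xo\|_F^2$. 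The first gives a constant $\approx 1.21\ge 1$ multiplying the full $\|\X-\Xo\|_F^2$, the second gives a condition-number--dependent constant; in either case no choice of Young's-inequality weights can push the total coefficient of $\|\X-\Xo\|_F^2$ down to $\tfrac18$, since the deficient term already carries it with constant at least one. So the ``constant-chasing'' is not bookkeeping — the decomposition itself loses the factor. What would be needed is a bound of the form $\|QQ^\top\Delta(\Uo R)^\top\|_F^2\le \alpha\|\X-\Xo\|_F^2+\beta\|(\X-\Xo)QQ^\top\|_F^2$ with $\alpha$ small, which is a statement of the same difficulty as the lemma and is exactly where the paper does its real work: after the column-orthogonalization it works column by column with $Q_j$ spanning $\U e_j$, splits $(\U-\Uo R)e_je_j^\top\U^\top$ orthogonally into its $Q_j$ and $Q_{j\perp}$ components, applies the $2(\sqrt2-1)$ lemma to the in-subspace part, and expands the out-of-subspace part into three terms by a weighted Cauchy--Schwarz so that the small weight $\tfrac18$ lands only on $\sum_j(e_j^\top R^\top{\Uo}^\top Q_{j\perp}Q_{j\perp}^\top\Uo Re_j)^2\le\|Q_\perp Q_\perp^\top\Xo\|_F^2\le\|\X-\Xo\|_F^2$, while the remaining terms are bounded by quantities of the type $\|(\X-\Xo)QQ^\top\|_F^2$. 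Your proposal has no analogue of this step, so as written it does not prove the stated inequality.
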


\newcommand{\Qj}{Q_j}
\begin{proof}[Proof of Lemma~\ref{lem:combine}]

To prove this we will expand terms on the both sides in terms of $\U$ and $\Delta =\U -\Uo R$ and then compare. First notice the following properties of $R$ that minimizes $\|\Uo R -\U\|_F$. Let $LSP^\top$ be the SVD of ${\Uo}^\top \U$. Then, $R = LP^\top$. Hence, $R^\top{\Uo}^\top \U  = PSP^\top =  \U^\top \Uo R$ is a PSD matrix. This implies, $\U^\top \Delta = \U^\top \U -\U^\top \Uo R =\U^\top \U -R^\top{\Uo}^\top \U =\Delta^\top \U$.

Let columns of $U$ be orthogonal, else we can multiply $U$ by an orthonormal matrix and $U R$ will satisfy this. Since $U R$ is also local minimum, and $ \U \U^\top = \U R R^\top \U^\top$, results for $\U R$ will also hold for $\U$. Let $Q$ be the orthonormal matrix that spans the column space of $\U$ and $Q_{\perp}Q_{\perp}^\top =I -QQ^\top$. Similarly let $Q_j$ span $\U e_j e_j^\top$. Note that $Q_j$ are orthonormal since columns of $U$ are orthogonal. Hence,
\begin{align*}
\|(\U -\Uo R)e_j e_j^\top \U^\top\|_F^2 &= \|\U e_j e_j^\top \U^\top- \Qj \Qj^\top\Uo R e_j e_j^\top \U^\top -Q_{j \perp}Q_{j \perp}^\top\Uo R e_j e_j^\top \U^\top\|_F^2\\
&= \|\U e_j e_j^\top \U^\top- \Qj \Qj^\top\Uo R e_j e_j^\top \U^\top\|_F^2 +\|Q_{j \perp}Q_{j \perp}^\top\Uo R e_j e_j^\top \U^\top\|_F^2 \\
&\leq \frac{\|\U e_j e_j^\top \U^\top -\Qj \Qj^\top\Uo R e_j e_j^\top  (\Qj \Qj^\top \Uo R)^\top\|_F^2}{2(\sqrt{2}-1)} + \|Q_{j \perp}Q_{j \perp}^\top\Uo R e_j e_j^\top \U^\top\|_F^2. \numberthis \label{eq:combine1}
\end{align*}
The last inequality follows from Lemma~\ref{lem:supp1} and the fact that $ e_j^\top \U^\top \Uo R e_j  \geq 0, \forall j$ as ${\U}^\top \Uo R$ is PSD. Now we will bound the second term in the above equation. The main idea here is to split this term into error between the subspaces of $\X, \Xo$ and then error between their singular values, since both of them are bounded by distance $\|\X -\Xo QQ^\top\|_F.$ Let $Q^*$ be an orthonormal matrix that spans the column space of $\Xo$. Also let $\X = Q \Sigma_U^2 Q^\top$.
\begin{align*}
&\|Q_{j \perp}Q_{j \perp}^\top\Uo R e_j e_j^\top \U^\top\|_F^2 =\trace(e_j^\top R^\top {\Uo}^\top Q_{j \perp}Q_{j \perp}^\top\Uo R e_j e_j^\top \U^\top \U e_j) \\
&=\trace\left(e_j^\top R^\top {\Uo}^\top Q_{j \perp}Q_{j \perp}^\top\Uo R e_j \left[ e_j^\top \U^\top \U e_j   - e_j^\top R^\top {\Uo}^\top \Qj \Qj^\top \Qj \Qj^\top \Uo R e_j + e_j^\top R^\top {\Uo}^\top \Qj \Qj^\top \Uo R e_j \right]\right) \\
&\stackrel{(i)}{\leq} \frac{1}{8} \underbrace{ (e_j^\top R^\top {\Uo}^\top Q_{j \perp}Q_{j \perp}^\top\Uo R e_j)^2}_{\text{term1}} + 2\underbrace{(e_j^\top \U^\top \U e_j   - e_j^\top R^\top {\Uo}^\top \Qj \Qj^\top \Uo R e_j)^2}_{\text{term2}} \\ &\qquad \qquad \qquad \qquad + \underbrace{( Q_{j \perp}Q_{j \perp}^\top \Uo R e_j e_j^\top (\Qj \Qj^\top \Uo R)^\top )^2}_{\text{term3}}. \numberthis \label{eq:combine2}
\end{align*}
where $(i)$ follows from Cauchy-Schwarz inequality.\\ 

We will use the following inequality through the rest of the proof. So we state it first for any matrix $\mat{T}$.
\begin{align*}\sum_{j=1}^r (e_j^\top \mat{T}^\top \mat{T} e_j)^2 &\leq \sum_{j=1}^r \sum_{k=1}^r (e_j^\top \mat{T}^\top \mat{T} e_k)^2 \\
&=  \sum_{j=1}^r e_j^\top \mat{T}^\top \left[ \sum_{k=1}^r  \mat{T} e_k e_k^\top \mat{T}^\top \right] \mat{T} e_j =  \sum_{j=1}^r e_j^\top \mat{T}^\top  \mat{T} \mat{T}^\top \mat{T} e_j\\ & = \|\mat{T}^\top \mat{T} \|_F^2 = \|\mat{T} \mat{T}^\top \|_F^2.  \numberthis \label{eq:combine3}
\end{align*}

Now we will bound each of the terms in equation  .\\
\noindent \textit{Term 1:}
Let, $\mat{T} = Q_{j \perp}Q_{j \perp}^\top\Uo R$. Then applying inequaltiy from equation~\eqref{eq:combine3} we get, \begin{align*}\sum_{j=1}^r (e_j^\top R^\top {\Uo}^\top Q_{j \perp}Q_{j \perp}^\top\Uo R e_j)^2 &=\sum_{j=1}^r (e_j^\top \mat{T}^\top \mat{T} e_j)^2 \\ &\leq \|\mat{T}^\top \mat{T} \|_F^2 = \|R^\top {\Uo}^\top Q_{\perp}Q_{\perp}^\top\Uo R \|_F^2. \numberthis \label{eq:combine4}
\end{align*}
Further,
\begin{align*}
\|R^\top {\Uo}^\top Q_{\perp}Q_{\perp}^\top\Uo R \|_F^2 &=\trace ( {\Uo}^\top Q_{\perp}Q_{\perp}^\top\Uo {\Uo}^\top Q_{\perp}Q_{\perp}^\top\Uo ) \\
&=\trace (Q_{\perp}Q_{\perp}^\top\Xo Q_{\perp}Q_{\perp}^\top\Xo) \\
&\leq \| Q_{\perp}Q_{\perp}^\top\Xo\|_F^2 \leq \| \X -\Xo\|_F^2. \numberthis \label{eq:combine5}
\end{align*}

\noindent \textit{Term 2:}
\begin{align*}
&(e_j^\top \U^\top \U e_j   - e_j^\top R^\top {\Uo}^\top \Qj \Qj^\top \Uo R e_j)^2 \\ &=  (e_j^\top \U^\top \U e_j )^2 + (e_j^\top R^\top {\Uo}^\top \Qj \Qj^\top \Uo R e_j)^2 - 2 e_j^\top \U^\top \U e_j e_j^\top R^\top {\Uo}^\top \Qj \Qj^\top \Uo R e_j \\
&= \| \U e_j e_j^\top \U^\top\|_F^2 + \| \Qj \Qj^\top \Uo R e_j e_j^\top R^\top {\Uo}^\top \Qj \Qj^\top\|_F^2 -2\trace(e_j^\top \U^\top \U e_j e_j^\top R^\top {\Uo}^\top \Qj \Qj^\top \Uo R e_j)\\
&\stackrel{(i)}{= } \| \U e_j e_j^\top \U^\top\|_F^2 +  \| \Qj \Qj^\top \Uo R e_j e_j^\top R^\top {\Uo}^\top \Qj \Qj^\top\|_F^2 - 2\trace( e_j^\top R^\top {\Uo}^\top \U e_j e_j^\top \U^\top \Uo R e_j) \\
&=   \| \U e_j e_j^\top \U^\top -\Qj \Qj^\top \Uo R e_j e_j^\top R^\top {\Uo}^\top \Qj \Qj^\top\|_F^2. \numberthis \label{eq:combine6}
\end{align*}
$(i)$ follows from $e_j^\top \U^\top \U e_j = \|\U_j\|_F^2$ and $\|\U_j\|_F^2 \Qj \Qj^\top =\U e_j e_j^\top \U^\top$. Now from orthogonality of $\Qj$ we have,
\begin{equation}
\sum_{j=1}^r  \| \U e_j e_j^\top \U^\top -\Qj \Qj^\top \Uo R e_j e_j^\top R^\top {\Uo}^\top \Qj \Qj^\top\|_F^2 \leq \| \U \U^\top - QQ^\top \Uo {\Uo}^\top QQ^\top\|_F^2.\label{eq:combine7}
\end{equation}

\noindent \textit{Term 3:}
Finally we bound the last term in equation~\eqref{eq:combine2} similar to the first term, which gives, $$\sum_{j=1}^r ( Q_{j \perp}Q_{j \perp}^\top \Uo R e_j e_j^\top (\Qj \Qj^\top \Uo R)^\top)^2 \leq \|\U \U^\top -\Uo {\Uo}^\top QQ^\top\|_F^2. $$

Substituting the above equations~\eqref{eq:combine4}, \eqref{eq:combine5}, \eqref{eq:combine6} and \eqref{eq:combine7} in \eqref{eq:combine1} and \eqref{eq:combine2} gives the result.
\end{proof}

The following lemma relates the error $\|(\U-\Y)\U^\top\|_F$ with $\|\U \U^\top -\Y \Y^\top\|_F$ under some conditions on $\U$ and $\Y$. This is a generalization of Lemma 5.4 in~\cite{tu2015low} and the proof follows similarly.

\begin{lem}\label{lem:supp1}
Let $\U$ and $\Y$ be two $n \times r$ matrices. Further let $\U^\top \Y =\Y^\top \U$ be a PSD matrix. Then,\begin{equation*} \|(\U-\Y)\U^\top\|_F^2 \leq \frac{1}{2(\sqrt{2}-1)} \|\U \U^\top -\Y \Y^\top\|_F^2.\end{equation*}
\end{lem}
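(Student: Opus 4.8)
The plan is to follow the proof of Lemma~5.4 in~\cite{tu2015low}. Put $\Delta=\U-\Y$ and $E=\U\U^\top-\Y\Y^\top$. First I would expand $E$ around $\U$: writing $\Y=\U-\Delta$ gives $E=\U\Delta^\top+\Delta\U^\top-\Delta\Delta^\top$. The hypothesis enters through the observation that $\U^\top\Delta=\U^\top\U-\U^\top\Y=\U^\top\U-\Y^\top\U=\Delta^\top\U$ is \emph{symmetric}; consequently $\langle\U\Delta^\top,\Delta\U^\top\rangle=\trace((\U^\top\Delta)^2)=\|\U^\top\Delta\|_F^2$ and $\langle\U\Delta^\top+\Delta\U^\top,\Delta\Delta^\top\rangle=2\langle\U^\top\Delta,\Delta^\top\Delta\rangle$. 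Expanding $\|E\|_F^2$, using these identities and the cyclic property of the trace, and noting that $\Delta^\top\Delta-2\U^\top\Delta=\Y^\top\Y-\U^\top\U$, I expect to arrive at the clean identity
\[
\|E\|_F^2=2\|\Delta\U^\top\|_F^2+\|\U^\top\U-\Y^\top\Y\|_F^2-2\|\U^\top\Delta\|_F^2 .
\]

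Given this identity it remains to show $\|\U^\top\U-\Y^\top\Y\|_F^2-2\|\U^\top\Delta\|_F^2\ge-(4-2\sqrt{2})\|\Delta\U^\top\|_F^2$. I would pass to the $r\times r$ Gram matrices $A=\U^\top\U$, $B=\Y^\top\Y$, $P=\U^\top\Y\succeq0$, and $H=\Delta^\top\Delta=A-2P+B\succeq0$; then $\U^\top\Delta=A-P$, $\|\Delta\U^\top\|_F^2=\langle H,A\rangle$, and $A-B=2(A-P)-H$, so after expanding squares the desired inequality becomes the purely algebraic statement
\[
\|H\|_F^2+4\langle H,P\rangle+2\|A-P\|_F^2\ \ge\ 2\sqrt{2}\,\langle H,A\rangle .
\]
To prove this I would split $\langle H,A\rangle=\langle H,P\rangle+\langle H,A-P\rangle$, bound the first summand using $\langle H,P\rangle\ge0$ (trace of a product of PSD matrices --- the one place where positive semidefiniteness, and not merely symmetry, of $\U^\top\Y$ is used) together with $2\sqrt{2}\le4$, and bound the second via Cauchy--Schwarz $\langle H,A-P\rangle\le\|H\|_F\|A-P\|_F$ followed by the weighted AM--GM inequality $2\sqrt{2}\,xy\le x^2+2y^2$ with $x=\|H\|_F$, $y=\|A-P\|_F$. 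Substituting back into the identity yields $\|E\|_F^2\ge 2(\sqrt{2}-1)\|\Delta\U^\top\|_F^2$, which is the claim.

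The trace bookkeeping that produces the identity is routine; the only delicate point is that obtaining the \emph{sharp} constant $\frac{1}{2(\sqrt{2}-1)}$ forbids simply discarding the nonnegative cross terms. The bound is tight --- for orthogonal rank-one $\U,\Y$ with $\|\Y\|^2=(\sqrt{2}-1)\|\U\|^2$ one checks equality --- so the particular split $\langle H,A\rangle=\langle H,P\rangle+\langle H,A-P\rangle$ and the $1{:}2$ weighting in the AM--GM step are exactly what make the constant come out to $\sqrt{2}$. Finally, in the application inside the proof of Lemma~\ref{lem:combine} the lemma is invoked with $\U\leftarrow\U e_je_j^\top$ and $\Y\leftarrow Q_jQ_j^\top\Uo Re_je_j^\top$, and one just notes that $(\U e_je_j^\top)^\top(Q_jQ_j^\top\Uo Re_je_j^\top)=(\U^\top\Uo R)_{jj}\,e_je_j^\top$ is a nonnegative multiple of $e_je_j^\top$ (since $\U^\top\Uo R$ is PSD there), hence symmetric PSD, so the hypothesis is met.
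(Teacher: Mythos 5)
Your proposal is correct and takes essentially the same route as the paper: your identity for $\|E\|_F^2$ is exactly the paper's expansion of $\|\U\Delta^\top+\Delta\U^\top-\Delta\Delta^\top\|_F^2$ rewritten in Gram-matrix form, and the quantity you discard via Cauchy--Schwarz plus the weighted AM--GM $2\sqrt{2}\,xy\le x^2+2y^2$ is precisely $\trace\bigl((\Delta^\top\Delta-\sqrt{2}\,\U^\top\Delta)^2\bigr)$, which the paper drops directly as the trace of the square of a symmetric matrix. Likewise, your bound $2\sqrt{2}\,\langle H,P\rangle\le 4\langle H,P\rangle$ using $\U^\top\Y\succeq 0$ corresponds to the paper's final step of discarding $\trace\bigl((2-\sqrt{2})\,\U^\top\Y\,\Delta^\top\Delta\bigr)\ge 0$, so the two arguments coincide up to presentation.
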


\begin{proof}
To prove this we will expand terms on the both sides in terms of $\U$ and $\Delta =\U-\Y$ and then compare.

\begin{align*}
&\|(\U \U^\top - \Y \Y^\top\|_F^2 =\|(\U \Delta^\top +\Delta \U^\top - \Delta \Delta^\top)\|_F^2 \\
&=\trace\left( \Delta \U^\top \U \Delta^\top  + \U \Delta^\top \Delta \U^\top + \Delta \Delta^\top \Delta \Delta^\top +2 \Delta \U^\top  \Delta \U^\top -2\Delta \Delta^\top \Delta \U^\top -2 \Delta \Delta^\top \U \Delta^\top \right)\\
&\stackrel{(i)}{=}\trace\left( 2\U^\top \U \Delta^\top \Delta  +  (\Delta^\top \Delta)^2 + 2  (\U^\top  \Delta)^2 -4\Delta^\top \Delta \U^\top\Delta \right)\\
&\stackrel{(ii)}{=}\trace\left( 2\U^\top \U \Delta^\top \Delta  + (\Delta^\top \Delta -\sqrt{2} \U^\top \Delta)^2 -2(2-\sqrt{2})\Delta^\top \Delta \U^\top\Delta\right)\\
&\stackrel{(iii)}{\geq} 2\trace\left( \left[\U^\top \U-(2-\sqrt{2}) \U^\top\Delta\right]  \Delta^\top \Delta \right)\\
&= 2\trace\left( \left[(\sqrt{2}-1)\U^\top \U + (2-\sqrt{2}) \U^\top \Y \right]  \Delta^\top \Delta \right)\\
&\stackrel{(iv)}{\geq}  2\trace\left((\sqrt{2}-1)\U^\top \U \Delta^\top \Delta \right).\end{align*}

$(i)$ follows from the following properties of trace: $\trace(\A\B) =\trace(\B\A)$ and $\trace(\A) =\trace(\A^\top)$. $(ii)$ follows from completing the squares. $(iii)$ follows from $\trace(\A^2) \geq 0$. $(iv)$ follows from the hypothesis of the lemma ($\U^\top \Y$ is PSD) and $\trace(\A\B) \geq 0$ for PSD matrices $\A$ and $\B$.

Finally notice that $ \|(\U-\Y)\U^\top\|_F^2 =  \trace(\U^\top \U \Delta^\top \Delta )$. This completes the proof.
\end{proof}

We recall the standard Gaussian random variable concentration here.
\begin{lem}\label{lem:gauss}
Let $w_i \approx \mathcal{N}(0, \sigma_w)$, then $$\sum_{i=1}^m w_i x_i \leq 2\sqrt{\log(n)} \sigma_w \|\vec{x}\| ,$$ with probability $\geq 1- \frac{1}{n^2}$.
\end{lem}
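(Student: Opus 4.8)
\textbf{Proof proposal for Lemma~\ref{lem:gauss}.}
The plan is to reduce the claim to a one-dimensional Gaussian tail bound. First I would observe that for a \emph{fixed} vector $\vec{x}=(x_1,\dots,x_m)$, the quantity $S \defeq \sum_{i=1}^m w_i x_i$ is a linear combination of independent centered Gaussians, hence itself Gaussian, with mean $0$ and variance $\var(S)=\sum_{i=1}^m x_i^2\,\var(w_i)=\sigma_w^2\norm{\vec{x}}^2$. Thus $S$ is distributed as $\sigma_w\norm{\vec{x}}\,Z$ with $Z\sim\mathcal{N}(0,1)$.

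Next I would invoke the standard sub-Gaussian tail estimate $\P(Z>t)\le e^{-t^2/2}$ for $t>0$, which itself follows from the Chernoff bound $\P(Z>t)\le \E[e^{\lambda Z}]e^{-\lambda t}=e^{\lambda^2/2-\lambda t}$ optimized at $\lambda=t$. Choosing $t=2\sqrt{\log n}$ makes the right-hand side exactly $e^{-2\log n}=n^{-2}$, so that $\P\!\left(S>2\sqrt{\log n}\,\sigma_w\norm{\vec{x}}\right)\le 1/n^2$, which is precisely the complement of the event in the statement. This is the entire argument for the stand-alone lemma.

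There is essentially no real obstacle here; the only two points worth flagging are (i) that only the one-sided bound is needed, so the constant $2$ already yields failure probability exactly $n^{-2}$ with no extra union-bound factor, and (ii) that in the places where the lemma is actually applied (Lemma~\ref{lem:first_noise} and Corollary~\ref{cor:second_noise}) the vector $\vec{x}$ is of the form $\calA(\cdot)$ evaluated at a matrix built from the critical point $\U$, and therefore depends on the noise $\vec{w}$. To make the bound hold for such a data-dependent $\vec{x}$ one would cover the relevant $O(nr)$-dimensional subspace of rank-$\le 2r$ matrices by a fine $\epsilon$-net and take a union bound, absorbing the resulting $\exp(O(nr))$ net size into the $\log$ term; this only affects the absolute constants and does not change the structure of the estimate above.
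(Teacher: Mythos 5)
Your proof is correct and is essentially the paper's own argument: the paper likewise applies the Chernoff/Markov bound to the moment generating function $\E[e^{t\sum_i w_i x_i}]=e^{\sigma_w^2\norm{\vec{x}}^2 t^2/2}$, optimizes $t$, and sets $c=2\sqrt{\log n}\,\sigma_w$ to get failure probability $n^{-2}$; your reduction to a standard normal $Z$ first is only a cosmetic repackaging of the same computation. Your closing remark about the data-dependence of $\vec{x}$ in the applications is a separate issue outside the lemma itself and does not change the assessment.
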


\begin{proof}
Recall $\mathbb{E}\left[ e^{t w_i} \right] = e^{\sigma_w^2 t^2/2}$. Then by Markov's inequality,  $P(\sum_{i=1}^m w_i x_i \geq c\|\vec{x}\|) \leq  \frac{e^{\sigma_w^2 \|\vec{x} \|^2 t^2/2}}{e^{t c \|\vec{x}\|}} \leq e^{ -\nicefrac{c^2}{2\sigma_w^2}}$, by setting $t =\frac{c}{\sigma_w^2 \|\vec{x}\|}$. Choosing $c = 2\sqrt{\log(n)} \sigma_w$ completes the proof. 

\end{proof}

\end{document}